%%%%%%%% ICML 2020 EXAMPLE LATEX SUBMISSION FILE %%%%%%%%%%%%%%%%%

\documentclass{article}

% Recommended, but optional, packages for figures and better typesetting:
\usepackage{microtype}
\usepackage{graphicx}
\usepackage{booktabs} % for professional tables

% hyperref makes hyperlinks in the resulting PDF.
% If your build breaks (sometimes temporarily if a hyperlink spans a page)
% please comment out the following usepackage line and replace
% \usepackage{icml2020} with \usepackage[nohyperref]{icml2020} above.
\usepackage{hyperref}

% Attempt to make hyperref and algorithmic work together better:

% Use the following line for the initial blind version submitted for review:
\usepackage[accepted]{icml2020}

% If accepted, instead use the following line for the camera-ready submission:
%\usepackage[accepted]{icml2020}
%%%%%%%%%%%%%%%%%%%%%%%%%%%%%%%%%%%%%%%%%%%%%%%%%%%%%%%%%%%%
\usepackage{caption}
\usepackage{subcaption}
\usepackage{amssymb,amsthm,bbm,graphicx,placeins,enumitem}
\usepackage{amstext}
\usepackage{setspace,mathtools,thmtools,thm-restate}
\usepackage{float}
\usepackage{color,colortbl}
\usepackage{changepage}
\usepackage{booktabs}
\usepackage{array}
\usepackage{multirow}
\usepackage[font=small]{caption}
\usepackage{mathrsfs}  
\usepackage{thm-restate}
\usepackage{csquotes}
\usepackage[noblocks]{authblk}
\usepackage{titlesec}
\usepackage{relsize}
\usepackage{lipsum}  
\usepackage{chngcntr} %used to change Lemma 3 to Lemma A.1 in the appendex
\usepackage{longtable}
\usepackage{tikz}
\usetikzlibrary{automata,positioning,calc}
\usepackage{hyperref}
\allowdisplaybreaks

\usepackage{bbm}
\newcommand\myeq[1]{\stackrel{\textnormal{#1}}{=}}

\DeclareMathOperator*{\argmax}{arg\,max}

\newtheorem{thm}{Theorem}

\newtheorem{prop}{Proposition}
\newtheorem{lem}{Lemma}

\theoremstyle{definition}
\newtheorem{assumption}{Assumption}

\DeclareCaptionSubType[Alph]{figure}

\def\E{\mathbf{E}}
\def\P{\mathbf{P}}

% \newcommand{\xxcomment}[4]{\textcolor{#1}{[$^{\textsc{#2}}_{\textsc{#3}}$ #4]}}
% \newcommand{\drj}[1]{\xxcomment{magenta}{D}{J}{#1}}

%\titleformat*{\section}{\sf\Large\bfseries}
%\titleformat*{\subsection}{\sf\large      \bfseries}
%\titleformat*{\subsubsection}{\sf\normalsize\bfseries}
%\titleformat*{\paragraph}{\sf\normalsize\bfseries}
%\titleformat*{\subparagraph}{\sf\normalsize\bfseries}
%%%%%%%%%%%%%%%%%%%%%%%%%%%%%%%%%%%%%%%%%%%%%%%%%%%%%%%%%%%%
% The \icmltitle you define below is probably too long as a header.
% Therefore, a short form for the running title is supplied here:
\icmltitlerunning{Lookahead-Bounded Q-Learning}

\begin{document}
\allowdisplaybreaks
\twocolumn[
\icmltitle{Lookahead-Bounded Q-Learning}

% It is OKAY to include author information, even for blind
% submissions: the style file will automatically remove it for you
% unless you've provided the [accepted] option to the icml2020
% package.

% List of affiliations: The first argument should be a (short)
% identifier you will use later to specify author affiliations
% Academic affiliations should list Department, University, City, Region, Country
% Industry affiliations should list Company, City, Region, Country

% You can specify symbols, otherwise they are numbered in order.
% Ideally, you should not use this facility. Affiliations will be numbered
% in order of appearance and this is the preferred way.
\icmlsetsymbol{equal}{*}

\begin{icmlauthorlist}
\icmlauthor{Ibrahim El Shar}{pitt}
\icmlauthor{Daniel R.~Jiang}{pitt}
\end{icmlauthorlist}

\icmlaffiliation{pitt}{Department of Industrial Engineering, University of Pittsburgh, PA, USA}
\icmlcorrespondingauthor{Ibrahim El Shar}{ije8@pitt.edu}

% You may provide any keywords that you
% find helpful for describing your paper; these are used to populate
% the "keywords" metadata in the PDF but will not be shown in the document
\icmlkeywords{Machine Learning, ICML}

\vskip 0.3in
]

% this must go after the closing bracket ] following \twocolumn[ ...

% This command actually creates the footnote in the first column
% listing the affiliations and the copyright notice.
% The command takes one argument, which is text to display at the start of the footnote.
% The \icmlEqualContribution command is standard text for equal contribution.
% Remove it (just {}) if you do not need this facility.

%\printAffiliationsAndNotice{}  % leave blank if no need to mention equal contribution
\printAffiliationsAndNotice{} % otherwise use the standard text.

\begin{abstract}
We introduce the lookahead-bounded Q-learning (LBQL) algorithm, a new, provably convergent variant of Q-learning that seeks to improve the performance of standard Q-learning in stochastic environments through the use of ``lookahead'' upper and lower bounds. To do this, LBQL employs previously collected experience and each iteration’s state-action values as dual feasible penalties to construct a sequence of sampled information relaxation problems. The solutions to these problems provide estimated upper and lower bounds on the optimal value, which we track via stochastic approximation. These quantities are then used to constrain the iterates to stay within the bounds at every iteration. 
Numerical experiments on benchmark problems show that LBQL exhibits faster convergence and more robustness to hyperparameters when compared to standard Q-learning and several related techniques.
Our approach is particularly appealing in problems that require expensive simulations or real-world interactions.
\end{abstract}
\vspace{-2em}
\section{Introduction}
\label{S:Intro}

Since its introduction by \citet{watkins1989learning}, Q-learning has become one of the most widely-used reinforcement learning (RL) algorithms \citep{sutton2018reinforcement}, due to its conceptual simplicity, ease of implementation, and convergence guarantees \citep{jaakkola1994convergence,tsitsiklis1994asynchronous,bertsekas1996neuro}. 
However, practical implementation of Q-learning on real-world problems is difficult due to the often high cost of obtaining data and experience from real environments, along with other issues such as overestimation bias \citep{szepesvari1998asymptotic,even2003learning,lee2019bias}.

In this paper, we address these challenges by focusing on a specific class of problems with \emph{partially known} models in the following sense: writing the transition dynamics as $s_{t+1} = f(s_t, a_t, w_{t+1})$, where $s_t$ and $a_t$ are the current state and action, $s_{t+1}$ is the next state, and $w_{t+1}$ is random noise, we consider problems where $f$ is known but $w_{t+1}$ can only be observed through interactions with the environment. This type of model is the norm in the control \citep{bertsekas1996neuro} and operations research \citep{powell2007approximate} communities. We propose and analyze a new RL algorithm called \emph{lookahead-bounded Q-learning} (LBQL), which exploits knowledge of $f$ to improve the efficiency of Q-learning and address overestimation bias. It does so by making better use of the observed data through estimating upper and lower bounds using a technique called \emph{information relaxation} (IR) \citep{brown2010information}.

Indeed, there are abundant real-world examples that fall into this subclass of problems, as we now illustrate with a few examples. In \textit{inventory control}, the transition from one inventory state to the next is a well-specified function $f$ given knowledge of a stochastic demand $w_{t+1}$ \citep{kunnumkal2008using}. For \textit{vehicle routing}, $f$ is often simply the routing decision itself, while $w_{t+1}$ are exogenous demands that require observation \citep{secomandi2001rollout}. In \textit{energy operations}, a typical setting is to optimize storage that behaves through linear transitions $f$ together with unpredictable renewable supply, $w_{t+1}$ \citep{kim2011optimal}.
In \textit{portfolio optimization}, $f$ is the next portfolio, and $w_{t+1}$ represents random prices \cite{rogers2002monte}.
In Section \ref{S:NE} of this paper, we discuss in detail another application domain that follows this paradigm: \emph{repositioning and spatial dynamic pricing for car sharing} \citep{he2019robust, bimpikis2019spatial}.

Although we specialize to problems with partially known transition dynamics, this should not be considered restrictive: in fact, our proposed algorithm can be integrated with the framework of model-based RL to handle the standard model-free RL setting, where $f$ is constantly being learned. We leave this extension to future work.

\textbf{Main Contributions.} We make the following methodological and empirical contributions in this paper.
\vspace{-1em}
\begin{enumerate}[itemsep=1pt,parsep=2pt]
    \item We propose a novel algorithm that takes advantage of IR theory and Q-learning to generate upper and lower bounds on the optimal value. This allows our algorithm to mitigate the effects of maximization bias, while making better use of the collected experience and the transition function $f$. A variant of the algorithm based on experience replay is also given.
    \item We prove that our method converges almost surely to the optimal action-value function. The proof requires a careful analysis of several interrelated stochastic processes (upper bounds, lower bounds, and the Q-factors themselves).
    \item Numerical experiments on five test problems show superior empirical performance of LBQL compared to Q-learning and other widely-used variants. Interestingly, sensitivity analysis shows that LBQL is more robust to learning rate and exploration parameters.
\end{enumerate}

\vspace{-1em}
The rest of the paper is organized as follows. In the next section, we review the related literature. In Section \ref{S:Back}, we introduce the notation and review the basic theory of IR. In Section \ref{S:Algorithm}, we present our algorithm along with its theoretical results. In Section \ref{S:NE}, we show the numerical results where LBQL is compared to other Q-learning variants. Finally, we state our conclusion and future work in Section \ref{S:Conclusion}.
\section{Related Literature}
\label{S:RL}
Upper and lower bounds on the optimal value have recently been used by optimism-based algorithms, e.g., \citet{dann2018policy} and \citet{zanette2019tighter}. 
These papers focus on finite horizon problems, while we consider the infinite horizon case. Their primary use of the lower and upper bounds is to achieve better exploration, while our work is focused on improving the action-value estimates by mitigating overestimation and enabling data re-use.

In the context of real-time dynamic programming (RTDP) \citep{barto1995learning}, Bounded RTDP \citep{mcmahan2005bounded}, Bayesian RTDP \citep{sanner2009bayesian} and Focused RTDP \citep{smith2006focused} propose extensions of RTDP where a lower bound heuristic and an upper bound are maintained on the value function. These papers largely use heuristic approaches to obtain bounds, while we use the principled idea of IR duality.

More closely related to our paper is the work of \citet{he2016learning}, which exploits multistep returns to construct bounds on the optimal action-value function, before utilizing constrained optimization to enforce those bounds. However, unlike our work, no theoretical guarantees are provided.
To the best of our knowledge, we provide the first asymptotic proof of convergence to the general approach of enforcing dynamically computed (noisy) bounds. 
%Thus, our analysis can serve as a base for deriving theoretical guarantees of similar algorithms. 

There are also two papers that utilize IR bounds in the related setting of \emph{finite horizon} dynamic programming. \citet{jiang2017monte} use IR dual bounds in a tree search algorithm in order to ignore parts of the tree. 
Recent work by \citet{chen2019information} uses IR duality in a duality-based dynamic programming algorithm that converges monotonically to the optimal value function through a series of ``subsolutions'' under more restrictive assumptions (e.g., knowledge of probability distributions).
\section{Background}
\label{S:Back}
In this section, we first introduce some definitions and concepts from Markov decision process (MDP) theory. Then, we describe the basic theory of information relaxations and duality, which is the main tool used in our LBQL approach.
\vspace{-1em}
\subsection{MDP Model}
\label{S:MDP}
Consider a discounted, infinite horizon MDP with a finite state space $\mathcal S$, and a finite action space $ \mathcal A$, and a disturbance space $\mathcal W$. Let $\{w_t\}$ be a sequence of independent and identically distributed (i.i.d.) random variables defined on a probability space $(\Omega, \mathcal F, \mathbf{P})$, where each $w_t$ is supported on the set $\mathcal W$. Let $s_t \in \mathcal S$ be the state of the system at time $t$. We also define a state transition function $f: \mathcal S \times  \mathcal A \times W \rightarrow \mathcal S$, such that if action $a_t$ is taken at time $t$, then the next state is governed by $s_{t+1}= f(s_t, a_t, w_{t+1})$. This ``transition function'' model of the MDP is more convenient for our purposes, but we note that it can easily be converted to the standard model used in RL, where the transition probabilities, $p(s_{t+1} \,|\, s_t, a_t)$, are modeled directly. 
For simplicity and ease of notation, we assume that $w$ is independent\footnote{However, we can also allow for $(s,a)$-dependent $w$ with essentially no fundamental changes to our approach.} from $(s,a)$.  
Let $r(s_t, a_t)$ be the expected
reward when taking action $a_t \in \mathcal A$ in state $s_t \in \mathcal S$. We assume that the rewards $r(s_t,a_t)$ are uniformly bounded by $R_{\max}$ and for simplicity in notation, that the feasible action set $\mathcal A$ does not depend on the current state. As usual, a deterministic Markov policy $\pi \in \Pi$ is a mapping from states to actions, such that $a_t = \pi(s_t)$ whenever we are following policy $\pi$. We let $\Pi$ be the set of all possible policies (or the set of all ``admissible'' policies).

Given a discount factor $\gamma \in (0,1)$ and a policy $\pi \in \Pi$, the \emph{value} and the \emph{action-value} functions are denoted respectively by 
\vspace{-0.9em}
\begin{equation*}
    \begin{split}
        V^{\pi}(s)=\E\left[\sum_{t=0}^{\infty} \gamma^t r(s_t, a_t) \, \Bigl| \, \pi,\, s_0=s \right] \quad \text{and} \\ 
Q^{\pi}(s, a)=\E \left[\sum_{t=0}^{\infty} \gamma^t r(s_t, a_t) \, \Bigr| \, s_0=s, a_0=a, \pi\right],
    \end{split}
\end{equation*}
% \vspace{-0.2em}
where the notation of ``conditioning on $\pi$'' refers to actions $a_t$ selected by $\pi(s_t)$.
The expectation $\E$, here and throughout the paper, is taken with respect to $\P$. Our objective is to find a policy $\pi \in \Pi$ such that starting from any state $s$, it achieves the optimal expected discounted cumulative reward. The value of this optimal policy $\pi^*$ for a state $s$ is called the optimal value function and is denoted by $V^*(s)= \max_{\pi} V^\pi(s)$. Specifically, it is well-known that an optimal policy selects actions according to $\pi^*(s) = \argmax_{a \in \mathcal A} Q^*(s,a)$,  where $Q^*(s,a)= \max_{\pi} Q^{\pi}(s,a)$ is the optimal action-value function  \citep{puterman2014markov}. The Bellman optimality equation gives the following recursion:
\vspace{-0.2em}
\[
    Q^*(s_t,a_t) =  r(s_t,a_t) + \gamma \, \E\Bigl[\max_{a_{t+1}} Q^*(s_{t+1},a_{t+1})\Bigr].
\]
The goal in many RL algorithms, including $Q$-learning \cite{watkins1989learning}, is to approximate $Q^*$.
\vspace{-0.5em}
\subsection{Information Relaxation Duality}
Now let us give a brief review of the theory behind information relaxation duality from \citet{brown2010information}, which is a way of computing an \emph{upper bound} on the value and action-value functions. This generalizes work by \citet{rogers2002monte}, \citet{haugh2004pricing}, and \citet{andersen2004primal} on pricing American options. Note that any feasible policy provides a lower bound on the optimal value, but computing an upper bound is less straightforward. The \emph{information relaxation} approach proposes to relax the ``non-anticipativity'' constraints on policies, i.e., it allows them to depend on realizations of future uncertainties when making decisions. Naturally, optimizing in the class of policies that can ``see the future'' provides an upper bound on the best admissible policy.
We focus on the special case of \emph{perfect information relaxation}, where full knowledge of the future uncertainties, i.e., the sample path $(w_1, w_2, \ldots)$, is used to create upper bounds. The naive version of the perfect information bound is simply given by
\vspace{-0.5em}
\[
V^*(s_0)  \le \E \left[ \max_{\mathbf a} \left\{\sum_{t=0}^{\infty} \gamma^t  r(s_t, a_t) \right\} \right],
\]
which, in essence, is an interchange of the expectation and max operators; the interpretation here is that an agent who is allowed to adjust her actions \emph{after} the uncertainties are realized achieves higher reward than an agent who acts sequentially. As one might expect, perfect information can provide upper bounds that are quite loose. 

The central idea of the information relaxation approach to strengthen these upper bounds is to simultaneously (1) allow the use of future information but (2) also penalize the agent for doing so by assessing a penalty on the reward function in each period. A penalty function is said to be \emph{dual feasible} if it does not penalize any admissible policy $\pi \in \Pi$ in expectation. Let $s_{t+1} = f(s_t, a_t, w_{t+1})$ be the next state, $\varphi : \mathcal S \times \mathcal A \rightarrow \mathbb R$ be a bounded function, and $w$ have the same distribution as $w_{t+1}$. Then, penalties involving terms of the form
\vspace{-0.1em}
\begin{equation}
\begin{split}
\textstyle z_t^\pi (s_t, a_t, w_{t+1} \, | \, \varphi) := \gamma^{t+1} \Bigl(\varphi(s_{t+1},\pi(s_{t+1}))\\ - \E \Bigl[ \varphi\bigl(f(s_t,a_t, w ),\pi(f(s_t,a_t, w ))\bigr) \Bigr]\Bigr)
\end{split}
\label{eq:normalpenalty}
\end{equation}
\vspace{-0.1em}
are dual feasible because
\vspace{-0.1em}
\[
\E \left[ \sum_{t=0}^{\infty} z_t^\pi(s_t,a_t, w_{t+1} \, | \, \varphi)\right]=0.\]
This is a variant of the types of penalties introduced in \citet{brown2017information}, extended to the case of action-value functions. Intuitively, if $\varphi$ is understood to be an estimate of the optimal action-value function $Q^*$ and $\pi$ an estimate of the optimal policy $\pi^*$, then $z_t^\pi$ can be thought of as the one-step value of future information (i.e., knowing $w_{t+1}$ versus taking an expectation over its distribution).

These terms, however, may have negative expected value for policies that violate non-anticipativity constraints. Let $\pi_{\varphi}$ be the policy that is greedy with respect to the bounded function $\varphi$ (considered to be an approximate action-value function).
Consider the problem constructed by subtracting the penalty term from the reward in each period and relaxing non-anticipativity constraints by interchanging maximization and expectation:
\vspace{-1.2em}
\begin{equation}
\begin{split}
    Q^{U}(s_0, a_0)&=\E \Bigl[ \max_{\mathbf a} \Bigl\{\sum_{t=0}^{\infty} \Bigl( \gamma^t   r(s_t, a_t) 
    \\  & - z_t^{\pi_\varphi}(s_t,a_t,w_{t+1} \, | \, \varphi) \Bigr)  \Bigr \} \Bigr], 
    \end{split}
    \label{E:Inf}
\end{equation}
% \vspace{-0.1em}
where $\mathbf{a}  := (a_0, a_1, \ldots)$ is an infinite sequence of actions. \citet{brown2017information} shows that the objective value of this problem, $Q^{U}(s_0, a_0)$, is an upper bound on $Q^*(s_0,a_0)$. Our new approach to $Q$-learning will take advantage of this idea, with $\varphi$ and $\pi$ being continuously updated. Notice that in principle, it is possible to estimate the problem in \eqref{E:Inf} using Monte Carlo simulation. To do this, we generate infinitely long sample paths of the form $\mathbf{w}=(w_1, w_2, \ldots)$, and for each fixed $\mathbf{w}$, we solve the inner deterministic dynamic programming (DP) problem. Averaging over the results produces an estimate of the upper bound of $Q^{U}(s_0, a_0)$. 

\subsection{Absorption Time Formulation}
In practice, however, we cannot simulate infinitely long sample paths $\mathbf{w}$.
One solution is to use an \emph{equivalent} formulation with a finite, but random, horizon (see for e.g. Proposition 5.3.1 of \citealt{puterman2014markov}), where instead of discounting,
a new absorbing state $\tilde{s}$ with zero reward is added to the state space $\mathcal S$. This new state $\tilde{s}$ can be reached from every state and for any feasible action with probability $1-\gamma$.
We define a new state transition function $h$, which transitions to $\tilde{s}$ with probability $1-\gamma$ from every $(s,a)$, but conditioned on not absorbing (i.e., with probability $\gamma$), $h$ is identical to $f$. We refer to this as the \emph{absorption time formulation}, where
the horizon length $\tau := \min\{t: s_t=\tilde s \}$ has a geometric distribution with parameter $1-\gamma$ and the state transitions are governed by the state transition function $h$ instead of $f$.
Let $\mathcal Q$ be the set of bounded functions $\varphi$ such that $\varphi(\tilde{s}, a)=0$ for all $a \in \mathcal A$. The penalty terms for the absorption time formulation are defined in a similar way as (\ref{eq:normalpenalty}), except we now consider $\varphi \in \mathcal Q$:
\vspace{-0.5em}
\begin{equation}
\begin{split}
\textstyle \zeta_t^\pi &(s_t, a_t, w_{t+1} \, | \, \varphi) := \varphi(s_{t+1},\pi(s_{t+1})) \\ &- \E \Bigl[ \varphi \bigl(h(s_t,a_t, w ),\pi(h(s_t,a_t, w ))\bigr) \Bigr],
\end{split}
\label{E:za}
\end{equation}
where $s_{t+1}=h(s_t,a_t,w_{t+1})$. We now state a proposition that summarizes the information relaxation duality results, which is a slight variant of results in Proposition 2.2 of \mbox{\citet{brown2017information}}.
\begin{prop}[Duality Results, Proposition 2.2 in \citet{brown2017information}] The following duality results are stated for the absorption time formulation of the problem.
\vspace{-1em}
\begin{enumerate}[label={(\roman*)},itemindent=0.em]
    \item
    Weak Duality: For any $\pi \in \Pi$ and $\varphi \in \mathcal Q$,
    \vspace{-0.5em}
    \begin{equation}
    \begin{split}
    Q^{\pi}(s_0,a_0) &\le \E \Bigl[ \max_ {\mathbf{a}} 
    \sum_{t=0}^{\tau-1} \Big (r(s_t, a_t) \\ & \ \ - \zeta^{\pi_{\varphi}}_t(s_t,a_t, w_{t+1} \, | \, \varphi) \Big ) \Bigr]
    \end{split}
    \label{E:WD}
    \end{equation}
    \label{P:WD}
    \vspace{-2em}
    \item 
    Strong Duality: It holds that
    \vspace{-1em}
    \begin{equation} 
    \begin{split}
    Q^{*}(s_0,a_0) & = \inf_{\varphi \in \mathcal Q} \; \E \Bigl[ \max_{\mathbf a} \sum_{t=0}^{\tau-1}\Bigl( r(s_t,a_t) \\ & \ \ - \zeta^{\pi_\varphi}_t(s_t,a_t, w_{t+1}  \, | \, \varphi) \Bigr)
    \Bigr],
    \end{split}
    \label{E:SD}
    \end{equation} 
     with the infimum attained at $\varphi = Q^*$.
    \label{P:SD}
\end{enumerate}
\label{P:ATDR}
\end{prop}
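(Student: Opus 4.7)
I would prove both parts using the dual feasibility of the penalty $\zeta_t^{\pi_\varphi}$ under admissible policies, combined with a pathwise telescoping identity at $\varphi = Q^*$, following the template of \citet{brown2010information,brown2017information} adapted to action-value functions and the absorption-time representation.

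For part (i), the first step is to verify that $\E[\zeta_t^{\pi_\varphi}(s_t, a_t, w_{t+1} \mid \varphi) \mid s_t, a_t] = 0$ for every $\varphi \in \mathcal Q$. This is immediate from \eqref{E:za} since $w_{t+1}$ is independent of $(s_t, a_t)$ and identically distributed to the placeholder $w$, so the conditional expectation of the first term in \eqref{E:za} cancels the second exactly. Iterating, $\E[\sum_{t=0}^{\tau-1} \zeta_t^{\pi_\varphi} \mid \pi] = 0$ for any admissible $\pi \in \Pi$. Because the absorption-time formulation satisfies $Q^\pi(s_0, a_0) = \E[\sum_{t=0}^{\tau-1} r(s_t, a_t) \mid \pi, s_0, a_0]$, subtracting the zero-mean penalty leaves the value unchanged. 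Finally, bounding the integrand pathwise by its maximum over action sequences $\mathbf a$ and taking expectations yields \eqref{E:WD}.

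For part (ii), weak duality applied at $\pi = \pi^*$ already delivers $Q^*(s_0, a_0) \le \inf_{\varphi \in \mathcal Q} \E[\cdot]$; the task is to exhibit equality at $\varphi = Q^*$. Substituting $\varphi = Q^*$ gives $\pi_\varphi = \pi^*$ and $Q^*(s, \pi^*(s)) = V^*(s) := \max_a Q^*(s,a)$, and the Bellman equation $Q^*(s, a) = r(s, a) + \E[V^*(h(s, a, w))]$ collapses the integrand to
\[
r(s_t, a_t) - \zeta_t^{\pi^*}(s_t, a_t, w_{t+1} \mid Q^*) = Q^*(s_t, a_t) - V^*(s_{t+1}).
\]
Summing from $t=0$ to $\tau-1$ telescopes to $Q^*(s_0, a_0) + \sum_{t=1}^{\tau-1}(Q^*(s_t, a_t) - V^*(s_t)) - V^*(s_\tau)$, which is bounded above by $Q^*(s_0, a_0)$ on every sample path using $V^*(\tilde s) = 0$ and $Q^*(s, a) \le V^*(s)$, with equality attained pathwise by the choice $a_t = \pi^*(s_t)$ for $t \ge 1$. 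Hence the inner pathwise maximum equals $Q^*(s_0, a_0)$ almost surely, so the infimum in \eqref{E:SD} is attained at $Q^*$ and equals $Q^*(s_0, a_0)$.

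The main obstacle I anticipate is the algebraic bookkeeping: choosing the substitution so that the penalty exactly absorbs the Bellman gap, and verifying that $\pi_\varphi$ (the greedy policy) produces $V^*$ upon substitution into \eqref{E:za}, which is what enables the sum to telescope cleanly to a path-independent constant. Secondary technical care is needed to justify interchanging sum, maximum, and expectation under the random horizon $\tau$, but boundedness of $\varphi \in \mathcal Q$ and $r$, together with $\E[\tau] = 1/(1-\gamma) < \infty$, makes these interchanges routine via dominated convergence and Fubini.
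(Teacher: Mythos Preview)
Your proof is correct and follows the standard information-relaxation argument. Note, however, that the paper does not supply its own proof of this proposition: it is stated as a direct adaptation of Proposition~2.2 in \citet{brown2017information} and is invoked without proof. Your write-up is precisely the Brown--Smith template specialized to action-value functions and the absorption-time formulation, so it matches the source the paper cites. The telescoping computation in part~(ii)---reducing the penalized integrand to $Q^*(s_t,a_t)-V^*(s_{t+1})$ via the Bellman equation and then observing the pathwise maximum is the constant $Q^*(s_0,a_0)$---is exactly the mechanism behind strong duality in that literature, and your handling of the random horizon via $\E[\tau]<\infty$ and bounded rewards is the right justification for the interchanges.
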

\vspace{-0.9em}
The DP inside the expectation of the right hand side of \eqref{E:WD} is called the \emph{inner DP problem}. Weak duality tells us that by using a dual feasible penalty, we can get an estimated upper bound on the optimal action-value function $Q^*(s_0, a_0)$ by simulating multiple sample paths and averaging the optimal value of the resulting inner problems. Strong duality suggests that the  information gained from accessing the future is perfectly cancelled out by the optimal dual feasible penalty.

For a given sample path $\mathbf w=(w_1,w_2, \ldots, w_{\tau})$, each of the inner DP problems can be solved via the backward recursion
\vspace{-1.0em}
\begin{equation}
\begin{split}
 Q_t^{U}(s_t,a_t)   &=  r(s_t,a_t)   - \zeta_t^{\pi_\varphi}(s_t, a_t, w_{t+1} \, | \, \varphi) \\ & + \max_{a}  Q_{t+1}^{U}(s_{t+1},a),
 \end{split}
\label{E:QG}
\end{equation}
for $t=\tau-1, \tau-2, \ldots, 0$ with $s_{t+1}= h(s_t, a_t, w_{t+1})$ and $Q^{U}_{\tau}\equiv 0$ (as there is no additional reward after entering the absorbing state $\tilde{s}$). The optimal value of the inner problem is given by $Q_0^{U}(s_0,a_0)$.
\subsection{Lower Bounds using IR}
The penalty function approach also allows for using a feasible policy to estimate a \emph{lower bound} on the optimal value, such that when using a common sample path, this lower bound is guaranteed to be less than the corresponding estimated upper bound, a crucial aspect of our theoretical analysis.
Specifically, given a sample path $(w_1, w_2, \ldots, w_\tau)$, the inner problem used to evaluate a feasible policy $\pi \in \Pi$ is given by
\begin{equation}
\begin{split}
  Q^L_t(s_t,a_t) &=  r(s_t,a_t)- \zeta^{\pi}_t(s_t, a_t, w_{t+1} \, | \, \varphi) 
  \\ & + Q^L_{t+1}(s_{t+1},\pi(s_{t+1})),
\end{split}
\label{E:QL}
\end{equation}
for $t=0,\ldots,\tau-1,$ with $s_{t+1}= h(s_t, a_t, w_{t+1})$ and $Q^{ L}_{\tau}\equiv 0$. It follows that
$\E \bigl[ Q_0^L(s_0,a_0) \bigr]=Q^{\pi}(s_0,a_0)$,
as the penalty terms $ \zeta^\pi_t(s_t, a_t, w_{t+1} \, | \, \varphi)$ have zero mean. 

%%%%%%%%%%%%%%%%%%%%%%%%%%%%%%%%%%%%%%%%%%%%%%%%%%%%%%%%%%%%%%%%%%%%%%%%%%%%%%%%%%%%%%%%%%
\section{QL with Lookahead Bounds}
 \label{S:Algorithm}

We now introduce our proposed approach, which integrates the machinery of IR duality with $Q$-learning in a unique way. An outline of the essential steps is given below.
\vspace{-1em}
\begin{enumerate}[itemsep=1pt,parsep=1pt]
\item On a given iteration, we first experience a realization of the exogenous information $w_{t+1}$ and make a standard $Q$-learning update.
\item We then set $\varphi$ to be the newly updated $Q$-iterate and compute noisy upper and lower bounds on the true $Q^*$, which are then tracked and averaged using a stochastic approximation step.
\item Finally, we project the $Q$-iterate so that it satisfies the averaged upper and lower bounds and return to Step 1.
\end{enumerate}
\vspace{-1em}

Figure \ref{F:LBQL_fig_tikz} shows an illustration of each of these steps at a given iteration of the algorithm. Since we are setting $\varphi$ to be the current $Q$-iterate at every iteration, the information relaxation bounds are computed using a \emph{dynamic sequence of penalty functions} and averaged together using stochastic approximation. The idea is that as our approximation of $Q^*$ improves, our upper and lower bounds also improve. As the upper and lower bounds improve, the projection step further improves the $Q$-iterates. It is this back-and-forth feedback between the two processes that has the potential to yield rapid convergence toward the optimal $Q^*$.
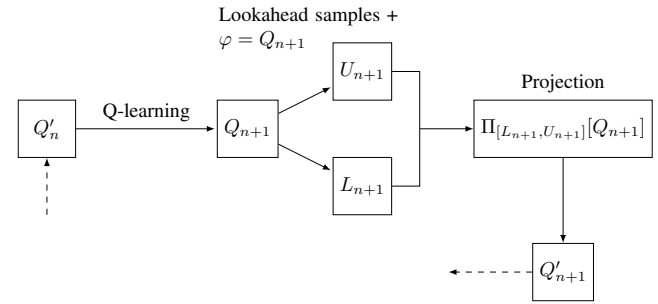
\begin{figure}[htb]
	\centering
	\resizebox{0.5\textwidth}{!}{
	\begin{tikzpicture}[ ->,>= latex, shorten >=1pt,node distance=1.8cm,on grid,auto]
	\node [draw,rectangle, minimum size=1cm] (Q'n)  {$Q'_n$};
	\node [draw,rectangle, minimum size=1cm] (Qn+1) at (3.5,0) {$Q_{n+1}$};
	\node [draw,rectangle, minimum size=1cm] (Un+1) at (5.5,1) {$U_{n+1}$};
	\node [draw,rectangle, minimum size=1cm] (Ln+1) at (5.5,-1) {$L_{n+1}$};
	\node [draw,rectangle, minimum size=1cm,label={Projection}] (proj) at (9,0) {$\Pi_{[L_{n+1},U_{n+1}]}[Q_{n+1}]$};
	\node [draw,rectangle, minimum size=1cm] (Q'n+1)  at (9,-2.5) {$Q'_{n+1}$};
	\node (inters) at (6.5,0) {};
	\node[text width=4cm,align=left](text) at (5,1.75) {Lookahead samples +\\ $\varphi=Q_{n+1}$};
	
	\path[->]
	(Q'n)
	edge node {Q-learning} (Qn+1)
	(Qn+1)
	edge node {} (Un+1)	
	(Qn+1)
	edge node {} (Ln+1);	
	\draw[shorten >=0pt,-,black] (Un+1) -| node[near start,above] {} (inters.center);
	\draw[shorten >=0pt,-,black] (Ln+1) -| node[near start,above] {} (inters.center);
	\draw[shorten >=0pt, ->,black] (inters.center) -- (proj);
	\draw[shorten >=0pt, ->,black] (proj) -- (Q'n+1);
	\draw[shorten >=0pt, ->,black]  (Q'n+1);
	\draw[dashed, shorten >=0pt, ->,black]  (Q'n+1) -- (7,-2.5);
	\draw[dashed, shorten >=0pt, ->,black]  (0,-1.5) --  (Q'n);
	\end{tikzpicture}
	}
	\caption{Illustration of LBQL Algorithm at iteration $n$.}
	\label{F:LBQL_fig_tikz}
\end{figure}

The primary drawback of our approach is that in the computation of the information relaxation dual bounds, expectations need to be computed. We first show an \emph{idealized} version of the algorithm where these expectations are estimated using unbiased samples of $w_{t+1}$ from a black-box simulator.
Later, we relax the need for a black-box simulator and show how our algorithm can be implemented with a replay-buffer. Both versions are analyzed theoretically and convergence results are provided.

\subsection{An Idealized Algorithm}
Let $\{w^1_{t+1}, w^2_{t+1}, \ldots, w^K_{t+1} \}$ be a \emph{batch} (as opposed to a sample path) of $K$ samples from the distribution of the exogenous information $w_{t+1}$ (i.e., from a black-box simulator). An empirical version of (\ref{E:za}) is simply given by:
\begin{equation}
\begin{split}
\textstyle \hat \zeta_t^\pi &(s_t, a_t, w_{t+1} \, | \, \varphi) :=\varphi(s_{t+1},\pi(s_{t+1})) \\ & - \frac{1}{K} \sum_{k=1}^K \varphi \bigl(h(s_t,a_t,w_{t+1}^k),\pi(h(s_t,a_t,w_{t+1}^k))\bigr),
\label{E:Empza}
\end{split}
\end{equation}
where $s_{t+1}=h(s_t,a_t,w_{t+1})$.
Given a sample path $\mathbf w=(w_1, w_2, \ldots,w_{\tau})$ of the absorption time formulation of the problem, analogues to (\ref{E:QG}) and (\ref{E:QL}) using $\hat{\zeta}_t^\pi$, where in \eqref{E:QL} we set $\pi=\pi_{\varphi}$ (i.e., the lower bound on the optimal value is constructed by approximately evaluating the feasible policy $\pi_\varphi$) are given by
\begin{align}
\begin{split}
 \hat Q_t^{U}(s_t,a_t)   =& \, r(s_t,a_t)  -\hat \zeta_t^{\pi_{\varphi}} (s_t, a_t, w_{t+1} \, | \, \varphi) \\ & + \max_{a}  \hat Q_{t+1}^{U}(s_{t+1},a) \label{E:EmpQG}
 \end{split}\\
 \begin{split}
 \hat Q^L_t(s_t,a_t) =& \, r(s_t,a_t)  -\hat \zeta_t^{\pi_{\varphi}} (s_t, a_t, w_{t+1} \, | \, \varphi) \\ & + \hat Q^L_{t+1}(s_{t+1},\pi_{\varphi}(s_{t+1}))
  \end{split}
\label{E:EmpQL}
\end{align}
for $t=0,1,\ldots,\tau-1,$ where $s_{t+1}= h(s_t, a_t, w_{t+1})$, $\hat Q^{ U}_{\tau}\equiv \hat Q^L_{\tau}\equiv0$, and  we assume that each call to $\hat{\zeta}_t^\pi$ uses a fresh batch of $K$ samples.
\begin{prop}
The valid upper and lower bound properties continue to hold in the empirical case:
\begin{equation*}
    \E[\hat Q^L_0(s, a)] \le Q^*(s, a) \le \E[\hat Q_0^{U}(s,a)  ],
\end{equation*}
\vspace{-0.5em}
for any state-action pair $(s,a)$.
\label{P:EmpDR}
\end{prop}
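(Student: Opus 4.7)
The claim is that replacing the true conditional expectation inside the penalty $\zeta_t^{\pi_\varphi}$ by a $K$-sample average $\hat\zeta_t^{\pi_\varphi}$ preserves the upper and lower bound properties established for the exact case in Proposition \ref{P:ATDR}. The plan is to combine two ingredients: (i) the batches in $\hat\zeta_t^{\pi_\varphi}$ are drawn \emph{independently} of the sample path $\mathbf w=(w_1,\dots,w_\tau)$ and of one another across $t$, and (ii) $\max$ is convex so Jensen's inequality lets us move an expectation over the batches inside the max. This cleanly separates the two layers of randomness and reduces the empirical case to the exact one.

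\textbf{Upper bound.} First I would condition on the sample path $\mathbf w$ (including the random horizon $\tau$) and on $\varphi$, and take expectation only over the $\tau$ independent batches $\{w_{t+1}^k\}_{k=1}^K$ used inside $\hat\zeta_t^{\pi_\varphi}$. Since each batch is independent of $\mathbf w$, the law of large numbers at the expectation level gives
\[
\E_{\text{batch}}\bigl[\hat\zeta_t^{\pi_\varphi}(s_t,a_t,w_{t+1}\mid\varphi)\bigr] = \zeta_t^{\pi_\varphi}(s_t,a_t,w_{t+1}\mid\varphi),
\]
for every fixed $\mathbf w$ and action sequence $\mathbf a$. Then, by Jensen's inequality applied to the convex function $\max_{\mathbf a}$,
\[
\E_{\text{batch}}\Bigl[\max_{\mathbf a}\sum_{t=0}^{\tau-1}\bigl(r(s_t,a_t)-\hat\zeta_t^{\pi_\varphi}\bigr)\Bigr]\ \ge\ \max_{\mathbf a}\sum_{t=0}^{\tau-1}\bigl(r(s_t,a_t)-\zeta_t^{\pi_\varphi}\bigr).
\]
Taking expectation over $\mathbf w$ on both sides and invoking weak duality \eqref{E:WD} applied to $\pi=\pi^*$ on the right-hand side yields $\E[\hat Q_0^U(s,a)]\ge Q^{\pi^*}(s,a) = Q^*(s,a)$, as desired.

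\textbf{Lower bound.} Here there is no $\max$, because \eqref{E:EmpQL} follows the fixed policy $\pi_\varphi$ after the initial action. Unrolling the recursion gives
\[
\hat Q_0^L(s_0,a_0)=\sum_{t=0}^{\tau-1}\Bigl(r(s_t,a_t)-\hat\zeta_t^{\pi_\varphi}(s_t,a_t,w_{t+1}\mid\varphi)\Bigr),
\]
where $a_0$ is the given initial action and $a_t=\pi_\varphi(s_t)$ for $t\ge 1$. Since each batch is independent of $\mathbf w$, $\E[\hat\zeta_t^{\pi_\varphi}]=\E[\zeta_t^{\pi_\varphi}]=0$ by the dual feasibility identity established just before \eqref{E:Empza}. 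Taking total expectation term by term leaves $\E[\hat Q_0^L(s_0,a_0)]=Q^{\pi_\varphi}(s_0,a_0)\le Q^*(s_0,a_0)$, using the definition of $Q^*$ as the supremum of $Q^\pi$ over admissible policies.

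\textbf{Main obstacle.} The only subtle point is bookkeeping of the independence structure: the sample path $\mathbf w$, the random horizon $\tau$, and the $K$-sample batches used at each stage must all be on the same probability space but mutually independent for the conditional expectation in step (i) to factor cleanly. Once this is made explicit, both bounds fall out of a single application of Jensen (for the upper) or linearity of expectation (for the lower) combined with the corresponding exact result from Proposition \ref{P:ATDR}.
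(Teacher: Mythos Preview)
Your proposal is correct, and for the lower bound it is essentially identical to the paper's argument: unroll the recursion along the fixed policy $\pi_\varphi$, use that each $\hat\zeta_t^{\pi_\varphi}$ has conditional mean $\zeta_t^{\pi_\varphi}$ (hence unconditional mean zero), and conclude $\E[\hat Q_0^L]=Q^{\pi_\varphi}\le Q^*$.

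For the upper bound, however, you take a genuinely more elementary route. The paper does not invoke Jensen's inequality. Instead, it enlarges the perfect-information filtration $\mathbb G$ to a filtration $\hat{\mathbb G}$ in which the batch-estimated penalties $\hat\zeta_t$ are also revealed at time $t$; it then writes $\E[\hat Q_0^U]$ as a supremum over $\Pi_{\hat{\mathbb G}}$, lower-bounds it by evaluating at the optimizer $\pi_G^*$ of the \emph{exact} dual problem \eqref{E:DualEP} (which lies in $\Pi_{\mathbb G}\subseteq\Pi_{\hat{\mathbb G}}$ and therefore does not depend on the batches), and finally uses $\E[\hat\zeta_t^{\pi_G^*}\mid\mathcal G_t]=\zeta_t^{\pi_G^*}$ to collapse back to the exact dual, which dominates $Q^*$ by weak duality. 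Your argument---condition on $(\mathbf w,\tau)$, apply $\E_{\text{batch}}[\max_{\mathbf a}(\cdot)]\ge\max_{\mathbf a}\E_{\text{batch}}[(\cdot)]$, then take expectation over $\mathbf w$---achieves the same reduction in one line without the auxiliary filtration. The paper's approach stays closer to the information-relaxation formalism of \citet{brown2010information} and makes the role of measurability explicit; yours is shorter and avoids the filtration machinery entirely. The paper also handles the random horizon more carefully via a decomposition over $\{\tau=\tau'\}$, which you would need to make explicit if writing this out in full, but you correctly flag this bookkeeping as the only real obstacle.
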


We include the proof in Appendix \ref{Appendix:ProofPropEmpDR}.
The proof is similar to that of Proposition 2.3(iv) of \citet{brown2010information}, except extended to the infinite horizon setting with the absorption time formulation.
A detailed description of the LBQL algorithm is given in Algorithm \ref{A:IV-LBQL}, where we use `$n$' for the iteration index in order to avoid confusion with the `$t$' used in the inner DP problems.
We use $\Pi_{[a,b]} [x]$ to denote $x$ projected onto $[a,b]$, i.e., $\Pi_{[a,b]} [x]=  \max \{\min \{x, b \}, a\}$, where either $a$ or $b$ could be $\infty$.
Let $ \rho = R_{\max}/(1-\gamma)$,
the initial lower and upper bounds estimates are set such that 
$ L_0(s,a) = -\rho $ and $U_0(s,a) = \rho$ for all $(s,a) \in \mathcal S  \times \mathcal A$.
The initial action-value $Q_0$ is set arbitrarily such that $L_0(s,a) \le Q_0(s,a) \le U_0(s,a)$ for all $(s,a) \in \mathcal S  \times \mathcal A$.

\begin{algorithm}[ht]
    \caption{Lookahead-Bounded Q-Learning}
    \label{A:IV-LBQL}
\begin{algorithmic}    
  \STATE {\bfseries Input:} Initial estimates $L_0 \le Q_0 \le U_0$, batch size $K$, and stepsize rules $\alpha_n(s,a)$, $\beta_n(s,a)$.
  \STATE {\bfseries Output:} Approximations $\{L_n\}$, $\{Q'_n\}$, and $\{U_n\}$.
  \STATE Set $Q_0' = Q_0$ and choose an initial state $s_0$.
  \FOR{$n = 0, 1, 2, \ldots$}
  \STATE Choose an action $a_n$ via some behavior policy (e.g., $\epsilon$-greedy). Observe $w_{n+1}$. Let
  \vspace{-0.5em}
  \begin{align}
\nonumber
  & Q_{n+1}(s_n,a_n) = Q'_n(s_n,a_n) + \alpha_n(s_n, a_n) \Bigl[ r_n(s_n,a_n) \\ & + \gamma \max_a Q'_n(s_{n+1},a) - Q'_n(s_{n},a_n) \Bigr ].
   \label{Qupdate}
\end{align}
  \vspace{-1.0em}
  \STATE Set $\varphi = Q_{n+1}$. Using one sample path $\mathbf{w}$, compute $\hat  Q^{U}_{0}(s_n,a_n)$ and $\hat Q^L_{0}(s_n,a_n)$ using  \eqref{E:EmpQG} and \eqref{E:EmpQL}.
  \STATE Update and enforce upper and lower bounds:
  \vspace{-0.5em}
\begin{align}
\nonumber
 &U_{n+1}(s_n,a_n) = \Pi_{[-\rho, \, \infty]}  \Bigl[ U_n(s_n,a_n)  \\ &
 + \beta_n(s_n,a_n) \, \bigl[ \hat Q^{U}_{0}(s_n,a_n) - U_n(s_n,a_n) \bigr]  \Bigr], \label{Uupdate} \\ 
\nonumber
  &L_{n+1}(s_n,a_n) = \Pi_{[\infty, \, \rho]} \Bigl[ L_n(s_n,a_n)   \\ & 
  + \beta_n(s_n,a_n) \, \bigl[\hat Q^L_{0}(s_n,a_n)- L_n(s_n,a_n) \bigr] \Bigr],  \label{Lupdate}\\
\nonumber
  &Q'_{n+1}(s_n,a_n) =  \\ & \Pi_{[ L_{n+1}(s_n,a_n),\, U_{n+1}(s_n,a_n)]}\left[Q_{n+1}(s_n,a_n)\right]. \label{Q'update}
\end{align}
  \ENDFOR
\vspace{-0.2em}
\end{algorithmic}
\end{algorithm}
%%%%%%%%%%%%%%%%%%%%%%%%%%%%%%%%%%%%%%%%%%%%%%%%%%%%%%%%%%%%%%%%%%%%%%%%%%%%%%%%%%%%%%%%%%%%%%%%%%
\label{A:Idealized_algo_example}
% \paragraph{Example 1}

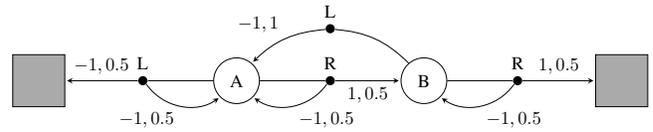
\begin{figure}[htb]
	\centering
	\resizebox{0.5\textwidth}{!}{
	\begin{tikzpicture}[ ->,>= stealth, shorten >=1pt,node distance=1.8cm,on grid,auto]
	\node [draw,rectangle,fill={rgb:black,1;white,2}, minimum size=1cm] (T1)  {};
	\node[circle,node distance=2cm,fill=black,inner sep=0pt,minimum size=5pt,label=above:{L}] (action_node_A_L) [ right=of T1] {};
	\node[state] (A) [right=of action_node_A_L] {A};
	\node[circle,fill=black,inner sep=0pt,minimum size=5pt,label=above:{R}] (action_node_A_R) [ right=of A] {};
	\node[node distance=1cm, circle,fill=black,inner sep=0pt,minimum size=5pt,label=above:{L}] (action_node_B_L) [ above=of action_node_A_R] {};
	\node[state] (B) [right=of action_node_A_R] {B};
	\node[circle,fill=black,inner sep=0pt,minimum size=5pt,label=above:{R}] (action_node_B_R) [ right=of B] {};
	\node [draw,node distance=2cm,rectangle,fill={rgb:black,1;white,2}, minimum size=1cm,] (T2) [right=of action_node_B_R] {};
	\path[->]
	(action_node_A_L)
	edge [swap] node {$-1, 0.5$} (T1)
	(action_node_A_L)
    edge [swap,bend right=45] node {$-1, 0.5$} (A)
	(A)
	edge [-,shorten >=0pt] node {} (action_node_A_L)
	(A)
	edge [-,swap,shorten >=0pt]  node {} (action_node_A_R)
	(action_node_A_R)
	edge [bend left=45]  node {$-1, 0.5$} (A)
	(action_node_A_R)
	edge [swap]  node {$1, 0.5$} (B)
	(B)
	edge [bend right=20,-, swap,shorten >=0pt]  node {} (action_node_B_L)
	(action_node_B_L)
	edge [bend right=20, swap]  node {$-1 ,1$} (A)
	(B)
	edge [-,shorten >=0pt] node {} (action_node_B_R)
	(action_node_B_R)  
	edge [] node {$1, 0.5$} (T2)
	(action_node_B_R)  
	edge [bend left=45] node {$-1, 0.5$} (B)
	;
	\end{tikzpicture}
	}
	\caption{A simple stochastic MDP.}
	\label{F:Example}
\end{figure}
\vspace{-0.5em}
\textbf{Example 1.} \textit{We demonstrate the idealized LBQL algorithm using the simple MDP shown in Figure \ref{F:Example}.  The MDP has two non-terminal states \texttt{A} and \texttt{B}. Each episode starts in state \texttt{A}, with a choice of two actions: right and left denoted by \texttt{R} and \texttt{L} respectively. 
The rewards and transition probabilities of taking an action in each state are shown on the edges in the figure.
Assume that the transitions are governed by the outcome of a fair coin. If the outcome is \texttt{Head} then we transition in the direction of our chosen action and in the opposite direction for a \texttt{Tail} outcome.
For a discount factor $\gamma=0.95$, the optimal policy is to go right at both \texttt{A} and \texttt{B}.
The optimal action-values are given by
$Q^*(A,R)=Q^*(B,R)=0$, $Q^*(A,L)=Q^*(B,L)=-1$. 
Consider applying the idealized version of LBQL described in Algorithm \ref{A:IV-LBQL}.} \newdimen\nodeSize
\nodeSize=4mm
\newdimen\nodeDist
\nodeDist=1.2cm

\def\angR{-70}
\def\angL{-110}
\def\rad{1}
\def\distxL{1}
\def\distyL{0.5}
\def\distSep{0.5}
\def\distBetwnStates{2.75}
\tikzset{
    position/.style args={#1:#2 from #3}{
        at=(#3.#1), anchor=#1+180, shift=(#1:#2)
    }
}

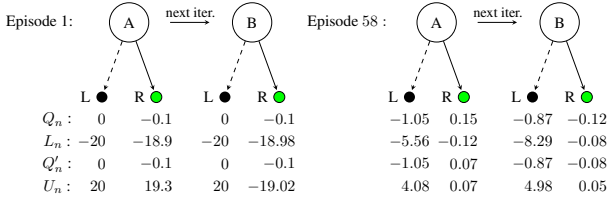
\begin{figure}[htpb]
	\centering
	\resizebox{0.48\textwidth}{!}{
	\begin{tikzpicture}[ ->,>= stealth,minimum size=\nodeSize, shorten >=1pt,node distance=1.2cm,on grid,auto]
	\node[state] (A) [] {A};
	\node[node distance=2] (EPS1) [left=of A] { Episode 1:};
	\node[position=\angL:{\nodeDist} from A,circle,fill=black,inner sep=0pt,minimum size=7pt,label=left:{L}] (action_node_A_L) [] {};
	\node [draw,position=\angR:{\nodeDist} from A, circle,fill=green,inner sep=0pt,minimum size=7pt,label=left:{R}] (action_node_A_R)  {};
	\node at ($(action_node_A_L)  - (\distxL,\distyL)$)  (QL)  {$Q_n$ :};
	\node [node distance=\distSep][below=of QL] (L)  {$L_n$ :};
	\node [node distance=\distSep][below=of L] (Q)  {$Q'_n$ :};
	\node [node distance=\distSep][below=of Q] (U)  {$U_n$ :};
	\node [node distance=\distyL,below=of action_node_A_L] (action_node_A_L_QL) {$0$};
	\node [node distance=\distyL][below=of action_node_A_L_QL.east, anchor=east] (action_node_A_L_L) {$-20$};
	\node [node distance=\distSep][below=of action_node_A_L_L.east,anchor=east] (action_node_A_L_Q) {$0$};
	\node [node distance=\distSep][below=of action_node_A_L_Q.east, anchor=east] (action_node_A_L_U) {$20$};
	\node [node distance=\distyL][below=of action_node_A_R] (action_node_A_R_QL) {$-0.1$};
	\node [node distance=\distyL][below=of action_node_A_R_QL.east, anchor=east] (action_node_A_R_L) {$-18.9$};
	\node [node distance=\distSep][below=of action_node_A_R_L.east, anchor=east] (action_node_A_R_Q) {$-0.1$};
	\node [node distance=\distSep][below=of action_node_A_R_Q.east, anchor=east] (action_node_A_R_U) {$19.3$};
	\node[state,node distance=\distBetwnStates] (B) [right=of A] {B};
	\node[position=\angL:{\nodeDist} from B,circle,fill=black,inner sep=0pt,minimum size=7pt,label=left:{L}] (action_node_B_L) [] {};
	\node [draw,position=\angR:{\nodeDist} from B, circle,fill=green,inner sep=0pt,minimum size=7pt,label=left:{R}] (action_node_B_R)  {};
	\node [node distance=\distyL][below=of action_node_B_L] (action_node_B_L_QL) {$0$};
	\node [node distance=\distyL][below=of action_node_B_L_QL.east, anchor=east] (action_node_B_L_L) {$-20$};
	\node [node distance=\distSep][below=of action_node_B_L_L.east, anchor=east] (action_node_B_L_Q) {$0$};
	\node [node distance=\distSep][below=of action_node_B_L_Q.east, anchor=east] (action_node_B_L_U) {$20$};
	\node [node distance=\distyL][below=of action_node_B_R] (action_node_B_R_QL) {$-0.1$};
	\node [node distance=\distyL][below=of action_node_B_R_QL.east, anchor=east] (action_node_B_R_L) {$-18.98$};
	\node [node distance=\distSep][below=of action_node_B_R_L.east, anchor=east] (action_node_B_R_Q) {$-0.1$};
	\node [node distance=\distSep][below=of action_node_B_R_Q.east, anchor=east] (action_node_B_R_U) {$-19.02$};
	\node[state,node distance=1.5*\distBetwnStates] (nA) [right=of B] {A};
	\node[node distance=2] (EPSn) [left=of nA] { Episode $58:$};
	\node[position=\angL:{\nodeDist} from nA,circle,fill=black,inner sep=0pt,minimum size=7pt,label=left:{L}] (action_node_nA_L) [] {};
	\node [draw,position=\angR:{\nodeDist} from nA, circle,fill=green,inner sep=0pt,minimum size=7pt,label=left:{R}] (action_node_nA_R)  {};
	\node [node distance=\distyL][below=of action_node_nA_L] (action_node_nA_L_QL) {$-1.05$};
	\node [node distance=\distyL][below=of action_node_nA_L_QL.east, anchor=east] (action_node_nA_L_L) {$-5.56$};
	\node [node distance=\distSep][below=of action_node_nA_L_L.east, anchor=east] (action_node_nA_L_Q) {$-1.05$};
	\node [node distance=\distSep][below=of action_node_nA_L_Q.east, anchor=east] (action_node_nA_L_U) {$4.08$};
	\node [node distance=\distyL][below=of action_node_nA_R] (action_node_nA_R_QL) {$0.15$};
	\node [node distance=\distyL][below=of action_node_nA_R_QL.east, anchor=east] (action_node_nA_R_L) {$-0.12$};
	\node [node distance=\distSep][below=of action_node_nA_R_L.east, anchor=east] (action_node_nA_R_Q) {$0.07$};
	\node [node distance=\distSep][below=of action_node_nA_R_Q.east, anchor=east] (action_node_nA_R_U) {$0.07$};
	\node[state,node distance=\distBetwnStates] (nB) [right=of nA] {B};
	\node[position=\angL:{\nodeDist} from nB,circle,fill=black,inner sep=0pt,minimum size=7pt,label=left:{L}] (action_node_nB_L) [] {};
	\node [draw,position=\angR:{\nodeDist} from nB, circle,fill=green,inner sep=0pt,minimum size=7pt,label=left:{R}] (action_node_nB_R)  {};
	\node [node distance=\distyL][below=of action_node_nB_L] (action_node_nB_L_QL) {$-0.87$};
	\node [node distance=\distyL][below=of action_node_nB_L_QL.east, anchor=east] (action_node_nB_L_L) {$-8.29$};
	\node [node distance=\distSep][below=of action_node_nB_L_L.east, anchor=east] (action_node_nB_L_Q) {$-0.87$};
	\node [node distance=\distSep][below=of action_node_nB_L_Q.east, anchor=east] (action_node_nB_L_U) {$4.98$};
	\node [node distance=\distyL][below=of action_node_nB_R] (action_node_nB_R_QL) {$-0.12$};
	\node [node distance=\distyL][below=of action_node_nB_R_QL.east, anchor=east] (action_node_nB_R_L) {$-0.08$};
	\node [node distance=\distSep][below=of action_node_nB_R_L.east, anchor=east] (action_node_nB_R_Q) {$-0.08$};
	\node [node distance=\distSep][below=of action_node_nB_R_Q.east, anchor=east] (action_node_nB_R_U) {$0.05$};
	\path[->]
	(A)
	edge [dashed] node {} (action_node_A_L)
	(A)
	edge [] node {} (action_node_A_R)
	(B)
	edge [dashed] node {} (action_node_B_L)
	(B)
	edge [] node {} (action_node_B_R)
	(nA)
	edge [dashed] node {} (action_node_nA_L)
	(nA)
	edge [] node {} (action_node_nA_R)
	(nB)
	edge [dashed] node {} (action_node_nB_L)
	(nB)
	edge [] node {} (action_node_nB_R)
	($(A) + (\distBetwnStates/3,0)$)
	edge [] node {\small next iter.} ($(B) - (\distBetwnStates/3,0)$)
	($(nA) + (\distBetwnStates/3,0)$)
	edge [] node {\small next iter.} ($(nB) - (\distBetwnStates/3,0)$)
	;
	\end{tikzpicture}
	}
	\caption{An illustration of LBQL iterates for Example 1.}
	\label{F:Example_alg}
\end{figure}
%\vspace{-.5em}
%%%%%%%%%%%%%%%%%%%%%%%%%%%%%%%%%%%%%%%%%%%%%%%%%%%%%%%%%%%%%%%%%%%%%%%%%%%%%%%%%%%%%%%%%%%%%%%%%%
\textit{We let $\alpha_n=0.1$, $\beta_n=0.05$ for all $n$. 
Figure \ref{F:Example_alg} illustrates two iterations from the first and the $58$th episodes.
Initially $Q_0(s,a)=0$ and $\rho=20$.
After one episode the bounds are still loose, so we have $Q_1(A,R)=Q'_1(A,R)={-0.1}$.
At episode 58 (281 iterations): learning has occurred for the lower and upper bounds values for the right action at \texttt{A} and \texttt{B}. 
We see that the bounds are effective already in 
keeping the $Q$-iterate close to $Q^*$. Interestingly, the upper bound is enforced at \texttt{A}, while the lower bound is enforced at \texttt{B}. Note that these are the results of a real simulation.}
\subsection{Analysis of Convergence}
In this section, we analyze the convergence of the idealized version of the LBQL algorithm to the optimal action-value function $Q^*$. We start by summarizing and developing some important technical results that will be used in our analysis. 
All proofs are presented in Appendix \ref{Appendix:Proofs}. 
%%%%%%%%%%%%%%%%%%%%%%%%%%%%%%%%%%%%%%%%%%%%%%%%%%%%%%%%%%%%%%%%%%%%%%%%%%%%%%%%%

The following proposition establishes the boundedness of the action-value iterates and  asymptotic bounds on the $L_n$ and $U_n$ iterates of Algorithm \ref{A:IV-LBQL}, which are needed in our proof of convergence. The proof of this proposition is presented in Section \ref{Appendix:LQU} in the Appendix.
\begin{prop} [Boundedness]
For all $(s,a) \in \mathcal{S} \times \mathcal{A}$, we have the following: 
\vspace{-1em}
\begin{enumerate}[label={(\roman*)},itemindent=0.em]
    \item The iterates $Q_n(s, a)$ and $Q'_n(s, a)$, remains bounded for all  $(s,a) \in \mathcal{S} \times \mathcal{A}$ and for all $n$.
    \label{L:BQ'}
    \vspace{-0.5em}
    \item For every $\eta > 0$, and with probability one, there exists some finite iteration index $n_0$ such that 
    \vspace{-0.7em}
    \begin{equation*}
        L_n(s,a) \le Q^*(s,a) + \eta \ \ \text{and} \ \ Q^*(s,a) - \eta \le U_n(s,a),
        \vspace{-0.7em}
    \end{equation*}
    for all iterations $n \ge n_0$. \label{L:LQU2}
\end{enumerate}
\label{L:LQU}
\end{prop}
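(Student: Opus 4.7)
I would establish both parts jointly, exploiting the tight coupling between the Q-learning step \eqref{Qupdate}, the stochastic approximation updates \eqref{Uupdate}--\eqref{Lupdate}, and the projection \eqref{Q'update}. The starting observation is that the projection \eqref{Q'update} forces $L_n(s,a) \le Q'_n(s,a) \le U_n(s,a)$ for every $n$ and every $(s,a)$, so boundedness of $Q'_n$ reduces to uniform bounds on $L_n$ and $U_n$. In turn, an elementary convex-combination argument applied to \eqref{Qupdate} yields
$|Q_{n+1}(s_n,a_n)| \le [1-\alpha_n(1-\gamma)] \|Q'_n\|_\infty + \alpha_n R_{\max}$,
so since $R_{\max} = (1-\gamma)\rho$ one obtains by induction that $\|Q_n\|_\infty \le \max(\rho, \|Q_0\|_\infty, \sup_{k \le n}\max(\|L_k\|_\infty,\|U_k\|_\infty))$. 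Thus part (i) reduces to controlling the ``open'' sides of $L_n$ and $U_n$, since the projections in \eqref{Uupdate}--\eqref{Lupdate} already supply $L_n \le \rho$ and $U_n \ge -\rho$.

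For those remaining sides, I would analyze \eqref{Uupdate} and \eqref{Lupdate} as asynchronous stochastic approximation recursions with noisy driving terms $\hat Q^U_0$ and $\hat Q^L_0$. Each penalty $\hat\zeta_t^{\pi_\varphi}$ from \eqref{E:Empza} satisfies $|\hat\zeta_t^{\pi_\varphi}| \le 2\|\varphi\|_\infty = 2\|Q_{n+1}\|_\infty$, so the backward recursions \eqref{E:EmpQG}--\eqref{E:EmpQL} admit pathwise envelopes of order $\tau_n(R_{\max} + 2\|Q_{n+1}\|_\infty)$, while the conditional means $\E[\hat Q^U_0 \mid \mathcal F_n]$ and $\E[\hat Q^L_0 \mid \mathcal F_n]$ are uniformly bounded by $(R_{\max} + 2\|Q_{n+1}\|_\infty)/(1-\gamma)$. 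Because $\tau_n$ is geometric with parameter $1-\gamma$, a Borel--Cantelli argument gives $\tau_n = O(\log n)$ almost surely, and under Robbins--Monro conditions $\sum_n \beta_n = \infty$, $\sum_n \beta_n^2 < \infty$, the $\beta_n^2$-weighted noise variances are summable. A Robbins--Siegmund style supermartingale argument then closes a simultaneous induction giving uniform bounds on $\|L_n\|_\infty, \|U_n\|_\infty$, and hence on $\|Q'_n\|_\infty$ and $\|Q_n\|_\infty$.

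For part (ii), fix $(s,a)$ and $\eta > 0$. By Proposition \ref{P:EmpDR} applied conditionally on the natural filtration $\mathcal F_n$ generated by the algorithm through iteration $n$, we have $\E[\hat Q^U_0(s,a) \mid \mathcal F_n] \ge Q^*(s,a)$ and $\E[\hat Q^L_0(s,a) \mid \mathcal F_n] \le Q^*(s,a)$. Writing
\[
U_{n+1}(s,a) = U_n(s,a) + \beta_n\bigl(\mu^U_n - U_n(s,a)\bigr) + \beta_n M^U_n,
\]
where $\mu^U_n := \E[\hat Q^U_0(s,a) \mid \mathcal F_n] \ge Q^*(s,a)$ and $M^U_n$ is a martingale-difference noise (and analogously for $L_n$), a standard asynchronous stochastic approximation argument in the style of \citet{tsitsiklis1994asynchronous}, or equivalently a supermartingale convergence argument applied to the Lyapunov function $\max(0, Q^*(s,a) - \eta - U_n(s,a))^2$, shows that the drift toward $\mu^U_n \ge Q^*(s,a)$ forces the event $U_n(s,a) < Q^*(s,a) - \eta$ to occur only finitely often almost surely. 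The bound $L_n(s,a) \le Q^*(s,a) + \eta$ follows by the symmetric argument using $\mu^L_n \le Q^*(s,a)$.

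The principal obstacle is the pathwise control in part (i): the inner DP values $\hat Q^U_0, \hat Q^L_0$ couple the random geometric horizon $\tau_n$ multiplicatively with $\|Q_{n+1}\|_\infty$, and closing the induction on $\|L_n\|_\infty, \|U_n\|_\infty, \|Q_n\|_\infty$ demands that these $O(\log n)$ rollout lengths be absorbed by the stepsizes $\beta_n$ without allowing the envelope to escape. A secondary subtlety is that \eqref{Uupdate}--\eqref{Lupdate} are asynchronous in $(s,a)$, so the argument for part (ii) will also rely on the standard ``all state-action pairs visited infinitely often'' hypothesis familiar from asynchronous Q-learning analyses.
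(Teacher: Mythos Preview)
Your argument for part (ii) is essentially the paper's: decompose the driving term into a drift that lies on the correct side of $Q^*$ (via Proposition \ref{P:EmpDR}) plus a martingale-difference noise, and then run a standard stochastic approximation argument. The paper uses a comparison sequence $Y^L_n \to Q^*$ rather than your Lyapunov function, but the mechanism is the same.

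The gap is in part (i). You reduce boundedness of $Q'_n$ to uniform bounds on $L_n$ and $U_n$, and then propose to control the ``open'' sides of $L_n$ and $U_n$ through a Robbins--Siegmund argument on the recursions \eqref{Uupdate}--\eqref{Lupdate}. But the driving terms $\hat Q^U_0, \hat Q^L_0$ are built from $\varphi = Q_{n+1}$, so any envelope on them is of order $\|Q_{n+1}\|_\infty$ (times the random horizon $\tau_n$), and $\|Q_{n+1}\|_\infty$ in turn depends on $\|Q'_n\|_\infty$, which in your scheme depends on $\|U_n\|_\infty$ and $\|L_n\|_\infty$. You flag this circularity yourself as ``the principal obstacle,'' but the proposed simultaneous induction does not close it: the pathwise envelope on $\hat Q^U_0$ grows like $\tau_n \|Q_{n+1}\|_\infty$, and there is no a priori reason the stepsizes $\beta_n$ absorb this before $\|U_n\|_\infty$ feeds back into $\|Q_{n+1}\|_\infty$.

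The paper avoids this entirely by \emph{not} bounding $U_n$ from above or $L_n$ from below. The projection step gives
\[
Q'_{n+1}(s,a) = \max\{L_{n+1}(s,a), \min\{U_{n+1}(s,a), Q_{n+1}(s,a)\}\} \le \max\{L_{n+1}(s,a), Q_{n+1}(s,a)\},
\]
and the built-in truncation in \eqref{Lupdate} already forces $L_{n+1}(s,a) \le \rho$. So the upper bound on $Q'_{n+1}$ comes from $L_{n+1}$ (which is capped deterministically) and from $Q_{n+1}$ (which is a convex combination of $Q'_n$ and a term bounded by $R_{\max} + \gamma\|Q'_n\|_\infty$). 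Symmetrically, $Q'_{n+1} \ge \min\{U_{n+1}, Q_{n+1}\}$ and $U_{n+1} \ge -\rho$. This yields a purely deterministic induction of the form $\|Q'_n\|_\infty \le \rho \sum_{i=0}^n \gamma^i \le \rho/(1-\gamma)$, with no need to control the unbounded sides of $U_n$ and $L_n$ at all. The key insight you are missing is that the projection can only push $Q'_{n+1}$ \emph{above} $Q_{n+1}$ through $L_{n+1}$, never through $U_{n+1}$, so the potentially unbounded upper tail of $U_n$ is irrelevant to part (i).
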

\vspace{-1.0em}
Proposition \ref{L:LQU}\ref{L:BQ'} ensures that at each iteration $n$ the action-value iterates $Q_n$ and $Q'_n$ 
are bounded. 
This allows us to set $\varphi =  Q_{n+1}$ at each iteration of Algorithm \ref{A:IV-LBQL} and is required to establish convergence in general.
The proof is based on showing an inductive relationship that connects $Q_{n}$ and $Q'_{n}$ to the previous lower and upper bound iterates.
Specifically, we show that both action-value iterates are bounded below by the preceding upper bound iterates and above by the preceding lower bound iterates.
Proposition \ref{L:LQU}\ref{L:LQU2} ensures that 
there exists a finite iteration after which the lower and upper bound iterates $L_{n}$ and $U_{n}$ are lower and upper bounds on the optimal action-value function $Q^*$ with an error margin of at most an arbitrary amount $\eta > 0$. In the proof of Proposition \ref{L:LQU}\ref{L:LQU2}, we bound the lower and upper bound iterates by a noise process and another sequence that converges to $Q^*$. We show that the noise process possesses some properties that help to eliminate the effect of the noise asymptotically. With the effects of the noise terms vanishing, the boundedness of the lower and upper bound iterates by $Q^*$ is achieved.
Examining the update equations (\ref{Uupdate}) and (\ref{Lupdate}) for $U_{n+1}$ and $L_{n+1}$ in Algorithm \ref{A:IV-LBQL}, we remark that they are not ``standard'' stochastic approximation or stochastic gradient updates because $\hat{Q}_0^U$ and $\hat{Q}_0^L$ are computed with iteration-dependent penalty functions generated by $\varphi = Q_{n+1}$. In other words, the noiseless function itself is changing over time. The proof of Proposition \ref{L:LQU}\ref{L:LQU2} essentially uses the fact that even though these updates are being performed with respect to different underlying functions, as long as we can apply Proposition \ref{P:EmpDR} in every case, then after the noise is accounted for, the averaged values $U_{n+1}$ and $L_{n+1}$ are eventually bounded below and above by $Q^*$, respectively.
The following lemma derives some guarantees on the lower and upper bound iterates of Algorithm \ref{A:IV-LBQL}, whose proof appears in Section \ref{Appendix:LU} of the Appendix.

%%%%%%%%%%%%%%%%%%%%%%%%%%%%%%%%%%%%%%%%%%%%%%%%%%%%%%%%%%%%%%%%%%%%%%%%%%%%%%%%
\begin{lem}[Consistency of Bounds]  
If $L_0(s,a) \le U_0(s,a)$, then $L_n(s,a) \le U_n(s,a)$ for all iterations $n$ and for all  $(s,a) \in \mathcal{S} \times \mathcal{A}$.
\label{L:LU} 
\end{lem}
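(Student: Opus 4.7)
The plan is to prove Lemma 1 by induction on the iteration index $n$. The base case $n=0$ holds by assumption. For the inductive step, note that Algorithm \ref{A:IV-LBQL} only modifies the $L$ and $U$ iterates at the visited pair $(s_n, a_n)$; for all other state-action pairs, $L_{n+1}(s,a) = L_n(s,a) \le U_n(s,a) = U_{n+1}(s,a)$ by the inductive hypothesis. So the entire task reduces to showing $L_{n+1}(s_n, a_n) \le U_{n+1}(s_n, a_n)$.

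The critical ingredient will be a \emph{pathwise} inequality $\hat Q^L_0(s_n, a_n) \le \hat Q^U_0(s_n, a_n)$ for the single common sample path $\mathbf{w}$ used in \eqref{E:EmpQG} and \eqref{E:EmpQL}. I would establish this by backward induction on $t \in \{\tau, \tau-1, \ldots, 0\}$, showing $\hat Q^L_t(s,a) \le \hat Q^U_t(s,a)$ for every $(s,a)$. The base case is immediate since $\hat Q^U_\tau \equiv \hat Q^L_\tau \equiv 0$. For the inductive step at time $t$, both recursions share the identical reward $r(s_t,a_t)$, identical penalty $\hat\zeta_t^{\pi_\varphi}(s_t,a_t,w_{t+1}\,|\,\varphi)$ (since both lower and upper bound recursions use the policy $\pi_\varphi$ in the penalty and the same $\varphi = Q_{n+1}$), and identical transition $s_{t+1} = h(s_t, a_t, w_{t+1})$. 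Hence the difference simplifies to
\begin{equation*}
\hat Q^U_t(s_t,a_t) - \hat Q^L_t(s_t,a_t) = \max_a \hat Q^U_{t+1}(s_{t+1},a) - \hat Q^L_{t+1}(s_{t+1}, \pi_\varphi(s_{t+1})),
\end{equation*}
which is nonnegative because $\max_a \hat Q^U_{t+1}(s_{t+1},a) \ge \hat Q^U_{t+1}(s_{t+1}, \pi_\varphi(s_{t+1})) \ge \hat Q^L_{t+1}(s_{t+1}, \pi_\varphi(s_{t+1}))$ by the inductive hypothesis.

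With the pathwise inequality in hand, I would finish the outer induction by analyzing the update rules \eqref{Uupdate}--\eqref{Lupdate}. Assuming the standard stepsize condition $\beta_n(s_n,a_n) \in [0,1]$, the pre-projection updates
\begin{equation*}
\tilde L := (1-\beta_n) L_n(s_n,a_n) + \beta_n \hat Q^L_0(s_n,a_n), \quad \tilde U := (1-\beta_n) U_n(s_n,a_n) + \beta_n \hat Q^U_0(s_n,a_n)
\end{equation*}
are convex combinations. Since $L_n(s_n,a_n) \le U_n(s_n,a_n)$ by the inductive hypothesis and $\hat Q^L_0(s_n,a_n) \le \hat Q^U_0(s_n,a_n)$ pathwise, convexity gives $\tilde L \le \tilde U$. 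It remains to verify that the projections $L_{n+1} = \min(\tilde L, \rho)$ and $U_{n+1} = \max(\tilde U, -\rho)$ preserve this ordering: if $\tilde L \le \rho$, then $L_{n+1} = \tilde L \le \tilde U \le U_{n+1}$; if $\tilde L > \rho$, then $\tilde U \ge \tilde L > \rho > -\rho$, so $U_{n+1} = \tilde U > \rho = L_{n+1}$.

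The main obstacle is the pathwise comparison of $\hat Q^L_0$ and $\hat Q^U_0$, since the inner DPs are randomized through both $\mathbf{w}$ and the fresh batches used inside each $\hat\zeta_t^{\pi_\varphi}$. The proof strategy crucially exploits that the algorithm uses the \emph{same} sample path and the \emph{same} penalty evaluations in both recursions---otherwise the penalty terms would not cancel, and a pathwise comparison would be impossible (only an expected-value comparison, as in Proposition \ref{P:EmpDR}, would survive). Everything else is a clean induction once this pathwise dominance is established.
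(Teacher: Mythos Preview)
Your proof is correct and follows essentially the same approach as the paper. The paper factors the one-step argument into a separate technical lemma (Lemma~\ref{L:LQU1}) and states the pathwise inequality $\hat Q^U_{n,0}(s,a) \ge \hat Q^L_{n,0}(s,a)$ more tersely (``the actions selected by the policy $\pi_{Q_n}$ are feasible in \eqref{E:EmpQG}''), whereas you spell out the backward induction on $t$ and the projection analysis explicitly; the underlying idea---induction on $n$ together with a pathwise comparison enabled by the common sample path and shared penalty evaluations---is identical.
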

\vspace{-0.7em}
In particular, Lemma \ref{L:LU} shows that the upper and lower bound iterates do not interchange roles and become inconsistent. This is an important property; otherwise, the projection step of Algorithm \ref{A:IV-LBQL} loses its meaning and would require additional logic to handle inconsistent bounds.
The results of Lemma \ref{L:LU} follows mainly by the fact that we are using the same sample path to solve the upper and lower bound inner problems, \eqref{E:EmpQG} and \eqref{E:EmpQL}, respectively. 
Before stating our convergence results, we first state a typical assumption on the stepsizes and the state visits.
\begin{assumption}
We assume that: %\inlinerom{ 
\vspace{-0.7em}
\begin{enumerate}[label={(\roman*)},itemindent=0em,leftmargin=*]
\item $\sum_{n=0}^{\infty} \alpha_n(s,a) = \infty, \,
\sum_{n=0}^{\infty} \alpha^2_n(s,a) < \infty, \\[2pt]
\sum_{n=0}^{\infty} \beta_n(s,a) = \infty, \, 
\sum_{n=0}^{\infty} \beta^2_n(s,a) < \infty,$
\vspace{-0.2em}
\item Each state $s \in \mathcal{S}$ is visited infinitely often with probability one.
\vspace{-0.5em}
\end{enumerate}
\label{Assumption}
\end{assumption}
\vspace{-0.7em}
We now state one of our main theoretical results. 
\begin{thm}[Convergence of LBQL]
Under Assumption \ref{Assumption}, the following hold with probability 1:
\vspace{-0.7em}
\begin{enumerate}[label={(\roman*)},itemindent=0em,leftmargin=*]
    \item $ Q'_n(s,a)$ in Algorithm \ref{A:IV-LBQL} converges to the optimal action-value function $Q^*(s,a)$ for all state-action pairs $(s,a)$. 
    \label{T:1}
    \vspace{-0.5em}
    \item If the penalty terms are computed exactly, i.e. as per \eqref{E:za}, then  the iterates $L_n(s,a), Q'_n(s,a), U_n(s,a)$ in Algorithm \ref{A:IV-LBQL} converge to the optimal action-value function $Q^*(s,a)$ for all state-action pairs $(s,a)$. \label{T:2}
\end{enumerate}
\label{T:Conv}
\end{thm}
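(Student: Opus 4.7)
My plan is to prove parts (i) and (ii) in sequence, reducing (i) to an application of a standard asynchronous Q-learning convergence theorem (e.g., Proposition 4.4 of Bertsekas and Tsitsiklis, 1996) in the presence of an asymptotically vanishing projection bias, and then deducing (ii) from (i) together with strong duality and Robbins--Monro. Throughout, the boundedness supplied by Proposition \ref{L:LQU}\ref{L:BQ'} is what allows the Bellman-operator contraction and the martingale-noise variance estimates to be applied cleanly.

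For part (i), I first substitute the Q-learning update \eqref{Qupdate} into the projection update \eqref{Q'update} to obtain, at the visited pair $(s_n,a_n)$,
\begin{equation*}
    Q'_{n+1}(s_n,a_n) = Q'_n(s_n,a_n) + \alpha_n \bigl[(H Q'_n)(s_n,a_n) - Q'_n(s_n,a_n) + \tilde w_n\bigr] + \epsilon_n,
\end{equation*}
where $H$ is the Bellman optimality operator on action-values, $\tilde w_n$ is the usual zero-mean Q-learning noise (the gap between $\max_a Q'_n(s_{n+1},a)$ and its conditional expectation under $w$), and $\epsilon_n := Q'_{n+1}(s_n,a_n) - Q_{n+1}(s_n,a_n)$ is the displacement caused by the projection. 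The key geometric observation, verified by a short case split, is that whenever $L \le Q^* + \eta$ and $U \ge Q^* - \eta$ one has $|\Pi_{[L,U]}[x] - Q^*| \le \max(|x - Q^*|, \eta)$ for every scalar $x$. Combining this with Proposition \ref{L:LQU}\ref{L:LQU2}, there is a (random) finite index $n_0$ so that for all $n \ge n_0$ the projection at the updated coordinate can inflate the distance to $Q^*$ by at most $\eta$ beyond what the raw Q-learning step would have achieved. Plugging this into the displayed recursion, using the $\gamma$-contraction of $H$ in the sup-norm, and invoking the standard asynchronous Q-learning convergence argument under Assumption \ref{Assumption} yields $\limsup_n \|Q'_n - Q^*\|_\infty \le C(\gamma)\, \eta$ almost surely; since $\eta$ was arbitrary, $Q'_n \to Q^*$ almost surely.

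For part (ii), exact penalties \eqref{E:za} make the conditional expectations of $\hat Q^U_0(s_n,a_n)$ and $\hat Q^L_0(s_n,a_n)$ coincide with the true IR upper-bound value at $\varphi = Q_{n+1}$ and with the policy-evaluation value of the greedy policy $\pi_{Q_{n+1}}$, respectively. Part (i) delivers $Q'_n \to Q^*$; because $Q_{n+1}$ differs from $Q'_n$ by an $\alpha_n$-scaled bounded increment, also $Q_{n+1} \to Q^*$. Strong duality (Proposition \ref{P:ATDR}\ref{P:SD}) and continuity of the inner DP value in $\varphi$ then force both stochastic-approximation targets to converge almost surely to $Q^*$. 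Combining this with the updates \eqref{Uupdate} and \eqref{Lupdate}, Lemma \ref{L:LU}, and a standard Robbins--Monro argument under Assumption \ref{Assumption} gives $L_n \to Q^*$ and $U_n \to Q^*$ almost surely.

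The main obstacle is in part (i): the projection displacement $\epsilon_n$ is neither zero-mean nor uniformly small in absolute value (if $Q_{n+1}$ overshoots the current interval $[L_{n+1}, U_{n+1}]$ by a lot, $|\epsilon_n|$ is correspondingly large), so it does not fit into a standard martingale-noise hypothesis. What saves the argument is the sign structure: $\epsilon_n$ always drags $Q_{n+1}$ back toward an interval that, by Proposition \ref{L:LQU}\ref{L:LQU2}, brackets $Q^*$ up to $\eta$, so what matters is not $|\epsilon_n|$ itself but the resulting signed contribution to $|Q'_{n+1} - Q^*|$, which is controlled by $\eta$. Converting this geometric bound into the perturbation hypothesis of an off-the-shelf asynchronous stochastic approximation theorem, while verifying that the stepsize and conditional variance conditions still hold, is where the technical weight of the proof concentrates.
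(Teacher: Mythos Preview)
Your proposal is on the right track and, for Part (ii), is essentially the paper's argument: decompose the stochastic-gradient error at $\varphi=Q_{n+1}$ into a zero-mean martingale piece evaluated at $\varphi=Q^*$ plus a bias $\bar\varepsilon_n$ that vanishes because $Q_{n+1}\to Q^*$, then apply a standard Robbins--Monro/Kushner--Yin result. The two treatments of Part (ii) are interchangeable.

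For Part (i) the key insight is the same but the organization differs in a way that matters. Your plan fixes $\eta$ once, uses the scalar bound $|\Pi_{[L,U]}[x]-Q^*|\le\max(|x-Q^*|,\eta)$, pushes this through ``the standard asynchronous Q-learning argument'' to obtain $\limsup\|Q'_n-Q^*\|_\infty\le C(\gamma)\eta$, and then lets $\eta\downarrow 0$. As you correctly flag in your last paragraph, no off-the-shelf perturbation theorem accommodates the signed, non-small displacement $\epsilon_n$; one has to reopen the Bertsekas--Tsitsiklis $D_k\to(\gamma+\epsilon)D_k$ induction and carry the $\max(\cdot,\eta)$ through each step, stopping once $D_k$ reaches order $\eta$. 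The paper takes a cleaner route: it runs the $D_k$ induction directly and, \emph{inside} each level $k$, re-invokes Proposition \ref{L:LQU}\ref{L:LQU2} with a level-dependent tolerance (after centering at $Q^*=0$, it waits until $L_n\le\gamma(1-\epsilon)D_k$ and $U_n\ge-\gamma(1-\epsilon)D_k$). Since the auxiliary process $Y_n+W_{n,\nu_k}$ stays above $\gamma D_k-\gamma\epsilon D_k$, the lower clamp is never binding in the one-sided recursion, so the projection simply disappears from the inequality $Q'_n\le\min\{U_n,\,Y_n+W_{n,\nu_k}\}$ and the classical contraction-plus-noise step goes through unmodified. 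This avoids both the case split and the outer $\eta\to 0$ limit. Your approach is not wrong, but the paper's interleaving of the $\eta$-choice with the $D_k$ ladder is what dissolves the very obstacle you identify.
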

\vspace{-0.7em}
Due to the interdependent feedback between $Q$, $U$, and $L$, it is not immediately obvious that the proposed scheme does not diverge. The primary challenge in the analysis for this theorem is to handle this unique aspect of the algorithm.

%The primary challenge in the analysis is to handle the interdependent feedback between $Q$, $U$, and $L$, which is \emph{unique to our algorithm} ().

\subsection{LBQL with Experience Replay}
\label{S:LBQL_ER}
We now introduce a more practical version of LBQL that uses experience replay in lieu of a black-box simulator.
Here, we use a noise buffer $\mathcal B$ to record the unique noise values $w$ that are observed at every iteration. 
We further assume that the noise space $\mathcal W$ is finite, a reasonable assumption for a finite MDP. The buffer $\mathcal B$ is used in two ways: (1) to generate the sample path $\mathbf w$ and (2) to estimate the expectation in the penalty function. 
Here, we track and update the \emph{distribution} of the noise $w$ after every iteration and directly compute the expectation under this distribution instead of sampling a batch of size $K$, as we did previously. 
%Note that this is equivalent to using a sample of batch size equal to the current size of the buffer whenever the bounds are updated if we were recording the noise observations at each iteration. 
To illustrate how this can be done, suppose $\mathcal W=\{w_a, w_b, w_c, w_d\}$ and that at iteration $n$ we observe $w_{n+1}=w_a$. 
Let $p_a$ denote the probability of observing $w_a$, and $N_n(w_a)$ the number of times $w_a$ is observed in the first $n$ iterations, then the empirical estimate of $p_a$ is given by $\hat p_n(w_a) = N_n(w_a)/n$.\footnote{Note that LBQL could, in principle, be adapted to the case of of continuous noise (i.e., where $w$ is continuous random variable) using methods like kernel density estimation (KDE).}
We denote by $\hat \E_n[\,.\,]$ the expectation computed using the empirical distribution $\hat p_n$.
To differentiate the penalty and the action-values (solutions to the inner problems) that are computed from the buffer from those defined in the idealized version of the algorithm, we define:
\begin{equation}
\begin{split}
\textstyle \tilde \zeta_t^\pi (s_t, a_t, &w \, | \, \varphi) :=\varphi(s_{t+1},\pi(s_{t+1}))  \\ &- \hat \E_n [ \varphi \bigl(h(s_t,a_t,w),\pi(h(s_t,a_t,w))\bigr)],
\end{split}
\label{E:Empza_biased}
\end{equation}
and given a sample path $\mathbf w=(w_1, w_2, \ldots,w_{\tau})$ the inner problems analogous to  \eqref{E:EmpQG} and \eqref{E:EmpQL} are given by 
\begin{align}
&\begin{split}
 \tilde Q_t^{U}(s_t,a_t)  & =  r(s_t,a_t)  -\tilde \zeta_t^{\pi_{\varphi}} (s_t, a_t, w_{t+1} \, | \, \varphi) \\ & + \max_{a}  \tilde Q_{t+1}^{U}(s_{t+1},a) \label{E:EmpQG_biased}
 \end{split}\\
 &\begin{split}
 \tilde Q^L_t(s_t,a_t) &=  r(s_t,a_t)  -\tilde \zeta_t^{\pi_{\varphi}} (s_t, a_t, w_{t+1} \, | \, \varphi)  \\ & + \tilde Q^L_{t+1}(s_{t+1},\pi_{\varphi}(s_{t+1}))
  \end{split}
\label{E:EmpQL_biased}
\end{align}
for $t=0,1,\ldots,\tau-1,$ where $s_{t+1}= h(s_t, a_t, w_{t+1})$ and $\tilde Q^{ U}_{\tau}\equiv \tilde Q^L_{\tau}\equiv0$. The pseudo-code of LBQL with experience replay is shown in Algorithm \ref{A:ER-LBQL} in Appendix \ref{Appendix:LBQLwithExpRep}.

\subsection{Convergence of LBQL with Experience Replay}

In this section, we prove that the version of LBQL with experience replay also converges to the optimal action-value function.
We start by stating a lemma that confirms Proposition \ref{L:LQU} and Lemma \ref{L:LU} still hold when the penalty terms are computed using \eqref{E:Empza_biased}.
\begin{lem}
If at any iteration $n$, the penalty terms are computed using the estimated distribution $\hat p_n$, i.e.,
as per \eqref{E:Empza_biased},
then Proposition \ref{L:LQU} and Lemma \ref{L:LU} still hold.
\label{L:pLQL2}
\end{lem}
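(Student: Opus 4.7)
The plan is to verify separately that (a) Lemma~\ref{L:LU} and (b) the two parts of Proposition~\ref{L:LQU} continue to hold when $\hat\zeta$ is replaced by the replay-buffer penalty $\tilde\zeta$ of \eqref{E:Empza_biased}. The consistency statement will transfer essentially for free because its proof is path-by-path, whereas the boundedness statements will require controlling the additional bias that arises from substituting the empirical distribution $\hat p_n$ for the true distribution $p$ inside the penalty.

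For Lemma~\ref{L:LU}, I would observe that for any fixed sample path $\mathbf w$, the recursions \eqref{E:EmpQG_biased}--\eqref{E:EmpQL_biased} share the same penalty $\tilde\zeta_t^{\pi_\varphi}$ and the same state sequence. Since the upper recursion maximizes over actions while the lower recursion applies the fixed greedy policy $\pi_\varphi$, we have $\tilde Q^L_t \le \tilde Q^U_t$ path-wise, independently of whether the inner expectation in $\tilde\zeta$ uses $p$ or $\hat p_n$. An induction identical to the one in the proof of Lemma~\ref{L:LU} then gives $L_n \le U_n$ for all $n$.

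For Proposition~\ref{L:LQU}\ref{L:BQ'}, the empirical expectation still satisfies $|\hat\E_n[\varphi(\cdot)]| \le \|\varphi\|_\infty$, so the uniform bound $\|\tilde\zeta\|_\infty \le 2\|\varphi\|_\infty$ holds, and the backward inner-problem recursions deliver $\tilde Q^U_0, \tilde Q^L_0$ bounded in terms of $R_{\max}$ and $\|\varphi\|_\infty$ exactly as in the idealized case. The inductive argument bounding $Q_n$ and $Q'_n$ by the preceding $L_n, U_n$ iterates then carries over verbatim. For Proposition~\ref{L:LQU}\ref{L:LQU2}, the plan is to write $\tilde\zeta_t^{\pi_\varphi} = \hat\zeta_t^{\pi_\varphi} + \varepsilon_n$, where $\varepsilon_n := \E[\varphi(h(s_t,a_t,w),\pi_\varphi(h(s_t,a_t,w)))] - \hat\E_n[\varphi(h(s_t,a_t,w),\pi_\varphi(h(s_t,a_t,w)))]$. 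Since $\{w_t\}$ is i.i.d.\ on the finite set $\mathcal W$, the strong law of large numbers yields $\hat p_n \to p$ almost surely, and combined with the uniform bound on $\varphi = Q_{n+1}$ from part~\ref{L:BQ'}, $|\varepsilon_n| \le 2\|\varphi\|_\infty\,\|\hat p_n - p\|_1 \to 0$ almost surely. Hence $\tilde Q^U_0$ and $\tilde Q^L_0$ differ from their idealized counterparts $\hat Q^U_0$ and $\hat Q^L_0$ by a vanishing perturbation, which can be absorbed into the noise process already handled in the proof of Proposition~\ref{L:LQU}\ref{L:LQU2}.

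The main obstacle will be making this absorption rigorous given that $\varphi = Q_{n+1}$ is a moving target that changes at every iteration; one must guarantee that the bias bound is uniform along the iterate trajectory. This is precisely where part~\ref{L:BQ'} is essential: the sup-norm of $Q_{n+1}$ is bounded by a finite constant independent of $n$, so $|\varepsilon_n|$ is dominated by a deterministic multiple of $\|\hat p_n - p\|_1$ and therefore vanishes almost surely regardless of the moving $\varphi$. With this uniform control established, the stochastic approximation analysis from the idealized case goes through unchanged, yielding the desired conclusion that eventually $L_n(s,a) \le Q^*(s,a) + \eta$ and $U_n(s,a) \ge Q^*(s,a) - \eta$.
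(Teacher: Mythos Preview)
Your proposal follows the same overall strategy as the paper: Lemma~\ref{L:LU} transfers path-wise (the penalty cancels when one subtracts \eqref{E:EmpQL_biased} from \eqref{E:EmpQG_biased}, so $\tilde Q^U_{n,0}\ge \tilde Q^L_{n,0}$ regardless of whether $\hat p_n$ or $p$ is used), part~\ref{L:BQ'} of Proposition~\ref{L:LQU} carries over verbatim because its inductive argument only needs Lemma~\ref{L:LQU1}, and part~\ref{L:LQU2} is handled via a bias-plus-noise decomposition with the bias vanishing by the strong law of large numbers on $\hat p_n$.

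There is one slip worth flagging. Your decomposition $\tilde\zeta_t^{\pi_\varphi} = \hat\zeta_t^{\pi_\varphi} + \varepsilon_n$ with $\varepsilon_n = \E[\varphi(\cdot)] - \hat\E_n[\varphi(\cdot)]$ actually yields $\tilde\zeta = \zeta + \varepsilon_n$, i.e., a comparison to the \emph{exact} penalty $\zeta$, not to the $K$-sample estimate $\hat\zeta$. The difference between $\tilde Q^L_{n,0}$ and the idealized $\hat Q^L_{n,0}$ is not a deterministic perturbation: it also contains the randomness of the $K$ fresh samples, so ``absorbing it into the noise process'' as stated does not quite work. The paper avoids this by comparing directly to the exact-penalty version $Q^L_{n,0}$, writing
\[
\tilde Q^L_{n,0} = \underbrace{\bigl(\tilde Q^L_{n,0} - \E[\tilde Q^L_{n,0}]\bigr)}_{\text{mean-zero noise}} + \underbrace{\bigl(\E[\tilde Q^L_{n,0}] - \E[Q^L_{n,0}]\bigr)}_{\chi_n \to 0} + \E[Q^L_{n,0}],
\]
and using $\E[Q^L_{n,0}] \le Q^*$. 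The auxiliary sequence $Y^L_n$ is then set to converge to $Q^*(s,a)+\eta/2$ rather than $Q^*(s,a)$, absorbing the eventually-small bias $|\chi_n|\le \eta/2$. This is precisely the idea you outlined, with the comparison target corrected from $\hat Q$ to $Q$.
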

%\noindent Next, we state the almost sure convergence of LBQL with experience replay to the optimal action-value function $Q^*(s,a)$. The proof is similar to that of Theorem \ref{T:Conv}. 
%\vspace{-1em}

\begin{thm}[Convergence of LBQL with experience replay]
Under Assumption \ref{Assumption}, the following hold with probability 1:
 \vspace{-1em}
\begin{enumerate}[label={(\roman*)},itemindent=0em,leftmargin=*]
    \item  $ Q'_n(s,a)$ in Algorithm \ref{A:ER-LBQL} converges to the optimal action-value function $Q^*(s,a)$ for all state-action pairs $(s,a)$. 
    \label{T:ER1}
    % \vspace{-0.5em}
    \item The iterates $L_n(s,a), Q'_n(s,a),$ $U_n(s,a)$ in Algorithm \ref{A:ER-LBQL} converge to the optimal action-value function $Q^*(s,a)$ for all state-action pairs $(s,a)$. \label{T:ER2}
\end{enumerate}
\label{T:ConvERLBQL}
\end{thm}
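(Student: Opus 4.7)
The plan is to build on Theorem~\ref{T:Conv} and Lemma~\ref{L:pLQL2}, adding one new probabilistic ingredient: a strong law of large numbers (SLLN) for the empirical noise distribution. Since $\mathcal W$ is finite, $\{w_t\}$ is i.i.d., and each state is visited infinitely often under Assumption~\ref{Assumption}(ii), the SLLN gives $\hat p_n(w) \to p(w)$ almost surely for every $w \in \mathcal W$. Combined with uniform boundedness of the $Q$-iterates (Proposition~\ref{L:LQU}\ref{L:BQ'}, preserved in the replay setting by Lemma~\ref{L:pLQL2}), this implies that for any uniformly bounded $\varphi$, $\hat\E_n[\varphi(h(s,a,w),\pi(h(s,a,w)))]$ converges almost surely to $\E[\varphi(h(s,a,w),\pi(h(s,a,w)))]$; equivalently, the biased penalty $\tilde\zeta^{\pi_\varphi}_t$ in \eqref{E:Empza_biased} asymptotically agrees with the unbiased penalty $\zeta^{\pi_\varphi}_t$ from \eqref{E:za}.

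For part~\ref{T:ER1}, Lemma~\ref{L:pLQL2} already supplies the two ingredients that the proof of Theorem~\ref{T:Conv}\ref{T:1} relies on: (a) $Q_n,Q'_n$ remain uniformly bounded, and (b) for every $\eta>0$ there is a finite random $n_0$ such that $L_n \le Q^*+\eta$ and $U_n \ge Q^*-\eta$ for all $n \ge n_0$. The argument then carries over essentially verbatim. Beyond $n_0$, the projection \eqref{Q'update} displaces $Q_{n+1}$ from $Q^*$ by at most $\eta$, so $Q'_n$ behaves as a vanishing perturbation of a standard asynchronous Q-learning iterate driven by \eqref{Qupdate}. Classical convergence of asynchronous Q-learning, together with $\eta \downarrow 0$, yields $Q'_n \to Q^*$ almost surely.

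For part~\ref{T:ER2}, the substantive new claim is that $L_n$ and $U_n$ themselves converge to $Q^*$. The update \eqref{Uupdate} is a stochastic-approximation recursion $U_{n+1} = \Pi_{[-\rho,\infty]}[U_n + \beta_n(\tilde Q^U_0 - U_n)]$; conditional on the history, $\tilde Q^U_0$ decomposes into its conditional mean plus a bounded martingale-difference noise. By part~\ref{T:ER1}, $Q_{n+1} \to Q^*$ almost surely, so the penalty function $\varphi$ driving \eqref{E:EmpQG_biased} converges to $Q^*$, and by the SLLN the empirical expectation inside the penalty converges to the exact one. Hence the conditional mean of $\tilde Q^U_0$ converges almost surely to the exact-penalty upper-bound value with $\varphi = Q^*$, which by strong duality (Proposition~\ref{P:ATDR}\ref{P:SD}) equals $Q^*(s,a)$. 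The identical reasoning, applied to \eqref{E:EmpQL_biased} with $\pi_{Q^*}=\pi^*$, produces the same limit for $L_n$. Under Assumption~\ref{Assumption}(i), a Robbins--Monro argument with vanishing asymptotic bias concludes $L_n,U_n \to Q^*$ almost surely; squeezing $Q'_n$ between them recovers~\ref{T:ER1} as a corollary.

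The hard part is the three-way coupling between $Q'_n$, $L_n$, and $U_n$, now aggravated because the penalty expectation is itself estimated from a slowly-improving $\hat p_n$. The key technical point is that the extra ``distributional bias'' term $(\hat\E_n - \E)[\varphi]$ can be treated as an additional noise sequence in the stochastic approximation for the bounds: by Proposition~\ref{L:LQU}\ref{L:BQ'} it is uniformly bounded, by the SLLN it vanishes almost surely, and under $\sum_n \beta_n^2 < \infty$ it is absorbed without changing the limit. Once this observation is in place, the convergence analysis of Theorem~\ref{T:Conv}\ref{T:2} applies essentially as written, with $Q^*$ recovered as the common limit via strong duality.
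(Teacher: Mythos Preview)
Your proposal is correct and follows essentially the same route as the paper: invoke Lemma~\ref{L:pLQL2} to transport Proposition~\ref{L:LQU} and Lemma~\ref{L:LU} to the replay setting, then rerun the Theorem~\ref{T:Conv} arguments, with the SLLN for $\hat p_n$ supplying the vanishing distributional-bias term needed for part~\ref{T:ER2} (the paper itself writes only that the proof is ``similar to that of Theorem~\ref{T:Conv}'' and omits it). One small remark: the convergence $\hat p_n \to p$ follows directly from the i.i.d.\ structure of $\{w_t\}$ and does not require Assumption~\ref{Assumption}(ii) on state visits.
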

\vspace{-0.5em}
The proof is similar to that of Theorem \ref{T:Conv}, but using the observations collected in the buffer naturally results in an additional bias term in our analysis. The proof of Lemma \ref{L:pLQL2} shows that as we learn the distribution of the noise, this bias term goes to zero and our original analysis in the unbiased case continues to hold.

Notice that the results in part \ref{T:ER2} of the theorem are, in a sense, stronger than that of Theorem \ref{T:Conv}\ref{T:2}. While both achieve asymptotic convergence of the lower and upper bounds to the optimal action-value function,  Theorem \ref{T:ConvERLBQL}\ref{T:ER2} does not require computing the penalty with the true distribution, i.e., using \eqref{E:za}. This is because in the experience replay version, the distribution of the noise random variables is also learned.

\section{Numerical Experiments}
\label{S:NE}
In our numerical experiments we make slight modifications to Algorithm \ref{A:ER-LBQL}, which help to reduce its computational requirements. 
A detailed description of all changes is included in Appendix \ref{Appendix:LBQLwERSV}.
We also open-source a Python package\footnote{\href{https://github.com/ibrahim-elshar/LBQL_ICML2020}{https://github.com/ibrahim-elshar/LBQL\_ICML2020}.} for LBQL that reproduces all experiments and figures presented in this paper. We compare LBQL with experience replay with several algorithms: Q-learning (QL), double Q-learning (Double-QL), speedy Q-learning (SQL), and bias-corrected Q-learning (BCQL) \citep{hasselt2010double,azar2011speedy,lee2019bias}. 
The environments that we consider are summarized below. 
Detailed description of the environments, the parameters used for the five algorithms, and sensitivity analysis are deferred to Appendix \ref{Appendix:Experiments}.

    \textbf{Windy Gridworld (WG).} This is a well-known variant of the standard gridworld problem discussed in \citet{sutton2018reinforcement}. There is an \emph{upward wind} with a random intensity. The agent moves extra steps in the wind direction whenever it reaches an affected square. The reward is $-1$ until the goal state is reached, and the reward is 0 thereafter.
    \\[0.5em]
    \textbf{Stormy Gridworld (SG).} We then consider a new domain that adds the additional complexity of rain and \emph{multi-directional} wind to windy gridworld. The location of the rain is random and when it occurs, puddles that provide negative rewards are created. The reward is similar to that of WG, except that puddle states provide a reward of $-10$.
\begin{figure*}[htpb] 
\centering
	\begin{subfigure}{0.33\textwidth}
		\includegraphics[width=\linewidth]{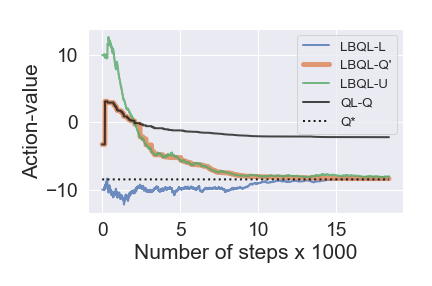}
		\caption{WG} \label{F:WGBounds}
	\end{subfigure}
	\hspace{-0.2cm}
	\begin{subfigure}{0.33\textwidth}
		\includegraphics[width=\linewidth]{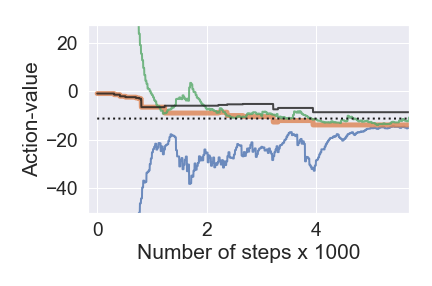}
		\caption{SG} \label{F:RBounds}
	\end{subfigure}
	\begin{subfigure}{0.33\textwidth}
		\includegraphics[width=\linewidth]{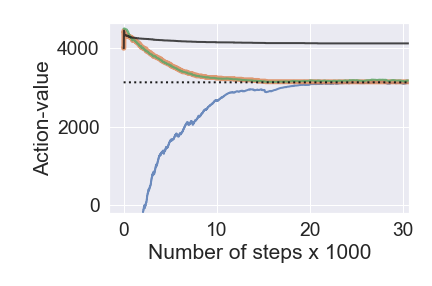}
		\caption{2-CS-R} \label{F:CSRbounds}
	\end{subfigure}
	\begin{subfigure}{0.33\textwidth}
		\includegraphics[width=\linewidth]{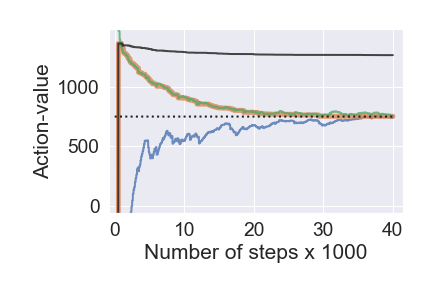}
		\caption{2-CS} \label{F:CSbounds}
	\end{subfigure}
	\hspace{-0.2cm}
		\begin{subfigure}{0.33\textwidth}
		\includegraphics[width=\linewidth]{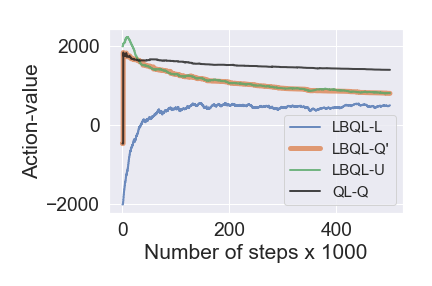}
		\caption{4-CS} \label{F:CSbounds4s}
	\end{subfigure}
\caption{Illustration of LBQL Upper and Lower Bounds.} \label{F:Bounds}
\end{figure*}
\\[0.5em]
\textbf{Repositioning in Two-Location Car-sharing (2-CS-R).} Our next benchmark is a synthetic problem of balancing an inventory of cars by repositioning them in a car-sharing platform with two stations \citep{he2019robust}. The actions are to decide on the number of cars to be repositioned from one station to the other before random demand is realized. All rentals are one-way (i.e., rentals from station A end up at B, and vice-versa). The goal is to maximize revenue for a fixed rental price subject to lost sales and repositioning costs.
\\[0.5em]
\textbf{Pricing in Two-Location Car-sharing (2-CS).} Here, we consider the benchmark problem of spatial dynamic pricing on a car-sharing platform with two stations, motivated partially by \cite{bimpikis2019spatial}. The actions are to set a price at each station, which influence the station's (stochastic) demand for rentals. Rentals are one-way and the goal is to maximize revenue under lost sales cost.
 \\[0.5em]
\textbf{Pricing in Four-Location Car-sharing (4-CS).} The final benchmark that we consider is a variant of the above pricing problem with four stations. Now, however, we consider both one way and return trips at each station.
In this case, we have two sources of randomness: the noise due to stochastic demand and the noise due to the random distribution of fulfilled rentals between the stations.
\\[0.5em]
First, we illustrate conceptually in Figure \ref{F:Bounds} how the upper and lower bounds of LBQL can ``squeeze'' the Q-learning results toward the optimal value (the plots show a particular state-action pair $(s,a)$ for illustrative reasons). For example, in Figure \ref{F:WGBounds}, we observe that the LBQL iterates (orange) match the Q-learning iterates (solid black) initially, but as the upper bound (green) becomes better estimated, the LBQL iterates are pushed toward the optimal value (dotted black).
We see that even though the same hyperparameters are used between LBQL and QL, the new approach is able to quickly converge. 
In the 4-CS example, Figure \ref{F:CSbounds4s}, $Q^*$ is not shown since it is computationally difficult to obtain, but the gap between the upper and lower bounds, along with Theorem \ref{T:ConvERLBQL}\ref{T:ER2}, suggest that LBQL is converging faster than standard Q-learning.

%Here, the gap between the upper and lower bounds may give us an idea about the sub-optimality of the action-value iterate. 
%Though the gap between the upper and lower bounds did not close after training for 500,000 steps,
%we see that the gap between the QL and LBQL iterates becomes larger during the course of training. LBQL converges at a faster rate, while standard Q-learning is hindered by maximization bias. 
% \drj{I wouldn't say things like: ``Q-learning is hindered by maximization bias'' this because we don't know for sure. You can use words like ``suggest'' to be more accurate about your certainty.}

\begin{figure*}[htb] 
\centering
	\begin{subfigure}{0.33\textwidth}
		\includegraphics[width=\linewidth]{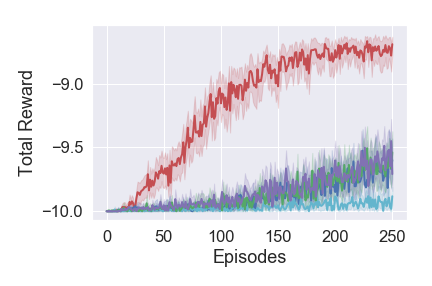}
		\vspace{-10pt}
		\caption{Performance (WG)} \label{F:WGPerformance}
	\end{subfigure}	
	\begin{subfigure}{0.33\textwidth}
		\includegraphics[width=\linewidth]{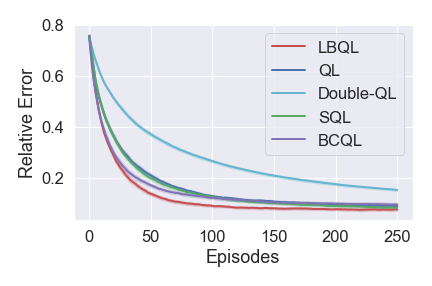}
		\vspace{-10pt}
		\caption{Relative Error (WG)} \label{F:WGRE}
	\end{subfigure}
	\begin{subfigure}{0.33\textwidth}
		\includegraphics[width=\linewidth]{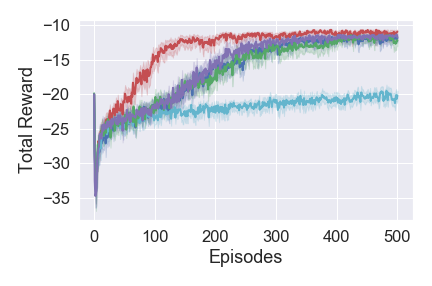}
				\vspace{-10pt}
		\caption{Performance (SG)} \label{F:RGWPerformance}
	\end{subfigure}
		\begin{subfigure}{0.33\textwidth}
		\includegraphics[width=\linewidth]{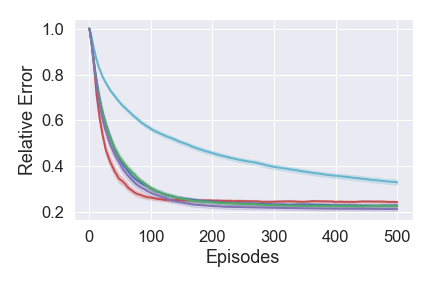}
				\vspace{-10pt}
		\caption{Relative Error (SG)} \label{F:RGWRE}
	\end{subfigure}
		\begin{subfigure}{0.33\textwidth}
		\includegraphics[width=\linewidth]{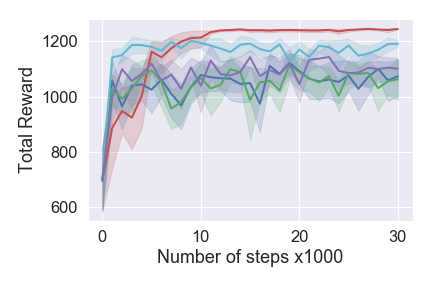}
		\vspace{-10pt}
		\caption{Performance (2-CS-R)} \label{F:CSRP}
	\end{subfigure}
	\begin{subfigure}{0.33\textwidth}
		\includegraphics[width=\linewidth]{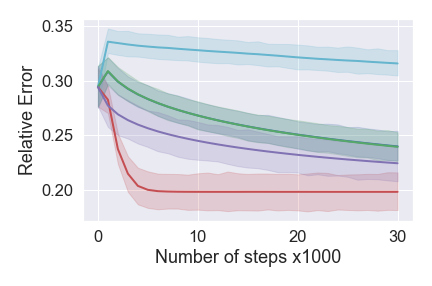}
		\vspace{-10pt}
		\caption{Relative Error (2-CS-R)} \label{F:CSRRE}
	\end{subfigure}
	\begin{subfigure}{0.33\textwidth}
		\includegraphics[width=\linewidth]{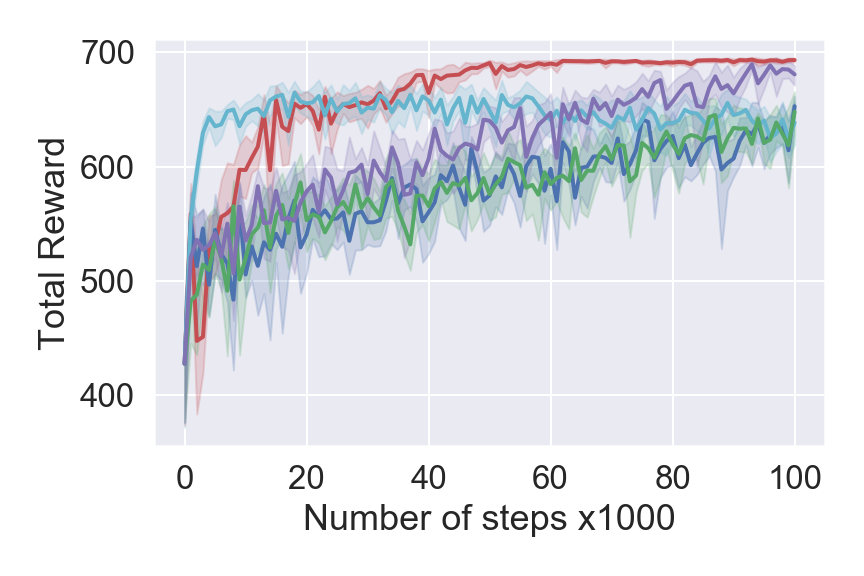}
		\vspace{-10pt}
		\caption{Performance (2-CS)} \label{F:CSP}
	\end{subfigure}
% 	\hspace{1em}
	\begin{subfigure}{0.33\textwidth}
		\includegraphics[width=\linewidth]{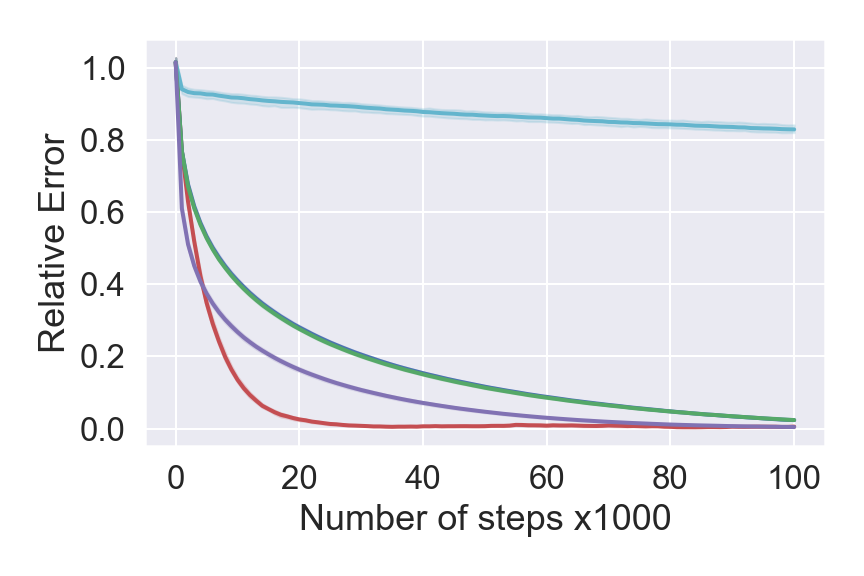}
		\vspace{-10pt}
		\caption{Relative Error (2-CS)} \label{F:CSRE}
	\end{subfigure}
% 	\hspace{1em}
	\begin{subfigure}{0.33\textwidth}
		\includegraphics[width=\linewidth]{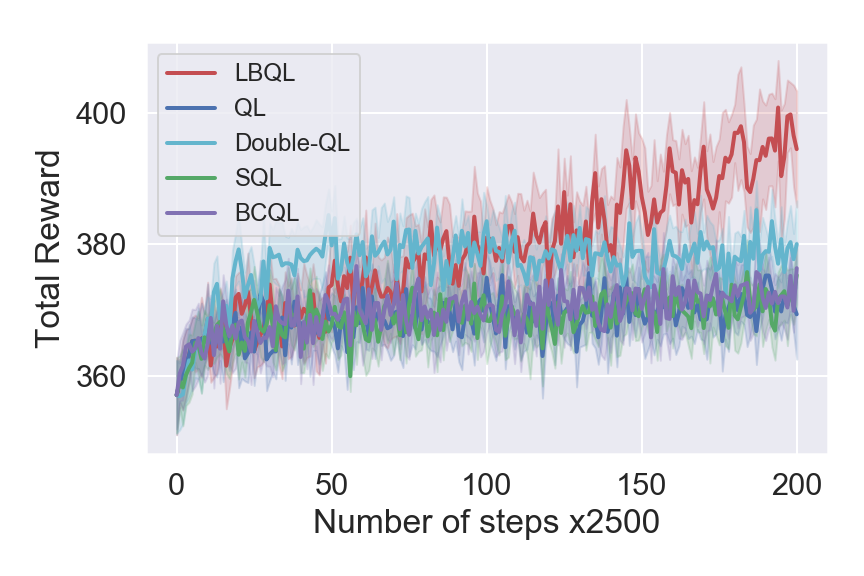}
		\vspace{-10pt}
		\caption{Performance (4-CS)} \label{F:CS4}
	\end{subfigure}
	\caption{Results from Numerical Experiments.}
		\label{F:all}
\end{figure*}	
The full results (with $95\%$ confidence intervals) of the numerical experiments are shown in Figure \ref{F:all}.
LBQL drastically outperforms the other algorithms in terms of the performance curve on the gridworld domains, but for the car-sharing problems, double Q-learning is superior in the first 20,000 steps. Afterwards, LBQL catches up and remains the best performing algorithm. From the relative error plots (which measure the percent error, in $l_2$-norm, of the approximate value function with the true optimal value function, i.e., $\|V_n - V^* \|_2/\|V^*\|_2$), we see that LBQL has the steepest initial decline. In windy gridworld and car-sharing, LBQL outperforms other algorithms in terms of relative error, but BCQL and SQL achieve slightly lower relative error than LBQL for stormy gridworld.

We also conducted a set of sensitivity analysis experiments, where we varied the learning rate and exploration hyperparameters across all algorithms (results are given in Appendix \ref{Appendix:SA}). We examine the number of iterations and CPU time needed to reach 50\%, 20\%, 5\%, and 1\% relative error. The results show that LBQL outperforms BCQL, SQL, and QL in terms of both iterations and CPU time in reaching 20\%, 5\%, and 1\% relative error across all 15 hyperparameter settings that we tested. For the case of 50\% relative error, BCQL outperforms LBQL in five out of 15 cases. This indicates that LBQL is significantly more robust to hyperparameter settings than the other algorithms.
%It is worth mentioning that sensitivity analyses conducted in our experiments (see Appendix \ref{Appendix:SA}) 
%show that our algorithm is more robust to the learning rate and exploration hyperparameters than standard Q-learning. 
Roughly speaking, this robustness might be attributed to the ``approximate planning'' aspect of the algorithm, where lower and upper bounds are computed.
\section{Conclusion}
\label{S:Conclusion}
In this paper, we present the LBQL algorithm and prove its almost sure convergence to the optimal action-value function. We also propose a practical extension of the algorithm that uses an experience replay buffer. Numerical results illustrate the rapid convergence of our algorithm empirically when compared to a number of well-known variants of Q-learning on five test problems. LBQL is shown to have superior performance, robustness against learning rate and exploration strategies, and an ability to mitigate maximization bias.

Interesting future work is the extension of our new framework to the model-based RL setting, where the transition function $f$ is learned while the policy is optimized.
Other interesting future work includes looking beyond the tabular case and adapting our algorithm to the setting of value function approximations, such as DQN \citep{mnih2013playing}.

\bibliography{references}
\bibliographystyle{icml2020}
\clearpage

\appendix
 \label{A:A}
 \counterwithin{lem}{section}
\onecolumn
\begin{center}
\Large Appendix to \emph{Lookahead-Bounded Q-Learning}
\end{center}
\section{Proofs}
\label{Appendix:Proofs}
\allowdisplaybreaks

\subsection{Proof of Proposition \ref{P:EmpDR}}
\label{Appendix:ProofPropEmpDR}
\begin{proof}
We provide a proof that is similar to that of Proposition 2.3 (iv) of \cite{brown2010information} but for the case of the absorption time formulation of an infinite horizon problem.
Here, we define a policy $\pi:=\{\pi_t \}_{t\ge0}$ as a sequence of functions, that maps from $\{w_t\}_{t\ge1}$ to feasible actions. We may also use stationary policies where $\pi_t$ is the same for all $t$ and only depends on the current state $s_t$. 
Let $\mathbb G=\{\mathcal G_t \}_{t\ge 0}$ be the perfect information relaxation of the natural filtration $\mathbb F=\{ \mathcal F_t\}_{t\ge 0}$. Under $\mathbb G$, we have $\mathcal G_t =\mathcal F$, i.e.,  we have access to the entire future uncertainties at each $t$. 
Define by $\Pi_{\mathbb G}$ the set of policies that includes the policies that have access to future uncertainties in addition to nonanticipative policies. 
Let $\mathbb{\hat G}$ be a relaxation of $\mathbb G$ such that in addition to what is known under $\mathbb G$ the estimate penalty terms $\hat \zeta_t^{\pi_{\varphi}} (s_t, a_t, w_{t+1} \, | \, \varphi)$ are revealed at time $t$. 

We first prove $\E[\hat Q^L_0(s,a)] \le Q^*(s,a) $. For an admissible policy $\pi$, we have
\begin{equation*}
    \begin{split}
        \E[\hat Q^L_0(s,a)] & \myeq{(a)} \E \left[ \sum_{t=0}^{\tau -1}   r(s_t, \pi(s_t)) - \hat \zeta_t^{\pi}(s_t,a_t,w_{t+1} \, | \, \varphi)   \, | \, s_0=s, \, a_0=a \right] \\ & 
        \myeq{(b)}\E \left[  \sum_{t=0}^{\tau-1} (r(s_t, \pi(s_t)) -   \hat \zeta_t^{\pi} (s_t, a_t, w_{t+1} \, | \, \varphi)) \, \mathbbm{1}_{\{\tau< \infty\}} \, | \, s_0=s, \, a_0=a  \right] \\ &
        \myeq{(c)} \sum_{\tau'=1}^{\infty }\E \left[ \sum_{t=0}^{\tau'-1} (r(s_t, \pi(s_t)) -   \hat \zeta_t^{\pi} (s_t, a_t, w_{t+1} \, | \, \varphi)) \, \mathbbm{1}_{\{\tau=\tau'\}} \, | \, s_0=s, \, a_0=a  \right] \\ & 
        \myeq{(d)} \sum_{\tau'=1}^{\infty }\E \left[ \sum_{t=0}^{\tau'-1} (r(s_t, \pi(s_t)) -  \E[ \hat \zeta_t^{\pi} (s_t, a_t, w_{t+1} \, | \, \varphi) \, | \, \mathcal G_t]) \, \mathbbm{1}_{\{\tau=\tau'\}} \, | \, s_0=s, \, a_0=a  \right] \\ & 
        \myeq{(e)} \sum_{\tau'=1}^{\infty }\E \left[ \sum_{t=0}^{\tau'-1} (r(s_t, \pi(s_t)) -   \zeta_t^{\pi} (s_t, a_t, w_{t+1} \, | \, \varphi)) \, \mathbbm{1}_{\{\tau=\tau'\}} \, | \, s_0=s, \, a_0=a  \right] \\ & 
       \myeq{(f)} \E \left[  \sum_{t=0}^{\tau-1} (r(s_t, \pi(s_t)) -   \zeta_t^{\pi} (s_t, a_t, w_{t+1} \, | \, \varphi)) \, \mathbbm{1}_{\{\tau< \infty\}} \, | \, s_0=s, \, a_0=a  \right] \\ &
       \myeq{(g)} \E \left[  \sum_{t=0}^{\tau-1} (r(s_t, \pi(s_t)) -   \zeta_t^{\pi} (s_t, a_t, w_{t+1} \, | \, \varphi)) \, | \, s_0=s, \, a_0=a  \right] \\ &
       \le Q^*(s,a)
    \end{split}
\end{equation*}    
Equality (a) follows from the definition of $\hat Q_0(s,a)$ and equality (b) follows since $\tau$ has finite mean and $r$ and $\varphi$ are uniformly bounded. Equalities (c) and (d) follow from the law of total expectations. Equality (e) follows from Lemma A.1 in \cite{brown2010information} and from the estimated penalty terms being unbiased, i.e.,  $\E[\hat \zeta_t^{\pi_G^*} (s_t, a_t, w_{t+1} \, | \, \varphi) \, | \, \mathcal G_t]=  \zeta_t^{\pi_G^*} (s_t, a_t, w_{t+1} \, | \, \varphi)$. Equalities (f) and (g) follow by the law of total expectation and $\tau$ being almost surely finite stopping time, respectively. The inequality follows since the expected value of the penalty terms for a feasible policy is zero and the action-value function of a feasible policy, $Q^\pi(s,a)$, is less than $Q^*(s,a)$.

Now, we prove $Q^*(s,a) \le \E[\hat Q^U_0(s,a)]$. 
Let $\pi_G^*$ be the optimal solution for the dual problem,
\begin{equation}
\max_{\pi_G \in \Pi_\mathbb{ G}} \E \left [ \sum_{t=0}^{\tau -1} (r(s_t, \pi_G) -  \zeta_t^{\pi_G} (s_t, a_t, w_{t+1} \, | \, \varphi)) \, | \, s_0=s, \, a_0=a  \right]. 
\label{E:DualEP}
\end{equation}
We have,
\begin{equation*}
    \begin{split}
        \E[\hat Q^U_0(s,a)] & \myeq{(a)} \E \left[ \max_{\mathbf a} \left\{\sum_{t=0}^{\tau -1}     r(s_t, a_t) - \hat \zeta_t^{\pi_\varphi}(s_t,a_t,w_{t+1} \, | \, \varphi)  \right \} \, | \, s_0=s, \, a_0=a \right] \\ & 
        \myeq{(b)}\max_{\pi_G \in \Pi_\mathbb{\hat G}} \E \left [ \sum_{t=0}^{\tau -1} (r(s_t, \pi_G) -  \hat \zeta_t^{\pi_G} (s_t, a_t, w_{t+1} \, | \, \varphi)) \, | \, s_0=s, \, a_0=a  \right] \\ &
        \ge \E \left[  \sum_{t=0}^{\tau-1} (r(s_t, \pi_G^*) -   \hat \zeta_t^{\pi_G^*} (s_t, a_t, w_{t+1} \, | \, \varphi)) \, | \, s_0=s, \, a_0=a  \right] \\ &
       \myeq{(c)}\E \left[  \sum_{t=0}^{\tau-1} (r(s_t, \pi_G^*) -   \hat \zeta_t^{\pi_G^*} (s_t, a_t, w_{t+1} \, | \, \varphi)) \, \mathbbm{1}_{\{\tau< \infty\}} \, | \, s_0=s, \, a_0=a  \right] \\ &
       \myeq{(d)} \sum_{\tau'=1}^{\infty }\E \left[ \sum_{t=0}^{\tau'-1} (r(s_t, \pi_G^*) -   \hat \zeta_t^{\pi_G^*} (s_t, a_t, w_{t+1} \, | \, \varphi)) \, \mathbbm{1}_{\{\tau=\tau'\}} \, | \, s_0=s, \, a_0=a  \right] \\ & 
       \myeq{(e)} \sum_{\tau'=1}^{\infty }\E \left[ \sum_{t=0}^{\tau'-1} (r(s_t, \pi_G^*) -  \E[ \hat \zeta_t^{\pi_G^*} (s_t, a_t, w_{t+1} \, | \, \varphi) \, | \, \mathcal G_t]) \, \mathbbm{1}_{\{\tau=\tau'\}} \, | \, s_0=s, \, a_0=a  \right] \\ & 
        \myeq{(f)} \sum_{\tau'=1}^{\infty }\E \left[ \sum_{t=0}^{\tau'-1} (r(s_t, \pi_G^*) -   \zeta_t^{\pi_G^*} (s_t, a_t, w_{t+1} \, | \, \varphi)) \, \mathbbm{1}_{\{\tau=\tau'\}} \, | \, s_0=s, \, a_0=a  \right] \\ & 
       \myeq{(g)} \E \left[  \sum_{t=0}^{\tau-1} (r(s_t, \pi_G^*) -   \zeta_t^{\pi_G^*} (s_t, a_t, w_{t+1} \, | \, \varphi)) \, \mathbbm{1}_{\{\tau< \infty\}} \, | \, s_0=s, \, a_0=a  \right] \\ &
        \myeq{(h)} \E \left[  \sum_{t=0}^{\tau-1} (r(s_t, \pi_G^*) -   \zeta_t^{\pi_G^*} (s_t, a_t, w_{t+1} \, | \, \varphi)) \, | \, s_0=s, \, a_0=a  \right] \\ &
        \myeq{(i)} \max_{\pi_G \in \Pi_\mathbb{ G}} \E \left [ \sum_{t=0}^{\tau -1} (r(s_t, \pi_G) -  \zeta_t^{\pi_G} (s_t, a_t, w_{t+1} \, | \, \varphi)) \, | \, s_0=s, \, a_0=a  \right] \\ & \ge Q^*(s,a).
    \end{split}
\end{equation*}
Equality (a) and (b) follow from the definition of $\hat Q^U_0(s,a)$ and since $ \mathbb {\hat G}$ is a relaxation of the perfect information relaxation $\mathbb G$, which allows us to interchange the maximum and the expectation. The first inequality follows because $\pi_G^* \in \Pi_{\mathbb{\hat G}}$ since $\Pi_{\mathbb G} \subseteq \Pi_{\mathbb {\hat G}}$. 
Equality (c) follows since $r$ and $\varphi$ are uniformly bounded and $\tau$ has finite mean.
Equalities (d) and (e) follow from the law of total expectations. Equality (f) follows from Lemma A.1 in \cite{brown2010information} and from the estimated penalty terms being unbiased, i.e.,  $\E[\hat \zeta_t^{\pi_G^*} (s_t, a_t, w_{t+1} \, | \, \varphi) \, | \, \mathcal G_t]=  \zeta_t^{\pi_G^*} (s_t, a_t, w_{t+1} \, | \, \varphi)$. Equalities (g) and (h) follow by the law of total expectation and $\tau$ being almost surely finite stopping time, respectively. Equality (i) follows since by definition $\pi_G^*$ is the optimal solution of \eqref{E:DualEP}. The last inequality follows by weak duality (Proposition \ref{P:ATDR}\ref{P:WD}).
\end{proof}
% }

First, we state a technical lemma that is used in the proof of Proposition \ref{L:LQU} and Lemma \ref{L:LU}.

\begin{lem}
For all $n=1,2,\ldots$, if $L_{n-1}(s,a) \le U_{n-1}(s,a)$ and $Q'_{n-1} \in \mathcal{Q}$ then $L_{n}(s,a) \le U_{n}(s,a)$ for all $(s,a) \in \mathcal{S} \times \mathcal{A}$.
\label{L:LQU1} 
\end{lem}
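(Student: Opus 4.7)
The plan is to reduce to the single pair updated at iteration $n-1$, establish a pathwise inequality between the inner-DP solutions, and then verify that the stochastic-approximation updates together with their projections preserve the ordering. First, only the pair $(s_{n-1},a_{n-1})$ is modified between steps $n-1$ and $n$: for every other $(s,a)$, we have $L_n(s,a)=L_{n-1}(s,a)\le U_{n-1}(s,a)=U_n(s,a)$ directly from the hypothesis, so it suffices to check the inequality at $(s_{n-1},a_{n-1})$. The assumption $Q'_{n-1}\in\mathcal Q$ and update \eqref{Qupdate} imply $Q_n\in\mathcal Q$ (bounded and vanishing at $\tilde s$), so setting $\varphi=Q_n$ yields well-defined penalties $\hat\zeta_t^{\pi_\varphi}$.

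The crux is the pathwise inequality
\[
\hat Q^L_0(s_{n-1},a_{n-1}) \;\le\; \hat Q^U_0(s_{n-1},a_{n-1})
\]
on the \emph{common} sample path $\mathbf w$, which I would prove by backward induction on $t=\tau,\tau-1,\ldots,0$. The base case is trivial since $\hat Q^U_\tau\equiv\hat Q^L_\tau\equiv 0$. For the inductive step, both recursions \eqref{E:EmpQG} and \eqref{E:EmpQL} share the same reward $r(s_t,a_t)$ and the same penalty $\hat\zeta_t^{\pi_\varphi}(s_t,a_t,w_{t+1}\,|\,\varphi)$ (computed from one fresh batch of $K$ samples, with the same $\varphi$ and the same greedy policy $\pi_\varphi$), so the difference collapses to
\[
\hat Q^U_t(s_t,a_t) - \hat Q^L_t(s_t,a_t) \;=\; \max_a \hat Q^U_{t+1}(s_{t+1},a) - \hat Q^L_{t+1}(s_{t+1},\pi_\varphi(s_{t+1})),
\]
which is nonnegative by the inductive hypothesis together with $\hat Q^U_{t+1}(s_{t+1},\pi_\varphi(s_{t+1}))\le\max_a\hat Q^U_{t+1}(s_{t+1},a)$.

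Given this pathwise bound and $\beta_{n-1}\in[0,1]$, the pre-projection quantities in \eqref{Uupdate} and \eqref{Lupdate} are convex combinations of two ordered pairs and therefore remain ordered. A short case analysis then shows that the projections $L_n=\Pi_{[-\infty,\rho]}[\cdot]$ and $U_n=\Pi_{[-\rho,\infty]}[\cdot]$ cannot flip the order: clipping $L_n$ from above by $\rho$ and $U_n$ from below by $-\rho$ can only shrink the gap, and the two clipped values meet at worst where $-\rho\le\rho$.

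The main subtlety is the pathwise ordering of $\hat Q^L_0$ and $\hat Q^U_0$; it hinges on using the \emph{same} sample path together with the \emph{same} penalty, so the penalty terms cancel in the difference. Without this synchronization, the inequality would only be guaranteed in expectation (Proposition \ref{P:EmpDR}), which would not be strong enough to preserve $L_n\le U_n$ almost surely. Everything else is bookkeeping: reduction to the updated pair, a two-line induction, and case analysis on the one-sided projections.
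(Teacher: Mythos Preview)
Your proposal is correct and follows essentially the same approach as the paper: both arguments hinge on the pathwise inequality $\hat Q^L_0 \le \hat Q^U_0$, which holds because the two inner problems share the same sample path and penalty, and the policy $\pi_\varphi$ used in the lower-bound recursion is feasible for the maximization in the upper-bound recursion. Your version is somewhat more explicit---you spell out the backward induction on $t$ and the case analysis for the one-sided projections, whereas the paper compresses these into the single remark that ``the actions selected by $\pi_{Q_n}$ are feasible in \eqref{E:EmpQG}''---but the substance is identical.
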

\begin{proof}
Fix an $(s,a) \in \mathcal S \times \mathcal A$.
Note that the optimal values of the inner problems in \eqref{E:EmpQG} and \eqref{E:EmpQL}, $\hat Q^U_{0}(s,a)$ and $\hat Q^L_{0}(s,a)$ respectively, are computed using the same sample path $\mathbf w$ and for each period within the inner DP, the same batch of samples is used for estimating the expectation in both the upper and lower bound problems. 
For clarity, let us denote the values of $\hat Q^L_{0}(s,a)$ and $\hat Q^U_{0}(s,a)$ at iteration $n=1,2,\ldots$ by $\hat Q^L_{n,0}(s,a)$ and $\hat Q^U_{n,0}(s,a)$, respectively.
Assume $\alpha_n(s,a) \le 1$ for all $n$.
We provide a proof by induction. For $n=1$, we have:
\begin{equation*}
Q_{1}(s_0,a_0) = Q'_0(s_0,a_0) 
+ \alpha_0(s_0, a_0) \Bigl[ r(s_0,a_0)+ \gamma \max_a Q'_0(s_{1},a) - Q'_0(s_{0},a_0) \Bigr ].
\end{equation*}
Since 
the rewards $r(s,a)$ are uniformly bounded, $0< \gamma < 1$  and $|Q'_0(s,a)|\le \rho$ then $Q_{1}$ is bounded. Set $\varphi=Q_1$, since the actions selected by the policy $\pi_{Q_{1}}$ are feasible in \eqref{E:EmpQG}, we have 
\[
\hat Q_{1,0}^U(s,a) - \hat Q_{1,0}^L(s,a) \ge 0,
\]
and with $L_{0}(s,a) \le U_{0}(s,a)$, it follows that $L_{1}(s,a) \le U_{1}(s,a)$.
A similar proof can be used to show the inductive case also holds at iteration $n$,
\begin{equation*}
\begin{split}
  Q_{n}(&s_{n-1},a_{n-1}) = Q'_{n-1}(s_{n-1},a_{n-1}) \\ &
  + \alpha_{n-1}(s_{n-1}, a_{n-1}) \Bigl[ r(s_{n-1},a_{n-1})+ \gamma \max_a Q'_{n-1}(s_{n},a) - Q'_{n-1}(s_{n-1},a_{n-1}) \Bigr ].
  \end{split}
\end{equation*}
By the inductive hypothesis, we have $Q'_{n-1} \in \mathcal Q$ and $L_{n-1}(s,a) \le U_{n-1}(s,a)$. Then, similar to the base case, we have $Q_{n} \in \mathcal Q$ and 
$
\hat Q_{n,0}^U(s,a) - \hat Q_{n,0}^L(s,a) \ge 0.
$
Therefore, $L_{n}(s,a) \le U_{n}(s,a)$. Since our choice of $(s,a)$ was arbitrary then the result follows for all $(s,a) \in \mathcal S \times \mathcal A$.
\end{proof}

\subsection{Proof of Proposition \ref{L:LQU}}
\label{Appendix:LQU}
\begin{proof}
Part \ref{L:BQ'}: First, note that by \eqref{Uupdate} and  \eqref{Lupdate} the upper and lower bound estimates $U_n(s,a)$ and $L_n(s,a)$ are bounded below and above by 
% $U_0(s,a)$ and $L_0(s,a)$
$\rho$ and $-\rho$
respectively for all $(s,a) \in \mathcal S \times \mathcal A$ and for all n, 
where $\rho = R_{\max}/(1-\gamma)$. 
We assume in this proof that $\alpha_n(s,a) \le 1$ for all $n$. 
Let $\tilde{L}_n= \max_{(s,a)} L_n(s,a)$ and $\tilde{U}_n= \max_{(s,a)} U_n(s,a)$.
We claim that for every iteration $n$, we have that for all $(s,a)$,
\begin{equation}
     \bar{L}_n \le Q_n(s,a) \le \bar{U}_n \quad \text{and} \quad \bar{L}'_n \le Q'_n(s,a) \le \bar{U}'_n
     \label{C:LQ'U}
\end{equation}
where
\begin{align}
    \bar{L}_n &=\min \textstyle \left\{\tilde{U}_{n-1}(1+\gamma),\ldots, \tilde{U}_1 \sum_{i=0}^{n-1} \gamma^i, -M \sum_{i=0}^{n} \gamma^i \right\}
    \label{E:Lbar} 
     \\
    \bar{U}_n &= \textstyle \max \left\{\tilde L_{n-1}(1+\gamma),\ldots, \tilde L_1\sum_{i=0}^{n-1} \gamma^i, M\sum_{i=0}^{n} \gamma^i \right\}, \label{E:Ubar} \\
    \bar{L}'_n &=\min \textstyle \left\{\tilde{U}_n, \tilde{U}_{n-1}(1+\gamma),\ldots, \tilde{U}_1 \sum_{i=0}^{n-1} \gamma^i, -M \sum_{i=0}^{n} \gamma^i \right\},\label{E:Lbar'} \\
    \bar{U}'_n &= \textstyle \max \left\{\tilde L_n, \tilde L_{n-1}(1+\gamma),\ldots, \tilde L_1\sum_{i=0}^{n-1} \gamma^i, M\sum_{i=0}^{n} \gamma^i \right\}, \label{E:Ubar'} 
\end{align}
and $ M $ is a finite positive scalar defined as
$ M= \max\left\{R_{\max}, \; \max_{(s,a)} Q_0(s,a)\right\}.$

The result follows from the claim in \eqref{C:LQ'U}.
To see this note that at any iteration $n$, $\bar{L}_n$ and $\bar{L}'_n$ are bounded below by $-\rho \sum_{i=0}^{n}\gamma^i$ since each term inside the minimum of \eqref{E:Lbar} and \eqref{E:Lbar'} is bounded below by $-\rho \sum_{i=0}^{n}\gamma^i$. 
As $n \rightarrow \infty$, we have
\begin{equation}
\frac{-\rho}{1-\gamma} \le \liminf_{n \rightarrow \infty} \bar{L}_n \quad \text{and} \quad \frac{-\rho}{1-\gamma} \le \liminf_{n \rightarrow \infty} \bar{L}'_n. 
\label{E:LIMINFLBAR}
\end{equation}
An analogous argument 
yields
\begin{equation}
\limsup_{n \rightarrow \infty} \bar{U}_n \le \frac{\rho}{1-\gamma} \quad \text{and} \quad \limsup_{n \rightarrow \infty} \bar{U}'_n \le \frac{\rho}{1-\gamma}. 
\label{E:LIMSUPUBAR}
\end{equation}
Boundedness of $Q_n(s,a)$ and $Q'_n(s,a)$ for all $(s,a) \in \mathcal S \times \mathcal A$ follows from \eqref{C:LQ'U}, \eqref{E:LIMINFLBAR} and \eqref{E:LIMSUPUBAR}.

\noindent Now, we prove our claim in \eqref{C:LQ'U} by induction.
Since Algorithm \ref{A:IV-LBQL} is asynchronous,  at the $n$th iteration, the updates for the action-value iterates for $(s,a)$, $Q_{n+1}(s,a)$ and $Q'_{n+1}(s,a)$, are either according to \eqref{Qupdate} and \eqref{Q'update} (case 1) or set equal to $Q_{n}(s,a)$ and $Q'_{n}(s,a)$ respectively (case 2).

We first focus on $Q'_n(s,a) \le \bar{U}_n$ part of \eqref{C:LQ'U}, since $\bar{L}'_n \le Q'_n(s,a)$ and $ \bar{L}_n \le Q_n(s,a) \le \bar{U}_n$ proceed in an analogous manner. For $n=1$, we have $Q'_0(s,a)= Q_0(s,a)$, so if the update is carried out as in case 1, 
\begin{equation*}
\begin{aligned}
  Q_1(s,a) &= \textstyle (1- \alpha_0(s,a)) \, Q'_0(s,a) + \alpha_0(s,a) \, [r(s,a) + \gamma \max_a Q'_0(s',a)] \\
  &\le (1- \alpha_0(s,a)) M + \alpha_0(s,a) M + \alpha_0(s,a) \gamma M \\
  %&\le M- \alpha_0(s,a) M + \alpha_0(s,a) M +  \gamma M \\
  &\le M (1+ \gamma) 
  \end{aligned}
\end{equation*}
so $Q'_1(s,a) \le \max \{ L_1(s,a), \min \{ U_1(s,a), M (1+ \gamma)  \} \}$.
Now consider the case where $U_1(s,a) \le M(1+ \gamma)$. Since $Q_0$ is bounded by $\rho$ and $L_0(s,a) \le U_0(s,a)$ then by Lemma \ref{L:LQU1}, we have $L_1(s,a) \le U_1(s,a)$, so
\begin{equation}
Q'_1(s,a) \le U_1(s,a) \le M(1+\gamma).
\label{E:LI1}
\end{equation}
Otherwise, if $U_1(s,a) \ge M(1+ \gamma)$, we then have 
\begin{equation}
Q'_1(s,a) \le \max \{ L_1(s,a), M(1+\gamma) \}.
\label{E:LI2}
\end{equation}
From \eqref{E:LI1} and \eqref{E:LI2}, we have 
\begin{equation}
\begin{split}
    Q'_1(s,a) &\le \max \{ L_1(s,a), M(1+\gamma) \} \\&
    \le \max \{\tilde L_1, M(1+\gamma) \}.
    \end{split}
\end{equation} 
If the update is carried out as in case 2, we have,
\begin{equation*}
\begin{split}
    Q'_1(s,a) &= Q'_0(s,a) \\ &
    \le M \\ &
    < M(1 + \gamma) \\ &
    \le \max \{\tilde L_1, M(1+\gamma) \}.
\end{split}
\end{equation*}
Thus $\bar{U}'_n(s,a)$ part of 
\eqref{C:LQ'U} is true for $n=1$. Suppose that it 
is true for $n= 1,2,\ldots, k$. We will show it for $n=k+1$. Consider first the instance where the update is carried out according to \mbox{case 1}. We do casework on the inequality
\begin{equation}
    \textstyle Q'_k(s,a) \le \max \left \{\tilde L_k,\tilde L_{k-1}(1+\gamma),\ldots, \tilde L_1\sum_{i=0}^{k-1} \gamma^i, M\sum_{i=0}^{k} \gamma^i \right\},
    \label{E:LI3}
\end{equation}
which holds for all $(s,a)$.
First, let us consider the case where the right-hand-side of \eqref{E:LI3} is equal to $\tilde L_{k'}\sum_{i=0}^{k-k'} \gamma^i$ for some $k'$ such that $1\le k' \le k$. Then, we have
\begin{equation}
\begin{split}
    Q_{k+1}(s,a) &= \textstyle (1- \alpha_k(s,a)) Q'_k(s,a) + \alpha_k(s,a) [r(s,a) + \gamma \max_a Q'_k(s',a)] \\ & 
    \le \textstyle (1- \alpha_k(s,a)) \tilde L_{k'}\sum_{i=0}^{k-k'} \gamma^i + \alpha_k(s,a) M + \alpha_k(s,a) \gamma \tilde L_{k'}\sum_{i=0}^{k-k'} \gamma^i \\ &
    \le \textstyle  (1- \alpha_k(s,a)) \tilde L_{k'} \sum_{i=0}^{k-k'} \gamma^i  + \alpha_k(s,a) \tilde L_{k'}  +  \alpha_k(s,a) \tilde L_{k'}\sum_{i=1}^{k-k'+1} \gamma^i \\ &
    = \textstyle  (1- \alpha_k(s,a)) \tilde L_{k'}\sum_{i=0}^{k-k'} \gamma^i  + \alpha_k(s,a) \tilde L_{k'} \sum_{i=0}^{k-k'} \gamma^i   +  \alpha_k(s,a) \tilde L_{k'} \gamma^{k-k'+1}\\ & 
    \le \textstyle \tilde L_{k'}\sum_{i=0}^{k-k'}  \gamma^i + \tilde L_{k'} \gamma^{k-k'+1} \\ &
    = \textstyle \tilde L_{k'}\sum_{i=0}^{k-k'+1} \gamma^i 
    \label{E:LI4}
    \end{split}
\end{equation}
The first inequality holds by the induction assumption \eqref{E:LI3}. The second inequality holds since in this case we have the right-hand-side of \eqref{E:LI3} is equal to $\tilde L_{k'}(1+ \gamma + \ldots+ \gamma^{k-k'})$. It follows that 
\[
\tilde L_{k'}(1+ \gamma + \ldots+ \gamma^{k-k'}) \ge M(1+\gamma+\ldots+ \gamma^{k}),
\]
which implies that $\tilde L_{k'}\ge M$. Finally, the third inequality holds by the assumption that $\alpha_n(s,a) \le 1$ for all $n$.
We have 
\begin{equation*}
\begin{split}
    Q'_{k+1}(s,a) &=\max \{L_{k+1}(s,a), \min \{U_{k+1}(s,a), Q_{k+1}(s,a) \} \} \\ &
\le \max\{ L_{k+1}(s,a), \min \{ U_{k+1}(s,a), \tilde L_{k'}(1+ \gamma + \ldots+ \gamma^{k-k'+1}) \} \}.
\end{split}
\end{equation*}
Now, consider the case where $U_{k+1}(s,a) \le \tilde L_{k'}(1+ \gamma + \ldots+ \gamma^{k-k'+1}) $. By the induction assumption, 
$Q'_n(s,a)$ is bounded below by $-\rho \sum_{i=0}^{n}\gamma^i$ and above by $\rho \sum_{i=0}^{n}\gamma^i$ for all $(s,a) \in \mathcal S \times \mathcal A$ and all $n= 1,2,\ldots, k$. Since $L_0(s,a) \le U_0(s,a)$,  Lemma \ref{L:LQU1} can be applied iteratively on $n=1,\ldots,k+1$ to obtain that $L_{K+1}(s,a) \le U_{k+1}(s,a)$ for all $(s,a) \in \mathcal S \times \mathcal A$. Thus, we have
\begin{equation}
 Q'_{k+1}(s,a) \le U_{k+1}(s,a) \le \tilde L_{k'}(1+ \gamma + \ldots+ \gamma^{k-k'+1}) .
\label{E:LI-}
\end{equation}
Otherwise, if $U_{k+1}(s,a) \ge \tilde L_{k'}(1+ \gamma + \ldots+ \gamma^{k-k'+1})$, we have 
\begin{align}
Q'_{k+1}(s,a) &\le \max \{ L_{k+1}(s,a), \tilde L_{k'}(1+ \gamma + \ldots+ \gamma^{k-k'+1}) \} \nonumber \\
&\le \max\{ \tilde L_{k+1}, \tilde L_{k'}(1+ \gamma + \ldots+ \gamma^{k-k'+1})\}. \label{E:LI--}
\end{align}
Moving on to the case where the right-hand-side of \eqref{E:LI3} is equal to $M(1+\gamma+\ldots+ \gamma^{k})$:
\begin{equation}
\begin{split}
   \textstyle Q_{k+1}(s,a) &=(1- \alpha_k(s,a)) \, Q'_k(s,a) + \alpha_k(s,a)\, [r(s,a) + \gamma \max_a Q'_k(s',a)] \\ & 
   \textstyle \le (1- \alpha_k(s,a))M\sum_{i=0}^{k} \gamma^i + \alpha_k(s,a) M  + \alpha_k(s,a) \gamma  M\sum_{i=0}^{k} \gamma^i \\ &
   \textstyle = (1- \alpha_k(s,a))M\sum_{i=0}^{k} \gamma^i + \alpha_k(s,a) M  +  \alpha_k(s,a) M\sum_{i=1}^{k+1} \gamma^i\\ &
  \textstyle  \le M \sum_{i=0}^{k} \gamma^i - \alpha_k(s,a) M\sum_{i=0}^{k} \gamma^i 
     + \alpha_k(s,a) M \sum_{i=0}^{k} \gamma^i+  M \gamma^{k+1}\\ & 
  \textstyle  = M(1+ \gamma + \ldots+ \gamma^{k+1}).
    \label{E:LI5}
    \end{split}
\end{equation}
We have 
\begin{equation*}
\begin{split}
    Q'_{k+1}(s,a) &=\max\{L_{k+1}(s,a), \min(U_{k+1}(s,a), Q_{k+1}(s,a))\} \\ &
\le \max\{L_{k+1}(s,a), \min(U_{k+1}(s,a), M(1+ \gamma + \ldots+ \gamma^{k+1}) \}\}.
\end{split}
\end{equation*}
Now if $U_{k+1}(s,a) \le M(1+ \gamma + \ldots+ \gamma^{k+1})$, then by applying Lemma \ref{L:LQU1} as before,
\begin{equation}
Q'_{k+1}(s,a) \le U_{k+1}(s,a) \le M(1+ \gamma + \ldots+ \gamma^{k+1}) .
\label{E:LI*}
\end{equation}
Otherwise, if $U_{k+1}(s,a) \ge M(1+ \gamma + \ldots+ \gamma^{k+1})$, we have 
\begin{align}
Q'_{k+1}(s,a) &\le \max\{L_{k+1}(s,a),M(1+ \gamma + \ldots+ \gamma^{k+1})\} \nonumber \\
 &\le \max\{\tilde L_{k+1},M(1+ \gamma + \ldots+ \gamma^{k+1})\}. \label{E:LI**}
\end{align}
Now, if the update is carried out according to case 2,
\begin{equation}
\begin{split}
    Q'_{k+1}(s,a) &= Q'_K(s,a) \\ &
    \textstyle \le \max \{\tilde L_k,\tilde L_{k-1}(1+\gamma),\ldots, \tilde L_1\sum_{i=0}^{k-1} \gamma^i, M\sum_{i=0}^{k} \gamma^i \} \\ &
    \textstyle \le \max  \{\tilde L_{k+1}, (1+\gamma)\tilde L_k,\ldots, \tilde L_1\sum_{i=0}^{k} \gamma^i, \textstyle M\sum_{i=0}^{k+1} \gamma^i \}.
\end{split}
\label{E:LI***}
\end{equation}
By 
\eqref{E:LI-}, \eqref{E:LI--}, \eqref{E:LI*}, \eqref{E:LI**} and \eqref{E:LI***}, we have
$Q'_{k+1}(s,a) \le \bar{U'}_{k+1}$. 
A similar argument can be made to show $\bar{L}_n \le Q'_n(s,a)$ and $\bar{L}_n \le Q_n(s,a) \le \bar{U}_n$, which completes the inductive proof.
\end{proof}

\begin{proof}
Part \ref{L:LQU2}:
Fix an $(s,a) \in \mathcal S \times \mathcal A$.
By part \ref{L:BQ'} we have the action-value iterates $Q_n$ and $Q'_n$ are bounded for all $n$. 
We denote the ``sampling noise'' term using $$\xi^L_n(s,a)=\hat Q_{n,0}^L(s,a) - \E[\hat Q_{n,0}^L(s,a)].$$ We also define an accumulated noise process started at iteration $\nu$ by $W^L_{\nu,\nu}(s,a) = 0$, and
\begin{align*}
W^L_{n+1,\nu}(s,a) = \left(1- \alpha_n (s,a)\right) W^L_{n,\nu}(s,a)+ \alpha_n(s,a) \, \xi^L_{n+1}(s,a) \quad \forall \, \, n \ge \nu,
\end{align*}
which averages noise terms together across iterations.  Note that $\tau$ is an almost surely finite stopping time, the rewards $r(s,a)$ are uniformly bounded, and $Q_{n+1}$ is also bounded (by part \ref{L:BQ'}). Then, $\hat Q_{n,0}^L$ is bounded by some random variable and so is the conditional variance of $\xi^L_{n}(s,a)$.
Hence, Corollary 4.1 in \cite{bertsekas1996neuro} applies and it follows that 
\begin{align*}
\lim_{n \rightarrow \infty} W^L_{n,\nu} (s,a) = 0 \quad \forall \, \nu \ge 0.
\end{align*}
Let $\tilde{\nu}$ be large enough so that $\alpha_n(s,a) \le 1$ for all $n \ge \tilde{\nu}$.
We also define
\begin{align*}
Y^L_{\tilde{\nu}}(s,a) &= \rho, \nonumber \\
Y^L_{n+1}(s,a) &= (1 - \alpha_{n}(s,a)) \, Y^L_{n}(s,a) + \alpha_{n}(s,a) \, Q^*(s,a), \quad \forall \, n \ge \tilde{\nu}.
\label{E:YL}
\end{align*}

It is easy to see that the sequence $Y^L_n(s,a) \rightarrow Q^*(s,a)$.
We claim that for all iterations $n \ge \tilde{\nu}$, it holds that
$$ L_n(s,a) \le \min \{ \rho,\, Y^L_n(s,a) + W^L_{n, \tilde{\nu}}(s,a)\}.$$ %, \quad \forall n \ge \tilde{\nu}. $$
To prove this claim, we proceed by induction on $n$. For $n = \tilde{\nu}$, we have 
\[
Y^L_{\tilde{\nu}}(s,a) = \rho \quad \text{and} \quad W^L_{\tilde{\nu},\tilde{\nu}}(s,a) = 0,
\] 
so it is clear that the statement is true for the base case. We now show that it is true for $n+1$ given that it holds at $n$:
\begin{equation*}
    \begin{split}
        L_{n+1}(s,a) &= \min \{\rho, \,  (1 - \alpha_n(s,a)) \, L_n(s,a) + \alpha_n(s,a) \, (\hat Q_{n,0}^L(s,a) - \E[ \hat Q_{n,0}^L(s,a)] + \E[\hat Q_{n,0}^L(s,a)] ) \}  \\ &
         = \min \{\rho, \, (1 - \alpha_n(s,a)) \, L_n(s,a) + \alpha_n(s,a) \, \xi^L_n(s,a) + \alpha_n(s,a) \, \E[\hat Q_{n,0}^L(s,a)] \} \\ &
         \le \min \{\rho, \, (1- \alpha_n(s,a)) \, (Y^L_n(s,a) + W^L_{n, \nu_k}(s,a)) + \alpha_n(s,a) \, \xi^L_n(s,a) + \alpha_n(s,a) \, Q^*(s,a) \}\\ &
         \le \min \{\rho, \,  Y^L_{n+1}(s,a) + W^L_{n+1, \tilde{\nu}}(s,a) \},
    \end{split}
\end{equation*}
where the first inequality follows by the induction hypothesis and $\E[\hat Q_{n,0}^L(s,a)] \le Q^*(s,a)$ follows by Proposition \ref{P:EmpDR}. Next, since $Y^L_n(s,a) \rightarrow Q^*(s,a)$, $ W^L_{n, \nu_k}(s,a)\rightarrow 0$ and $Q^*(s,a) \le \rho$, we have 
\[
\limsup_{n \rightarrow \infty} \,   L_{n}(s,a)  \le  Q^*(s,a).
\]
Therefore, since our choice of $(s,a)$ was arbitrary, it follows that for every $\eta >0$, there exists some time $n'$ such that $L_n(s,a) \le Q^*(s,a) + \eta$ for all $(s,a) \in \mathcal{S} \times \mathcal{A}$ and $n \ge n'$. 

Using Proposition \ref{P:EmpDR}, $Q^*(s,a) \le \E[\hat Q_{n,0}^U(s,a)]$, a similar argument as the above can be used to establish that 
\[
Q^*(s,a) \le \liminf_{n \rightarrow \infty} \,   U_{n}(s,a).
\]
Hence, there exists some time $n''$ such that $Q^*(s,a) - \eta \le U_n(s,a)$ for all $(s,a)$ and $n \ge n''$. Take $n_0$ to be some time greater than $n'$ and $n''$ and the result follows. 
\end{proof}

\subsection{Proof of Lemma \ref{L:LU} }
\label{Appendix:LU}
\begin{proof}
We use induction on $n$.
Since for all $(s,a) \in \mathcal S \times \mathcal A$,  $L_0(s,a) \le U_0(s,a)$ and $-\rho \le Q'_0(s,a) \le \rho$ then $L_1(s,a) \le U_1(s,a)$
by Lemma \ref{L:LQU1}.
Suppose that $L_n(s,a) \le U_n(s,a)$ holds for all $(s,a)$ for all $n=1,\ldots, k$. For all $(s,a) \in \mathcal S \times \mathcal A$, we have $Q'_k(s,a)$ is bounded since by Proposition \ref{L:LQU}\ref{L:BQ'} $Q'_n(s,a)$ is bounded for all $n$. We also have $L_k(s,a) \le U_k(s,a)$ for all $(s,a)$ by the induction assumption. Applying Lemma \ref{L:LQU1} again at $n=k+1$ yields $L_{k+1}(s,a) \le U_{k+1}(s,a)$ and the inductive proof is complete. 
\end{proof}

\subsection{Proof of Theorem \ref{T:Conv}}
\begin{proof}
We first prove part \ref{T:1}. 
We start by writing Algorithm \ref{A:IV-LBQL} using DP operator notation. Define a mapping $H$ such that
\begin{equation*}
\textstyle (HQ')(s,a) = r(s,a) + \gamma \E \left[\max_{a'} Q'(s',a')\right],    
\end{equation*}
where $s' = f(s,a,w)$.
It is well-known that the mapping $H$ is a $\gamma$-contraction in the maximum norm. We also define a random noise term
\begin{equation}
\textstyle
    \xi_n(s,a) =  \gamma \max_{a' }Q'_n(s',a') - \gamma \E \left[\max_{a'} Q'_n(s',a')\right]  .
    \label{E:ND}
\end{equation}
 The main update rules of Algorithm \ref{A:IV-LBQL}  can then be written as
\begin{align}
        & Q_{n+1}(s,a)  =  (1 - \alpha_n(s,a)) \, Q'_n(s,a) + \alpha_n(s,a) \, \left[(HQ'_{n})(s,a) + \xi_{n+1}(s,a) \right], 
        \nonumber \\ & 
        U_{n+1}(s,a) = \Pi_{[-\rho, \, \infty]} \left [ (1 - \beta_n(s,a)) \, U_n(s,a) + \beta_n(s,a) \, \hat Q^{U}_{0}(s,a) \right ],  \nonumber \\ &
        L_{n+1}(s,a) =  \Pi_{[\infty, \, \rho]} \left [(1 - \beta_n(s,a)) \, L_n(s,a) + \beta_n(s,a) \, \hat Q^L_{0}(s,a)\} \right ],   \nonumber \\ &
        Q'_{n+1}(s,a) =\Pi_{[ L_{n+1}(s,a),\, U_{n+1}(s,a)]}  \left [  Q_{n+1}(s,a)\right ]. \label{E:**} &
\end{align}
%%%%%%%%%%%%%%%%%%%%%%%%%%%%%%%%%%%%%%%%%%%%%%%%%%%%%%%%%%%%%%%%%%%%%%%%%%%%%%%%%%%%%%%%%%%%%%%%%%%%%%
Assume without loss of generality that $Q^*(s,a)=0$ for all state-action pairs $(s,a)$. This can be established by shifting the origin of the coordinate system. Note that by \eqref{E:**} at any iteration $n$ and for all $(s,a)$, we have $L_n(s,a) \le Q'_n(s,a) \le U_n(s,a)$.

We proceed via induction. First, note that by Propostion \ref{L:LQU}\ref{L:BQ'} the iterates of Algorithm \ref{A:IV-LBQL} $Q'_n(s,a)$ are bounded in the sense that there exists a constant $D_0$ such that $| Q'_n(s,a)| \le D_0$ for all $(s,a)$ and iterations $n$.
Define the sequence $D_{k+1}=(\gamma +  \epsilon) \, D_k$, such that $\gamma + \epsilon < 1$ and $\epsilon > 0$. Clearly, $D_k\rightarrow 0$.
Suppose that there exists some time $n_k$ such that for all $(s,a)$,
$$ \max\{-D_k , L_n(s,a) \} \le Q'_n(s,a) \le \min\{D_k, U_n(s,a)\}, \ \  \forall n \ge n_k.$$ 
We will show that this implies the existence of some time $n_{k+1}$ such that 
$$ \max\{-D_{k+1} , L_{n}(s,a) \} \le Q'_n(s,a) \le \min\{D_{k+1}, U_{n}(s,a)\} \ \ \forall \, (s,a), \, n \ge n_{k+1}. $$
This implies that $Q'_n(s,a)$ converges to $Q^*(s,a)=0$ for all $(s,a)$. We also assume that $\alpha_n(s,a) \le 1$ for all $(s,a)$ and $n$. Define an accumulated noise process started at $n_k$ by $W_{n_k,n_k}(s,a) = 0$, and 
\begin{align}
W_{n+1,n_k}(s,a) = \left(1- \alpha_n (s,a)\right) W_{n,n_k}(s,a)+ \alpha_n(s,a) \, \xi_{n+1}(s,a),\quad \forall \, n \ge n_k,
\label{E:Wnnk}
\end{align}
where $\xi_n(s,a)$ is as defined in \eqref{E:ND}. We now use Corollary 4.1 in \cite{bertsekas1996neuro} which states that under the assumptions of Theorem \ref{T:Conv} on the step size $\alpha_n(s,a)$, and if $\E[\xi_n(s,a) \, | \,\mathcal{F}_n]=0$ and $\E[\xi^2_n(s,a) \,|\,\mathcal{F}_n] \le A_n$, where the random variable $A_n$ is bounded with probability 1, the sequence $W_{n+1,n_k}(s,a)$ defined in \eqref{E:Wnnk} converges to zero, with probability 1.
From our definition of the stochastic approximation noise $\xi_n(s,a)$ in \eqref{E:ND}, we have 
\[
\textstyle \E[\xi_n(s,a) \, | \,\mathcal{F}_n]=0 \quad \text{and} \quad \E[\xi^2_n(s,a) \,|\,\mathcal{F}_n] \le C (1 + \max_{s',a'} Q_n^{'2}(s',a')),\] where $C$ is a constant. 
Then, it follows that 
\begin{align*}
\lim_{n\rightarrow \infty} W_{n,n_k} (s,a) = 0 \quad \forall \, (s,a), \; n_k.
\end{align*}
Now, for the sake of completeness, we restate a lemma from \cite{bertsekas1996neuro} below, which we will use to bound the accumulated noise.
\begin{lem}[Lemma 4.2 in \cite{bertsekas1996neuro}]
For every $\delta >0$, and with probability one, there exists some $n'$ such that $|W_{n,n'}(s,a)|\le \delta$, for all $n \ge n'$.
\label{L:L4.2Neuro}
\end{lem}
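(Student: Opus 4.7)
The plan is to upgrade the pointwise statement $W_{n,n'}\to 0$ (as $n\to\infty$, for each fixed $n'$) that was invoked just above via Corollary 4.1 of \citet{bertsekas1996neuro}, to the uniform-in-$n$ statement of Lemma \ref{L:L4.2Neuro}. The subtlety is that pointwise convergence does not automatically yield a uniform bound on the \emph{entire tail} $\{W_{n,n'}\}_{n\ge n'}$ starting at the index $n'$: even if the process converges to $0$, it can spike on the way. The key leverage is that the step-sizes are square-summable, so by pushing $n'$ to be large one shrinks the total ``variance budget'' available to the tail of the noise process, which should let us bound its supremum with high probability.

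The first step is to unroll the recursion and write
\begin{equation*}
W_{n,n'}(s,a)=\sum_{k=n'}^{n-1}\Bigl(\prod_{j=k+1}^{n-1}(1-\alpha_j(s,a))\Bigr)\,\alpha_k(s,a)\,\xi_{k+1}(s,a),
\end{equation*}
so that $W_{n,n'}$ is a weighted sum of martingale differences (with weights bounded by $\alpha_k$). Using $\E[\xi_{k+1}(s,a)\mid\mathcal F_k]=0$ and the bounded-conditional-variance property $\E[\xi_{k+1}^2(s,a)\mid\mathcal F_k]\le A_k$ with $A_k$ a.s.\ bounded (which holds because of Proposition \ref{L:LQU}\ref{L:BQ'}), a direct second-moment computation gives $\E[W_{n,n'}^2(s,a)]\le \bar A\sum_{k=n'}^{\infty}\alpha_k^2(s,a)$ on an event of arbitrarily large probability (obtained by truncating $A_k\le\bar A$). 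Next I would apply Doob's maximal inequality to a rescaled martingale version of $W_{n,n'}$ to bound
\begin{equation*}
\mathbf P\Bigl(\sup_{n\ge n'}|W_{n,n'}(s,a)|>\delta\Bigr)\le \frac{\bar A}{\delta^{2}}\sum_{k=n'}^{\infty}\alpha_k^2(s,a),
\end{equation*}
and Assumption \ref{Assumption}\ guarantees the right-hand side vanishes as $n'\to\infty$. Finally I would pick an increasing sequence $n'_m$ along which the tail sum decays fast enough to be summable, and invoke Borel--Cantelli to conclude that a.s.\ there exists some (random) $n'$ with $\sup_{n\ge n'}|W_{n,n'}(s,a)|\le\delta$, and then handle a countable sequence $\delta_m\downarrow 0$ to get the statement in its quoted form.

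The main obstacle is technical rather than conceptual: $W_{n,n'}$ is not itself a martingale because of the multiplicative factors $(1-\alpha_j)$, so Doob's inequality cannot be applied directly. One either rescales to $M_n=W_{n,n'}/\prod_{j=n'}^{n-1}(1-\alpha_j)$ and controls the (random) normalizer separately, or works via the Robbins--Siegmund near-supermartingale lemma to bound the supremum. A second annoyance is that the variance bound $A_k$ is only a.s.\ finite rather than uniformly bounded; this is handled by truncating on the event $\{A_k\le\bar A\}$, proving the statement there, and taking $\bar A\to\infty$. These are standard stochastic-approximation manipulations, so the proof ultimately reduces to a careful bookkeeping exercise of the kind worked out in \citet{bertsekas1996neuro}.
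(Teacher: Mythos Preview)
The paper does not prove this lemma at all: it is merely \emph{restated} from \citet{bertsekas1996neuro} and then invoked as a black box. The sentence immediately preceding the lemma says exactly this: ``for the sake of completeness, we restate a lemma from \cite{bertsekas1996neuro} below, which we will use to bound the accumulated noise.'' So there is no ``paper's own proof'' to compare against; the authors defer entirely to the cited reference.

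That said, your sketch is a faithful outline of the standard argument one finds in the stochastic-approximation literature (and essentially the one in \citet{bertsekas1996neuro} itself). One simplification worth noting: rather than rescaling $W_{n,n'}$ to manufacture a martingale, the cleaner route is to work with the \emph{unweighted} partial sums $M_n=\sum_{k=0}^{n-1}\alpha_k(s,a)\,\xi_{k+1}(s,a)$, which form a genuine martingale with square-summable increments and hence converge a.s. Once $M_n$ is Cauchy, the recursion $W_{n+1,n'}=(1-\alpha_n)W_{n,n'}+(M_{n+1}-M_n)$ together with an Abel-summation-by-parts argument shows that $\sup_{n\ge n'}|W_{n,n'}|$ is controlled by $\sup_{n\ge n'}|M_n-M_{n'}|$, which can be made smaller than $\delta$ by choosing $n'$ large. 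This avoids both the rescaling and the separate Borel--Cantelli step you propose, though your approach would also work with the bookkeeping you flag.
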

Using the above lemma, let $n_{k' }\ge n_k$ such that, for all $n \ge n_{k'}$ we have 
\[
|W_{n,n_{k'}}(s,a)|\le \gamma \epsilon D_k < \gamma D_k.
\]
Furthermore, by Proposition \ref{L:LQU}\ref{L:LQU2} let $\nu_{k }\ge n_{k'}$ such that, for all $n \ge \nu_{k}$ we have 
$$L_n(s,a) \le \gamma D_k - \gamma \epsilon D_k \quad \text{and} \quad \gamma \epsilon D_k - \gamma D_k \le U_n(s,a).$$
Define another sequence $Y_n$ that starts at iteration $\nu_k$.
\begin{align}
Y_{\nu_k}(s,a) = D_{k} \quad \text{and} \quad  Y_{n+1}(s,a) = (1 - \alpha_{n}(s,a)) \, Y_{n}(s,a) + \alpha_{n}(s,a) \, \gamma \,D_{k}
\label{E:Y}
\end{align}
Note that it is easy to show that the sequence $Y_n(s,a)$ in \eqref{E:Y} is decreasing, bounded below by $\gamma D_k$, and converges to $\gamma D_k$ as $n \rightarrow \infty$. Now we state the following lemma.
\begin{lem} For all state-action pairs $(s,a)$ and iterations $n \ge \nu_k$, it holds that:
\begin{enumerate}[label={(\arabic*)}]
    \item $Q'_n(s,a) \le \min\{U_n(s,a),Y_n(s,a) + W_{n,\nu_k}(s,a)\}$, \label{L:TL1}
    \item $\max\{L_n(s,a),-Y_n(s,a) + W_{n,\nu_k}(s,a)\} \le Q'_n(s,a)$.  \label{L:TL2}
\end{enumerate}
\end{lem}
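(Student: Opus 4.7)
The plan is to prove both inequalities simultaneously by induction on $n \ge \nu_k$, with the case $n = \nu_k$ handled directly: $Y_{\nu_k}(s,a) = D_k$ and $W_{\nu_k,\nu_k}(s,a) = 0$, so the base case reduces to $-D_k \le Q'_{\nu_k}(s,a) \le D_k$ together with $L_{\nu_k}(s,a) \le Q'_{\nu_k}(s,a) \le U_{\nu_k}(s,a)$; the first inequality is the outer induction hypothesis at $n = n_k$ (and $\nu_k \ge n_k$), and the second follows from the projection step \eqref{E:**}.

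For the inductive step, I will first bound $Q_{n+1}(s,a)$ \emph{before} projection on the updated coordinate $(s,a) = (s_n, a_n)$. Exploiting the shift $Q^* \equiv 0$ and the $\gamma$-contraction of $H$, the outer induction hypothesis $\|Q'_n\|_\infty \le D_k$ yields $|(HQ'_n)(s,a)| \le \gamma D_k$. Substituting the inner induction hypothesis $Q'_n(s,a) \le Y_n(s,a) + W_{n,\nu_k}(s,a)$ into the update rule in \eqref{E:**}, I collect the deterministic and noise parts separately:
\[
Q_{n+1}(s,a) \le (1-\alpha_n)Y_n(s,a) + \alpha_n \gamma D_k + (1-\alpha_n) W_{n,\nu_k}(s,a) + \alpha_n \xi_{n+1}(s,a),
\]
and the right-hand side is precisely $Y_{n+1}(s,a) + W_{n+1,\nu_k}(s,a)$ by the recursions \eqref{E:Y} and \eqref{E:Wnnk}. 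The symmetric computation gives $Q_{n+1}(s,a) \ge -Y_{n+1}(s,a) + W_{n+1,\nu_k}(s,a)$. For coordinates $(s,a) \ne (s_n,a_n)$, the effective stepsize is zero and every quantity in the induction carries over verbatim.

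The remaining work is to push these envelope bounds through the projection $Q'_{n+1} = \Pi_{[L_{n+1},U_{n+1}]}[Q_{n+1}]$. Monotonicity of projection gives $Q'_{n+1}(s,a) \le \max\{L_{n+1}, \min\{U_{n+1}, Y_{n+1} + W_{n+1,\nu_k}\}\}$, and this collapses to $\min\{U_{n+1}, Y_{n+1}+W_{n+1,\nu_k}\}$ provided $L_{n+1} \le Y_{n+1}+W_{n+1,\nu_k}$. Here is where the choice of $\nu_k$ pays off: since $Y_{n+1} \ge \gamma D_k$ and $|W_{n+1,\nu_k}| \le \gamma \epsilon D_k$, the envelope is at least $\gamma(1-\epsilon)D_k$, while the assumption that $n \ge \nu_k$ forces $L_{n+1}(s,a) \le \gamma D_k - \gamma \epsilon D_k$. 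The lower-bound analogue follows by the symmetric check $-Y_{n+1} + W_{n+1,\nu_k} \le U_{n+1}$.

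The main obstacle I anticipate is precisely this two-sided compatibility between the random envelope $\pm Y_n + W_{n,\nu_k}$ and the enforced bracket $[L_n, U_n]$: the projection could in principle \emph{increase} $Q_{n+1}$ above its envelope (when $Q_{n+1} < L_{n+1}$) or decrease it below (when $Q_{n+1} > U_{n+1}$), and blocking both directions simultaneously is exactly what requires Lemma~\ref{L:LU} (consistency $L_{n+1} \le U_{n+1}$), Proposition~\ref{L:LQU}\ref{L:LQU2} (to squeeze $L_n,U_n$ close enough to $Q^* = 0$ beyond $\nu_k$), and the noise-tail bound from Lemma~\ref{L:L4.2Neuro}. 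The margin $\epsilon > 0$ built into $D_{k+1} = (\gamma+\epsilon)D_k$ is precisely what reserves enough slack to absorb the noise while keeping both sides of the projection inactive for the envelope argument.
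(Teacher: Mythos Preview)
Your proposal is correct and follows essentially the same approach as the paper's proof: induction on $n \ge \nu_k$, bounding $Q_{n+1}$ via the contraction $\|HQ'_n\|_\infty \le \gamma D_k$ and the recursions for $Y_{n+1}$ and $W_{n+1,\nu_k}$, then pushing through the projection by checking $L_{n+1} \le \gamma D_k - \gamma\epsilon D_k \le Y_{n+1} + W_{n+1,\nu_k}$ (invoking Lemma~\ref{L:LU} and the choice of $\nu_k$). Your treatment is slightly more explicit than the paper's in two places---you handle the non-updated coordinates $(s,a)\neq(s_n,a_n)$ and prove both directions simultaneously rather than appealing to symmetry---but the argument is the same.
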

\begin{proof}
We focus on part \ref{L:TL1}. For the base case $n = \nu_k$, the statement holds because $Y_{\nu_k}(s,a) = D_k$ and $W_{\nu_k,\nu_k}(s,a) = 0$. We assume it is true for $n$ and show that it continues to hold for $n+1$:
\begin{align*}
Q_{n+1}(s,a) &= (1 - \alpha_n(s,a)) Q'_n(s,a) + \alpha_n(s,a)\, \left[(HQ'_{n})(s,a) + \xi_{n+1}(s,a) \right]\\
 &\hspace{3pt}\begin{aligned}[t]\le (1 - \alpha_n(s,a))  &\min\{U_n(s,a),Y_n(s,a) + W_{n,\nu_k}(s,a)\} \\
 & + \alpha_n(s,a) \, (HQ'_{n})(s,a) + \alpha_n(s,a) \, \xi_{n+1}(s,a)\end{aligned}\\
 &\le (1 - \alpha_n(s,a)) \, (Y_n(s,a) + W_{n,\nu_k}(s,a)) + \alpha_n(s,a) \, \gamma D_k + \alpha_n(s,a) \, \xi_{n+1}(s,a)\\
 &\le Y_{n+1}(s,a) + W_{n+1,\nu_k}(s,a),
\end{align*}
where we used  $(HQ'_{n}) \le \gamma \Vert Q'_{n} \Vert \le \gamma D_k$. Now, we have
\begin{align*}
Q'_{n+1}(s,a) &= \Pi_{[ L_{n+1}(s,a),\, U_{n+1}(s,a)]}  \left [  Q_{n+1}(s,a)\right ] \\ &
\le  \Pi_{[ L_{n+1}(s,a),\, U_{n+1}(s,a)]}  \left [ Y_{n+1}(s,a) + W_{n+1,\nu_k}(s,a) \right ]  \\ 
&\le  \min \{U_{n+1}(s,a), Y_{n+1}(s,a) + W_{n+1,\nu_k}(s,a) \}.
\end{align*}
The first inequality holds because 
\[Q_{n+1}(s,a) \le Y_{n+1}(s,a) + W_{n+1,\nu_k}(s,a).
\]
The second inequality holds because 
$Y_{n+1}(s,a) +W_{n+1,\nu_k}(s,a) \ge \gamma D_k - \gamma \epsilon D_k $, $ L_n(s,a) \le \gamma D_k - \gamma \epsilon D_k$, and $L_n(s,a) \le U_n(s,a)$ by Lemma \ref{L:LU}. Symmetrically, it can be shown that
\begin{align*}
\max \{ L_{n+1}(s,a), -Y_{n+1}(s,a) + W_{n+1,\nu_k}(s,a) \} \le Q'_{n+1}(s,a),
\end{align*}	
which completes the proof. 
\end{proof}
\noindent
Since $Y_{n}(s,a) \rightarrow \gamma  D_{k}$ and $W_{n,\nu_k}(s,a) \rightarrow 0$, we have
\[
{\limsup}_{n\rightarrow \infty} \Vert Q'_{n} \Vert \le \gamma D_{k} < D_{k+1}.
\]
Therefore, there exists some time $n_{k+1}$ such that $$ \max\{-D_{k+1} , L_{n}(s,a) \} \le Q'_n(s,a) \le \min\{D_{k+1}, U_{n}(s,a)\} \ \ \forall \, (s,a), \, n \ge n_{k+1},$$ 
completing thus the induction.

For part \ref{T:2} of the theorem: 
we fix $(s,a)$ and focus on the convergence analysis of $U_n(s,a)$ to $Q^*(s,a)$. A similar analysis can be done to show $L_n(s,a) \rightarrow Q^*(s,a)$ almost surely. First note that we can write \eqref{Uupdate} the update equation of $U_n(s,a)$ as:
\[ 
U_{n+1}(s_n,a_n) = \Pi_{[-\rho, \, \infty]} \left [ U_n(s_n,a_n) + \beta_n(s_n,a_n) \, \bigl[ \psi_n( U_n(s_n,a_n),Q_{n+1}(s_n,a_n)) \bigr] \right ]
\]
where $\psi_n(U_n(s_n,a_n),Q_{n+1}(s_n,a_n)) $ is the stochastic gradient and in this case is equal to $\hat Q^{U}_{0}(s_n,a_n) - U_n(s_n,a_n)$.
We define the noise terms 
\begin{align}
        &\bar \epsilon_{n+1}(s_n,a_n) = \psi_n(U_n(s_n,a_n), Q^*(s_n,a_n)) - \E[\psi_n(U_n(s_n,a_n), Q^*(s_n,a_n))] \\ &
    \bar \varepsilon_{n+1}(s_n,a_n) = \psi_n(U_n(s_n,a_n), Q_{n+1}(s_n,a_n)) - \psi_n(U_n(s_n,a_n), Q^*(s_n,a_n)).
\end{align}
Note here that $\bar \epsilon_{n+1}(s_n,a_n)$ represents the error that the sample gradient deviates from its mean when computed using the optimal action-value $Q^*$ and $\bar \varepsilon_{n+1}(s_n,a_n)$ is the error between the two evaluations of $\psi_n$ due only to the difference between $Q_{n+1}(s_n,a_n)$ and $Q^*(s_n,a_n)$. Thus, we have 
$$
\psi_n(U_n(s_n,a_n), Q_{n+1}(s_n,a_n)) = \E[\psi_n(U_n(s_n,a_n), Q^*(s_n,a_n))] + \bar \epsilon_{n+1}(s_n,a_n) + \bar \varepsilon_{n+1}(s_n,a_n).
$$
Since $Q_n \rightarrow Q^*$ by part \ref{T:1} of the Theorem, then $\bar \varepsilon_n(s_n,a_n) \rightarrow 0$ almost surely. It is now convenient to view $U_n(s,a)$ as a stochastic process in $n$, adapted to the filtration $\{ \mathcal F_n \}_{n \ge 0}.$ By definition of $\bar \epsilon_{n+1}(s,a)$, we have that
$$
\E[\bar \epsilon_{n+1}(s,a)|\mathcal F_{n}]=0 \quad a.s.
$$
Since $\bar \epsilon_{n+1}(s,a)$ is unbiased and $\bar \varepsilon_{n+1}(s,a)$ converges to zero, we can apply Theorem 2.4 of \cite{kushner2003stochastic}, a standard stochastic approximation convergence result, to conclude that $U_n(s,a) \rightarrow Q^*(s,a)$ almost surely. Since our choice of $(s,a)$ was arbitrary, this convergence holds for all $(s,a) \in \mathcal S \times \mathcal A.$
\end{proof}
\subsection{Proof of Lemma \ref{L:pLQL2}}
\begin{proof}
First note that Lemma \ref{L:LQU1} still holds in this case.
To see this, note that if the requirements of the lemma are satisfied (i.e., if  we are at iteration $n=1,2,\ldots,$ and in the previous iteration, we had $L_{n-1}(s,a) \le U_{n-1}(s,a)$ for all $(s,a)$ and $Q'_{n-1} \in \mathcal Q$), then $Q_n$ is bounded using the same argument as before. Since the rewards $r(s,a)$ are uniformly bounded and $\tau$ is an almost surely finite stopping time, then $\tilde Q^L_{n,0}$ and $\tilde Q^U_{n,0}$ are finite.
Moreover, since  $\tilde Q^L_{n,0}$ and $\tilde Q^U_{n,0}$ are computed using the same sample path $\mathbf{w}$,
it follows that 
\[
\tilde Q^U_{n,0}(s,a) - \tilde Q^L_{n,0}(s,a) \ge 0, \quad \text{ for all } (s,a) \in \mathcal S \times \mathcal A.
\]
This can be easily seen if we subtract \eqref{E:EmpQL_biased} from \eqref{E:EmpQG_biased}. Notice that the reward and the penalty will both cancel out and we have $\tilde Q^U_{n,t} - \tilde Q^L_{n,t} \ge 0$ for all $t=0,1,\ldots,\tau-1$.
With $L_{n-1}(s,a) \le U_{n-1}(s,a)$ and $\tilde Q^U_{n,0}(s,a) \ge \tilde Q^L_{n,0} (s,a)$ it follows that $L_{n} (s,a)\le U_{n}(s,a)$ for all $(s,a)$.

Now, we prove Proposition \ref{L:LQU} again for the experience replay buffer case.

For part \ref{L:BQ'}: our original proof still holds since Lemma \ref{L:LQU1} still holds.

For part \ref{L:LQU2}:
first note that since the experience replay buffer is updated with a new observation of the noise at every iteration, then by Borel's law of large numbers, we have our probability estimate $\hat p(w)$ for the noise converges to the true noise distribution $p(w)$ as $n\rightarrow \infty$, i.e.,
\begin{equation}
    \lim_{n\rightarrow \infty} \hat p_n(w) = p(w) \text{ for all } w \in \mathcal{W}. 
\end{equation}

\noindent Fix an $(s,a) \in \mathcal S \times \mathcal A$.
By part \ref{L:BQ'} we have the action-value iterates $Q_n$ and $Q'_n$ are bounded for\mbox{ all $n$.} 
Now we write the iterate $\tilde Q_{n,0}^L(s,a)$ in terms of a noise term and a bias term as follows,
$$ \tilde Q_{n,0}^L(s,a) =  \underbrace{\tilde Q_{n,0}^L(s,a)  - \E[\tilde Q_{n,0}^L(s,a)]}_{\rm noise} + \underbrace{\E [\tilde Q_{n,0}^L(s,a)] -   \E[ Q_{n,0}^L(s,a)]}_{\rm bias} + \E[ Q_{n,0}^L(s,a)].$$
Now, we define the noise term using $$\xi^L_n(s,a)=\tilde Q_{n,0}^L(s,a)  - \E[\tilde Q_{n,0}^L(s,a)].$$
Also, similar to the original proof we define an accumulated noise process started at iteration $\nu$ by $W^L_{\nu,\nu}(s,a) = 0$, and
\begin{align*}
W^L_{n+1,\nu}(s,a) = \left(1- \alpha_n (s,a)\right) W^L_{n,\nu}(s,a)+ \alpha_n(s,a) \, \xi^L_{n+1}(s,a) \quad \forall \, \, n \ge \nu,
\end{align*}
which averages noise terms together across iterations. 
We have $\E [\tilde Q_{n,0}^L(s,a)  - \E[\tilde Q_{n,0}^L(s,a)]| \mathcal F_n]=0$, so Corollary 4.1 applies and it follows that
\begin{align*}
\lim_{n \rightarrow \infty} W^L_{n,\nu} (s,a) = 0 \quad \forall \, \nu \ge 0.
\end{align*}
Let $\tilde{\nu}$ be large enough so that $\alpha_n(s,a) \le 1$ for all $n \ge \tilde{\nu}$.
We denote the bias term by
$$
\chi_n (s,a) = \E [\tilde Q_{n,0}^L(s,a)] -   \E[ Q_{n,0}^L(s,a)].
$$
Since as $n \rightarrow \infty$, we have $\hat p_n(w) \rightarrow p(w)$, the bias due to sampling from the experience buffer $\chi_n (s,a) \rightarrow 0$.
Let $\eta >0$ and  $\bar \nu \ge \tilde \nu$ be such that $|\chi(s,a)|\le \frac{\eta}{2}$ for all $n \ge \bar \nu$ and all $(s,a)$.
We also define
\begin{align*}
Y^L_{\tilde{\nu}}(s,a) &= \rho, \nonumber \\
Y^L_{n+1}(s,a) &= (1 - \alpha_{n}(s,a)) \, Y^L_{n}(s,a) + \alpha_{n}(s,a) \, Q^*(s,a) + \alpha_{n}(s,a) \, \frac{\eta}{2}, \quad \forall \, n \ge \bar{\nu}.
\label{E:YL}
\end{align*}

It is easy to see that the sequence $Y^L_n(s,a) \rightarrow Q^*(s,a) + \frac{\eta}{2}$.
Now we show that the following claim holds. Claim: for all iterations $n \ge \bar{\nu}$, it holds that
$$ L_n(s,a) \le \min \{ \rho,\, Y^L_n(s,a) + W^L_{n, \bar{\nu}}(s,a)\}.$$ %, \quad \forall n \ge \tilde{\nu}. $$
To prove this claim, we proceed by induction on $n$. For $n = \bar{\nu}$, we have 
\[
Y^L_{\bar{\nu}}(s,a) = \rho \quad \text{and} \quad W^L_{\bar{\nu},\bar{\nu}}(s,a) = 0,
\] 
so the statement is true for the base case. We now show that it is true for $n+1$ given that it holds at $n$:
\begin{equation*}
    \begin{split}
        L_{n+1}(s,a) &= \min \{\rho, \,  (1 - \alpha_n(s,a)) \, L_n(s,a) + \alpha_n(s,a) \, (\tilde Q_{n,0}^L(s,a)  -  \E[\tilde Q_{n,0}^L(s,a)] \\ & \qquad \quad + \E [\tilde Q_{n,0}^L(s,a)] -   \E[ Q_{n,0}^L(s,a)] + \E[ Q_{n,0}^L(s,a)] ) \}  \\ &
         = \min \{\rho, \, (1 - \alpha_n(s,a)) \, L_n(s,a) + \alpha_n(s,a) \, \xi^L_n(s,a) + \alpha_n(s,a) \,\chi_n(s,a) \\& 
        \qquad  \qquad  + \alpha_n(s,a) \, \E[ Q_{n,0}^L(s,a)]  \} \\ &
         \le \min \{\rho, \, (1- \alpha_n(s,a)) \, (Y^L_n(s,a) + W^L_{n, \nu_k}(s,a)) + \alpha_n(s,a) \, \xi^L_n(s,a)  \\ & \qquad \quad  + \alpha_n(s,a) \, \frac{\eta}{2} + \alpha_n(s,a) \, Q^*(s,a) \}\\ &
         \le \min \{\rho, \,  Y^L_{n+1}(s,a) + W^L_{n+1, \tilde{\nu}}(s,a) \},
    \end{split}
\end{equation*}
where the first inequality follows by the induction hypothesis and $\E[ Q_{n,0}^L(s,a)] \le Q^*(s,a)$. Next, since $Y^L_n(s,a) \rightarrow Q^*(s,a) + \frac{\eta}{2} $, $ W^L_{n, \nu_k}(s,a)\rightarrow 0$, then if  $Q^*(s,a) + \frac{\eta}{2} \le \rho$, we have 
\[
\limsup_{n \rightarrow \infty} \,   L_{n}(s,a)  \le  Q^*(s,a) + \frac{\eta}{2}.
\]
Otherwise, if $\rho < Q^*(s,a) + \frac{\eta}{2}$, then 
\[
\limsup_{n \rightarrow \infty} \,   L_{n}(s,a)  \le \rho < Q^*(s,a) + \frac{\eta}{2}.
\]
Therefore, since our choice of $(s,a)$ was arbitrary, it follows that for every $\eta >0$, there exists some time $n'$ such that $L_n(s,a) \le Q^*(s,a) + \eta$ for all $(s,a) \in \mathcal{S} \times \mathcal{A}$ and $n \ge n'$. 

Using Proposition \ref{P:ATDR}\ref{P:WD}, $Q^*(s,a) \le \E[ Q_{n,0}^U(s,a)]$, a similar argument as the above can be used to establish that 
\[
Q^*(s,a) - \frac{\eta}{2} \le \liminf_{n \rightarrow \infty} \,   U_{n}(s,a).
\]
Hence, there exists some time $n''$ such that $Q^*(s,a) - \eta \le U_n(s,a)$ for all $(s,a)$ and $n \ge n''$. Take $n_0$ to be some time greater than $n'$ and $n''$ and the result follows. 

The proof of Lemma \ref{L:LU} when using an experience buffer is similar to that given in appendix \ref{Appendix:LU} so it is omitted.
\end{proof}

\subsection{Proof of Theorem \ref{T:ConvERLBQL}}
\begin{proof}
The proof of both parts \ref{T:ER1} and \ref{T:ER2} are similar to that of Theorem \ref{T:Conv}, so we omit them.
\end{proof}

\clearpage

\section{LBQL with Experience Replay Algorithm}
\label{Appendix:LBQLwithExpRep}
%%%%%%%%%%%%%%%%%%%%%%%%%%%%%%%%%%%%%%%%%%%%%%%%%%%%%%%%%%%%%%%%%%%%%%%%%%%%%%%%%%%%%%%%%%%%
\begin{algorithm}[htb]
    \caption{LBQL with Experience Replay}
    \label{A:ER-LBQL}
\begin{algorithmic}
  \STATE {\bfseries Input:} Initial estimates $L_0 \le Q_0 \le U_0$, batch size $K$, stepsize rules $\alpha_n(s,a)$, $\beta_n(s,a),$ and noise buffer $\mathcal B$.
  \STATE {\bfseries Output:}Approximations $\{L_n\}$, $\{Q'_n\}$, and $\{U_n\}$.
  \STATE  Set $Q_0' = Q_0$ and choose an initial state $s_0$.
  \FOR{$n = 0, 1, 2, \ldots$}
  \STATE Choose an action $a_n$ via some behavior policy (e.g., $\epsilon$-greedy). Observe $w_{n+1}$.
  \STATE Store $w_{n+1}$ in $\mathcal B$ and update $\hat p_n(w_{n+1})$.
  \STATE  Perform a standard $Q$-learning update:
  \vspace{-0.1em}
  \begin{equation*}
  \begin{aligned}
  Q_{n+1}(&s_n,a_n) = Q'_n(s_n,a_n) + \alpha_n(s_n, a_n) \Bigl[ r_n(s_n,a_n) + \gamma \max_a Q'_n(s_{n+1},a) - Q'_n(s_{n},a_n) \Bigr ].
  \end{aligned}
\end{equation*}
  \vspace{-1.2em}
  \STATE Sample randomly a sample path $\mathbf{w}=(w_1,w_2,\ldots, w_{\tau} )$ from $\mathcal B$, where $\tau \sim {\rm Geom}(1-\gamma)$.
  \STATE Set $\varphi = Q_{n+1}$. Using $\mathbf{w}$ and the current $\hat p_n$
  compute $\tilde Q^{U}_{0}(s_n,a_n)$ and $\tilde Q^L_{0}(s_n,a_n)$, using  \eqref{E:EmpQG_biased} and \eqref{E:EmpQL_biased}, respectively.\; \label{Qu_and_Ql_update}
  \STATE Update and enforce upper and lower bounds:
  \begin{align*}
  \begin{split}
  U_{n+1}&(s_n,a_n) = \Pi_{[-\rho, \, \infty]} \Bigl [ U_n(s_n,a_n)  + \beta_n(s_n,a_n) \, \bigl[ \tilde Q^{U}_{0}(s_n,a_n) - U_n(s_n,a_n) \bigr] \Bigr ],\nonumber
 \end{split}\\
 \begin{split}
   L_{n+1}&(s_n,a_n) = \Pi_{[\infty, \, \rho]} \Bigl [ L_n(s_n,a_n)  + \beta_n(s_n,a_n) \, \bigl[\tilde Q^L_{0}(s_n,a_n)- L_n(s_n,a_n) \bigr] \Bigr ],
  \nonumber
  \end{split}
  \end{align*}
  \vspace{-1em}
  \begin{equation*}
  \begin{split}
      Q'_{n+1}&(s_n,a_n) = \Pi_{[ L_{n+1}(s_n,a_n),\, U_{n+1}(s_n,a_n)]} \left [Q_{n+1}(s_n,a_n)\right]
  \end{split}
  \end{equation*}
\ENDFOR
\end{algorithmic}
\end{algorithm}
%%%%%%%%%%%%%%%%%%%%%%%%%%%%%%%%%%%%%%%%%%%%%%%%%%%%%%%%%%%%%%%%%%%%%%%%%%%%%%%%%%%%%%%%%%%%
\clearpage

\section{Implementation Details of LBQL with Experience Replay}
\label{Appendix:LBQLwERSV}

     We use a noise buffer $\mathcal B$ of size $\kappa$ to record the noise values $w$ that have been previously observed. The buffer $\mathcal B$ is used to generate the sample path $\mathbf w$ and the batch sample $\{w_1, \ldots, w_K \}$ used to estimate the expectation in the penalty function. 
     Here, it is not necessary that the noise space $\mathcal W$ is finite. 
     This is also convenient in problems with a large noise support such as the car-sharing problem with four stations where we have two sources of noise. Specifically, the noise due to the distribution of the rentals among the stations has a very large support.
     
     In order to reduce the computational requirements of LBQL, the lower and upper bounds updates are done every $m$ steps and only if the difference between the current values of the bounds is greater than some threshold $\delta$.

     Since we can easily obtain inner DP results for all $(s,a)$ each time the DP is solved, we perform the upper and lower bound updates for all $(s,a)$ whenever an update is performed (as opposed to just at the current state-action pair). However, only the action-value of the current $(s,a)$ is projected between the lower and upper bounds, so the algorithm is still asynchronous.
     The pseudo-code of LBQL with experience replay with these changes, is shown in Algorithm \ref{A:ER-LBQL-SV}.

\begin{algorithm}[ht]
  \caption{LBQL with Experience Replay (Full Details)}
  \label{A:ER-LBQL-SV}
\begin{algorithmic}
  \STATE {\bfseries Input:}{Initial estimates $L_0 \le Q_0 \le U_0$, batch size $K$, stepsize rules $\alpha_n(s,a)$, $\beta_n(s,a),$} noise buffer $\mathcal B$ of size $\kappa$, number of steps between bound updates $m$, and threshold $\delta$.\\
  \STATE {\bfseries Output:}{Approximations $\{L_n\}$, $\{Q'_n\}$, and $\{U_n\}$.}
  \STATE Set $Q_0' = Q_0$ and choose an initial state $s_0$.\;
  \FOR{$n = 0, 1, 2, \ldots$}
  \STATE Choose an action $a_n$ via some behavior policy (e.g., $\epsilon$-greedy). Observe $w_{n+1}$.\;
  \STATE Store $w_{n+1}$ in $\mathcal B$.\;
  \STATE  Perform a standard $Q$-learning update:
  \vspace{-0.5em}
  \begin{equation*}
  \begin{aligned}
  Q_{n+1}(&s_n,a_n) = Q'_n(s_n,a_n) \\
  &+ \alpha_n(s_n, a_n) \Bigl[ r_n(s_n,a_n)+ \gamma \max_a Q'_n(s_{n+1},a) - Q'_n(s_{n},a_n) \Bigr ].
  \end{aligned}
\end{equation*}\;
  \vspace{-1.0em}
  \IF{$n \ge \kappa $ {\rm and} $n \,\, \mathbf{mod} \,\, m = 0$ {\rm and} $|U_n(s_n,a_n)-L(s_n,a_n)|>\delta$ }
  \vspace{0.1em}
  \STATE Sample randomly a batch $\mathcal D=\{w_1,w_2,\ldots, w_{K} \}$ and a sample path $\mathbf{w}=\{w_1,w_2,\ldots, w_{\tau} \}$ from $\mathcal B$, where $\tau \sim {\rm Geom}(1-\gamma)$.\;
  \STATE Set $\varphi = Q_{n+1}$. Using $\mathbf{w}$ and $\mathcal D$, compute $\hat  Q^{U}_{0}(s,a)$ and $\hat Q^L_{0}(s,a)$ for all $(s,a) \in \mathcal S \times \mathcal A$, using  \eqref{E:EmpQG} and \eqref{E:EmpQL}, respectively.\; %\label{Qu_and_Ql_update}
  \STATE For all $(s,a) \in \mathcal S \times \mathcal A$, update upper and lower bounds:
  \vspace{-0.5em}
  \begin{align*}
 U_{n+1}(s_n,a_n) &= \Pi_{[-\rho, \, \infty]} \left [ U_n(s_n,a_n) + \beta_n(s_n,a_n) \, \bigl[ \hat Q^{U}_{0}(s_n,a_n) - U_n(s_n,a_n) \bigr] \right ],\nonumber\\
  L_{n+1}(s_n,a_n) &= \Pi_{[\infty, \, \rho]} \left [ L_n(s_n,a_n) + \beta_n(s_n,a_n) \, \bigl[\hat Q^L_{0}(s_n,a_n)- L_n(s_n,a_n) \bigr] \right ],
  \nonumber
  \end{align*}
  \ENDIF
  \STATE Enforce upper and lower bounds:
  \vspace{1em}
  \begin{equation*}
      Q'_{n+1}(s_n,a_n) = \Pi_{[ L_{n+1}(s_n,a_n),\, U_{n+1}(s_n,a_n)]} \left [Q_{n+1}(s_n,a_n)\right]
  \end{equation*}\nonumber
  
  \ENDFOR
  \end{algorithmic}
\end{algorithm}
%%%%%%%%%%%%%%%%%%%%%%%%%%%%%%%%%%%%%%%%%%%%%%%%%%%%%%%%%%%%%%%%%%%%%%%%%%%%%%%%%%%%%%%%%%%
\clearpage
\section{Numerical Experiments Details}
\label{Appendix:Experiments}
Let $\nu(s,a)$ and $\nu(s)$ be the number of times state-action pair $(s,a)$ and state $s$, have been visited, respectively.
For all algorithms, a polynomial learning rate $\alpha_n(s,a)= 1/\nu_n(s,a)^{r}$ is used, where $r=0.5$.
Polynomial learning rates have been shown to have a better performance than linear learning rates \citep{hasselt2010double}.

We use a discount factor of $\gamma=0.95$ for the pricing car-sharing/stormy gridworld problems, $\gamma=0.9$ for the windy gridworld problem and $\gamma=0.99$ for the repositioning problem.
Moreover, we use an $\epsilon$-greedy exploration strategy such that $\epsilon(s)=1/\nu(s)^{e}$, where $e$ is $0.4$ for the four-stations pricing car-sharing problem and $0.5$ for all the other problems. 
% }
For the car-sharing/windy gridworld problems, the initial state-action values are chosen randomly such that $L_0(s,a) \le Q_0(s,a) \le U_0(s,a)$ where 
\[
L_0(s,a)= -R_{\max}/(1 - \gamma) \quad \text{and} \quad U_0(s,a)=R_{\max}/(1-\gamma)
\]
for all $(s,a)$. For the stormy gridworld problem, we set the initial state-action values to zero (we find that a random initialization caused all algorithms except LBQL to perform extremely poorly).

We report LBQL parameters used in our numerical experiments in Table \ref{Tab:LBQLparams}. Note that for a fair comparison, the parameter $K$ of bias-corrected Q-learning algorithm is taken equal to $K$ of LBQL in all experiments. In addition, the $\kappa$ steps used to create the buffer for LBQL are included in the total number of steps taken.
Results of the gridworld and car-sharing problems are averaged over 50 and 10 runs, respectively.
All experiments were run on a 
3.5 GHz Intel Xeon processor with 32 GB of RAM workstation.
\begin{table}[H]
  \centering
  \caption{LBQL parameters.}
    \begin{tabular}{lrrrrr}
    \toprule
      & \multicolumn{5}{c}{Parameter} \\
\cmidrule{2-6}    \multicolumn{1}{l}{Problem} & \multicolumn{1}{c}{$\beta$} & \multicolumn{1}{c}{$\kappa$} & \multicolumn{1}{c}{$K$} & \multicolumn{1}{c}{$m$} & \multicolumn{1}{c}{$\delta$} \\
    \midrule
    2-CS-R & 0.01 & 40 & 20 &  10 & 0.01 \\
    2-CS & 0.01 & 40 & 20 &  15 & 0.01 \\
    4-CS & 0.01 & 1000 & 20 &  200 & 0.01 \\
    WG & 0.2 & 100 & 10 & 10 & 0.01 \\
    SG & 0.2 & 500 & 20 & 20 & 0.05 \\
    \bottomrule
    \end{tabular}%
  \label{Tab:LBQLparams}%
\end{table}%
A detailed description of the environments is given in the next two sections.
\subsection{Gridworld Examples}
First we consider, windy gridworld, a well-known variant of the standard gridworld problem discussed in \cite{sutton2018reinforcement}. Then we introduce, stormy gridworld, a new environment that is more complicated than windy gridworld. The environments are summarized below.

    \textbf{Windy Gridworld.} 
    The environment is a $10\times 7$ gridworld, with a cross wind pointing \emph{upward}, \citep{sutton2018reinforcement}. The default \emph{wind} values corresponding to each  of the 10 columns are $\{0,0,0,1,1,1,2,2,1,0\}$. Allowable actions are \{up, right, down, left\}.
    If the agent happens to be in a column whose wind value is different from zero, the resulting next states are shifted upward by the ``wind'' whose intensity is stochastic, varying from the given values in each column by $\{ -1,0,1\}$ with equal probability. 
    Actions that corresponds to directions that takes the agent off the grid leave the location of the agent unchanged. The start and goal states are $(3,1)$ and $(3, 8)$, respectively. The reward is $-1$ until the goal state is reached, where the reward is 0 thereafter.

  \textbf{Stormy Gridworld.} 
    Consider the stochastic windy gridworld environment. 
    Now, however, we allow the wind to blow half the time as before and the other half it can blow from any of the three other directions. The horizontal wind values corresponding to each row from top to bottom are given by $\{0,0,1,1,1,1,0\}$.
    Also, it can randomly rain with equal probability in any of the central states that are more than two states away from the edges of the grid. The start and goal states are $(3,1)$ and $(3, 10)$ respectively. 
    Rain creates a puddle which affects the state itself and all of its neighboring states. 
    The reward is as before except when the agent enters a puddle state the reward is $-10$.

\subsection{Car-sharing Benchmark Examples}
In this section, we give the detailed formulations of the two variants of the car-sharing benchmark, repositioning and pricing. The essential difference is that in the pricing version, the decision maker ``repositions'' by setting prices to induce directional demand (but does not have full control since this demand is random).

\subsubsection{Repositioning Benchmark for Car-sharing}

We consider the problem of repositioning cars for a two stations car-sharing platform, \citep{he2019robust}. 
The action is the number of cars to be repositioned from one station to the other,  before random demand is realized. Since repositioning in both directions is never optimal, we use $r>0$ to denote the repositioned vehicles from station 1 to 2 and $r<0$ to denote repositioning from station 2 to 1.
The stochastic demands at time $t$ are $D_{1,t}$ and $D_{2,t}$ for stations 1 and 2 respectively, are i.i.d., discrete uniform, each supported on $ \{3,\ldots,9\}$. The rental prices are $p_1=3.5$ and $p_2=4$ for stations 1 and 2, respectively.
All rentals are one-way (i.e., rentals from station 1 end up at station 2, and vice-versa).
The goal is to maximize profit, where unmet demands are charged a lost sales cost $\rho_1=\rho_2=2$ and repositioning cost $c_1=1$ for cars reposition from station 1 to 2 and $c_2=1.5$ for cars repositioned from station 2 to 1. We assume a total of $\bar s=12$ cars in the system and formulate the problem as an MDP, with state $s_t \in \mathcal S=\{0,1,\ldots,12\}$ representing the number of cars at station 1 at beginning of period $t$. We denote by $V^*(s_t)$ the optimal value function starting from state $s_t$. The Bellman recursion is:
\begin{equation}
\begin{split}
V^*( s_{t}) &=  \max_{s_t-\bar s \le r_t \le s_t} \mathbf{E}  \bigg [ \sum_{i \in \{1,2\}} p_{i} \, \omega_{it}(D_{i,t+1})
- \sum_{i \in \{1,2\}}\rho_{i} \Big (D_{i,t+1}  - \omega_{it}(D_{i,t+1}) \Big ) \\ & \hspace{12em}- c_1 \max(r_t,0) + c_2 \min(r_t,0)
 + \gamma V^*({ s_{t+1})} \bigg],\\ &
 \omega_{1t}(D_{1,t+1}) = \min ( D_{1,t+1}, s_{t} - r_t  ),  \\ &
 \omega_{2t}(D_{2,t+1}) = \min ( D_{2,t+1}, \bar s-s_{t} + r_t   ),  \\ &
s_{t+1}=s_{t} - r_t + \omega_{2t}(D_{2,t+1}) - \omega_{1t}(D_{1,t+1}),
\end{split}
\label{MDP0}
\end{equation}
where $\gamma \in (0,1)$ is a discount factor. The repositioning problem for two stations is illustrated in Figure \ref{F:2-CS-R-Diagram}. The nodes represent stations, solid arcs represent fulfilled demands, and dashed arcs represent repositioned vehicles.

% \begin{figure}[htb]
% 	\centering
% 	\resizebox{0.25\textwidth}{!}{
% 	\begin{tikzpicture}[ ->,>= stealth, shorten >=1pt,node distance=3cm,on grid,auto]
% 	\node[state] (1) {1};
% 	\node[state] (2) [right=of 1] {2};
% 	\path[->]
% 	(1)
% 	edge [color=gray,->,swap,dashed,bend right=45] node [color=black] {$r>0$} (2)
% 	(2)
%     edge [color=gray,->,dashed,swap,bend right=45] node  [color=black] {$r<0$} (1)
% 	(1.20)
% 	edge [shorten >=0pt,bend left=10] node  {$\omega_1$} (2.160)
% 	(2.200)
% 	edge [shorten >=0pt,bend left=10]  node {$\omega_2$} (1.340)
% 	;
% 	\end{tikzpicture}
% 	}
% 	\caption{Repositioning problem with two stations.}
% 	\label{F:2-CS-R-Diagram}
% \end{figure}

% Finally, consider again the two stations car-sharing problem. However, here instead of pricing we re-balance the cars directly by repositioning cars from one station to the other. The full MDP formulation is given in \citet{he2019robust}. The total number of cars, demand functions, noise, and lost sales costs are the same as before however we assume that the price at each station is fixed with $p_1=3.5$ and $p_2=4$. The repositioning decisions denoted by $r_t$ with $r_t>0$ when cars are repositioned from station 2 to 1 and $r_t<0$ otherwise, since it is never optimal to reposition cars in both directions. The repositioning costs are $1$ for cars repositioned from station 1 to 2 and $1.5$ for cars repositioned from station 2 to 1.
\subsubsection{Pricing Benchmark for Car-sharing}

% We consider the problem of spatial pricing on a car-sharing platform, motivated partially by \cite{bimpikis2019spatial}. 
Suppose that a vehicle sharing manager is responsible for setting the rental price for the vehicles at the beginning of each period in an infinite planning horizon.
We model a car sharing system with $N$ stations. The goal is to optimize the prices to set for renting a car at each of the $N$ stations; let the price at station $i$ and time $t$ be $p_{it} $ for $i \in [N] := \{1,2,\ldots, N\}$. 
Demands are nonnegative, independent and depends on the vehicle renting price according to a stochastic demand function
\begin{equation*}
D_{it}(p_{it},\epsilon_{i,t+1}):= \kappa_{i}(p_{it}) + \epsilon_{i,t+1},
\end{equation*}
where $D_{it}(p_{it},\epsilon_{i,t+1})$ is the demand in period $t$,
$\epsilon_{i,t+1}$ are random perturbations that are revealed at time $t+1$ and $\kappa_{i}(p_{it})$ is a deterministic demand function of the price $p_{it}$ that is set at the beginning of period $t$ at station $i \in [N]$. The random variables $\epsilon_{i,t+1}$ are independent with $\mathbf{E}[\epsilon_{i,t+1}]=0$ without loss of generality. 
Furthermore, we assume that the expected demand $\mathbf{E}[D_{it}(p_{it},\epsilon_{i,t+1})]=\kappa_{i}(p_{it})<\infty$ is strictly decreasing in the rental price $p_{it}$ which is restricted to a set of feasible price levels $ [\underline p_{i}, \overline p_{i}]$ for all $i \in [N]$, where $\underline  p_{i}, \overline p_{i}$ are the minimum and the maximum prices that can be set at station $i$, respectively. This assumption implies a one-to-one correspondence between the rental price $p_{it}$ and the expected demand $d_{it} \in \mathfrak{D}:=[\underline d_{i},\overline d_{i}]$ for all $p_{it} \in [\underline p_{i}, \overline p_{i} ]$ where $\underline d_{i}=\kappa_{i}(\overline p_{i})$  and $\overline d_{i}=\kappa_{i}(\underline p_{i})$.

The problem can be formulated as an MDP with state $\mathbf{s}_t$, which is a vector whose components represent the number of available cars at each of the $N$ stations at beginning of period $t$. The state space is $\mathcal{S}^{N-1}$ with $\mathcal S=\{0,1,\ldots,\bar{s} \}$ and $\bar{s}$ is the maximum number of cars in the vehicle sharing system.
We assume that a customer at station $i$ goes to station $j$ with probability $\phi_{ij} $ for all $i,j \in [N]$.
Let $Y_{ik,t+1}$ be a random variable taking values in $[N]$ that represents the random destination of customer $k$ at station $i$, which is only observed at the beginning of period $t+1$. We have $Y_{ik,t+1}= j$ with probability  $\phi_{ij}$, so $Y_{ik,t+1}$ are i.i.d. for each customer $k$.
Denote by $l_{ij}$ the distance from station $i$ to $j$, for all $i,j \in [N]$.
We penalize unmet demands by a lost sales unit cost $\rho_{i}$, $i \in [N]$.
The decision vector is $\mathbf{p}_t = \{p_{it}  \, \in [\underline  p_{i}, \overline p_{i}],   \forall i \in [N] \bigr \}$.
Let $V^*(\mathbf{s}_t)$ be the revenue-to-go function with number of available vehicles $\mathbf{s}_t$. Thus,  we have the Bellman recursion
\begin{equation}
\begin{split}
V^*(\mathbf{s}_t) &=  \max_{\mathbf{p}_t} \; \mathbf{E}  \bigg [ \sum_{i \in [N]} p_{it} \, \sum_{j \in [N]} l_{ij} \, \omega_{ijt}(\epsilon_{i,t+1})
- \sum_{i \in [N]}\rho_{i} \Big (\kappa_{i}(p_{it}) + \epsilon_{i,t+1} - \omega_{it}(\epsilon_{i,t+1}) \Big )
 + \gamma V^*({ \mathbf{s}_{t+1})} \bigg]\\ &
 \omega_{it}(\epsilon_{i,t+1}) = {\rm min} \, ( \kappa_{i}(p_{it}) + \epsilon_{i,t+1}, s_{it}   )  \quad \forall \, i \in [N],  \\ &
 \omega_{ijt}(\epsilon_{i,t+1})=\sum_{k=1}^{\omega_{it}(\epsilon_{i,t+1})} \mathbbm{1}_{\{Y_{ik,t+1}=j\}} \quad \forall \, i,j \in [N], \\ &
s_{i,t+1}=s_{it} + \sum_{j \in [N]} \omega_{jit}(\epsilon_{i,t+1}) - \omega_{it}(\epsilon_{i,t+1}), \quad \forall \, i \in [N],
\end{split}
\label{MDP1}
\end{equation}
where $\gamma \in (0,1)$ is a discount factor.
Note that the MDP in \eqref{MDP1} can be reformulated using the action-value function $Q( \mathbf{s}_{t},{p_t})$ instead of $V(\mathbf{s}_{t})$.
The quantity $\omega_{it}(\epsilon_{i,t+1})$ represents the total fulfilled customer trips from station $i$ at time $t$ for a given realization of the noise $\epsilon_{i,t+1}$.
Notice that in \eqref{MDP1} there are two sources of randomness: the noise due to stochastic demand represented by $\epsilon_i,$ for all $i \in [N]$ and the noise due to the random distribution of fulfilled rentals between the stations, i.e., due to the random variables $Y_{i1},\ldots, Y_{i\omega_{it}(\epsilon_{i,t+1})}$ for all $i \in [N]$.
Due to the high dimensionality involved in the state, action, and noise spaces, solving \eqref{MDP1} is computationally challenging. 

\paragraph{Spatial Pricing in Two-Location Car-sharing.}
We first consider the pricing problem on two stations and $12$ cars in total.  
The state space is $\mathcal S= \{0,1,\ldots,12\}$ representing the number of cars at station $1$.
All rentals are one-way.
The prices, at each period $t$, are restricted to $p_{1t} \in [1, 6]$ and $p_{2t} \in [1, 7]$.
The stochastic demand functions at period $t$ are given by: $ D_{1t}(p_{1t},\epsilon_{1,t+1}):= 9 - p_{1t} + \epsilon_{1,t+1}$ and $ D_{2t}(p_{2t},\epsilon_{2,t+1}):= 10 - p_{2t} + \epsilon_{2,t+1}$  for stations 1 and 2 respectively. The random variables $\epsilon_{1,t+1}$ and $\epsilon_{2,t+1}$ are independent, discrete uniform, each supported on $ \{-3,-2,\ldots,3\}$.
We use the discretized expected demands, as our actions: $d_{1t} \in \{3,\ldots,8\}$ and $d_{2t} \in \{3,\ldots,9\}$. The lost sales cost is $2$ at both stations.

\paragraph{Spatial Pricing in Four-Location Car-sharing.}
Consider the car-sharing problem for four stations with $\bar s=20$ cars and $d_{it} \in \{3,4\}$ for each station. In total there are $1771$ states and $16$ actions.  
The random variables $\epsilon_{i,t+1}$ are independent, discrete uniform, each supported on $ \{-3,-2,\ldots,3\}$. We consider both one way and return trips at each station. Figure \ref{F:CSP4S} shows an illustration of the stations (nodes) and the rentals between the stations (arcs). The probabilities $\phi_{ij}=0.25$ for all $i,j \in \{1,2,3,4 \}$ and the lost sales costs $(\rho_i)$ are $1.7, 1.2, 1.5, 2$ at stations $1,2,3,4$, respectively. The distance between the stations are taken such that $l_{ij}=1$ if $i=j$, and the other distances being symmetrical, meaning  $l_{ij}=l_{ji}$ with $l_{12}=1.8$,  $l_{13}=1.5$, $l_{14}=1.4$,  $l_{23}=1.6$,  $l_{24}=1.1$, and  $l_{34}=1.2$.
% \begin{figure}[htp]
% 	\centering
% 	\begin{tikzpicture}[ ->,>= latex, shorten >=1pt,node distance=1.8cm,on grid,auto]
% 	\node[state] (3) at (0,0) {3};
% 	\node[state] (4) at (3.6,0) {4};
% 	\node[state] (1) at (1.8, 3.1177) {1};
% 	\node[state] (2) at (1.8,1.0392) {2};
% 	\path[<->]
% 	(1) 
% 	edge [loop above] ()
% 	(2) 
% 	edge [loop below] ()
% 	(1) 
% 	edge node {} (2)
% 	(2)
% 	edge node {} (3)	
% 	(3)
% 	edge [bend right=45] node {} (4)
% 	(2)
% 	edge node {} (4)
% 	(1)
% 	edge  [bend right=45] node {} (3)
% 	(1)
% 	edge  [bend left=45] node {} (4);
% 	\draw (4) to [out=330,in=300,looseness=8] (4);
% 	\draw (3) to [out=230,in=200,looseness=8] (3);
% 	\end{tikzpicture}
% 	\caption{Car-sharing problem with four stations.}
% 	\label{F:CSP4S}
% \end{figure}

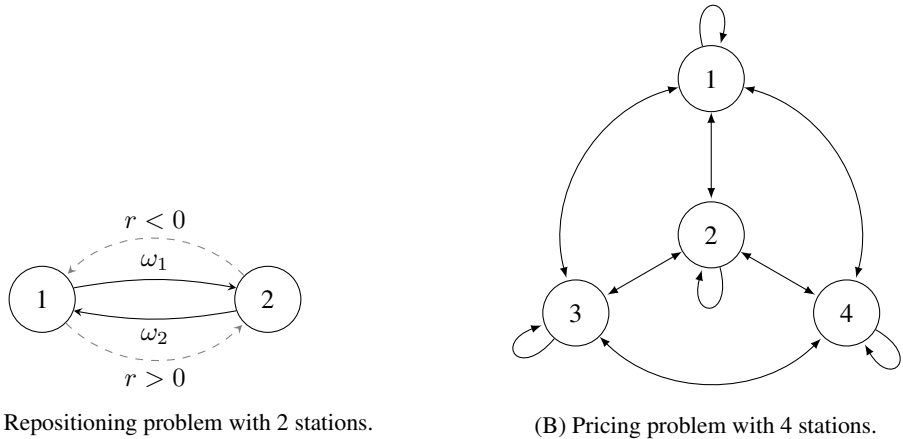
\begin{figure*}[htpb] 
\centering
	\begin{subfigure}{0.4\textwidth}
	\vspace{2.75cm}
	\hspace{1.2cm}
% 	\resizebox{0.25\textwidth}{!}{
	\begin{tikzpicture}[ ->,>= stealth, shorten >=1pt,node distance=3cm,on grid,auto]
	\node[state] (1) {1};
	\node[state] (2) [right=of 1] {2};
	\path[->]
	(1)
	edge [color=gray,->,swap,dashed,bend right=45] node [color=black] {$r>0$} (2)
	(2)
    edge [color=gray,->,dashed,swap,bend right=45] node  [color=black] {$r<0$} (1)
	(1.20)
	edge [shorten >=0pt,bend left=10] node  {$\omega_1$} (2.160)
	(2.200)
	edge [shorten >=0pt,bend left=10]  node {$\omega_2$} (1.340)
	;
	\end{tikzpicture}
% 	}
	\caption{Repositioning problem with 2 stations.}
	\label{F:2-CS-R-Diagram}
	\end{subfigure}
	\hspace{0.1cm}
	\begin{subfigure}{0.4\textwidth}
	\hspace{0.5cm}
	\begin{tikzpicture}[ ->,>= latex, shorten >=1pt,node distance=1.8cm,on grid,auto]
	\node[state] (3) at (0,0) {3};
	\node[state] (4) at (3.6,0) {4};
	\node[state] (1) at (1.8, 3.1177) {1};
	\node[state] (2) at (1.8,1.0392) {2};
	\path[<->]
	(1) 
	edge [loop above] ()
	(2) 
	edge [loop below] ()
	(1) 
	edge node {} (2)
	(2)
	edge node {} (3)	
	(3)
	edge [bend right=45] node {} (4)
	(2)
	edge node {} (4)
	(1)
	edge  [bend right=45] node {} (3)
	(1)
	edge  [bend left=45] node {} (4);
	\draw (4) to [out=330,in=300,looseness=8] (4);
	\draw (3) to [out=230,in=200,looseness=8] (3);
	\end{tikzpicture}
	\caption{Pricing problem with 4 stations.}
	\label{F:CSP4S}
	\end{subfigure}
\caption{Illustrations of the repositioning and pricing car-sharing problems.} \label{F:cs_figures}
\end{figure*}
% In the following section we report our sensitivity analysis.
\clearpage
\subsection{Sensitivity Analysis}
\label{Appendix:SA}
% {\color{red} 
We also perform sensitivity analysis on the five algorithms with respect to the learning rate and exploration parameters $r$ and $e$ for the car-sharing problem with two stations. Here, $r$ controls the polynomial learning rate defined by, $\alpha_n(s,a)= 1/\nu_n(s,a)^{r}$ and $e$ controls the $\epsilon$-greedy exploration strategy, where $\epsilon$ is annealed according to $\epsilon(s)=1/\nu(s)^{e}$. 
We use $\nu(s,a)$ and $\nu(s)$ to denote the number of times a state-action pair $(s,a)$ and state $s$, have been visited, respectively. We report our results in Table \ref{tab:longtablenew1}.
These results show the average number of iterations and CPU time until each algorithm first reach 50\%, 20\%, 5\%, 1\% relative error for each case of the parameters $e$ and $r$ while keeping all other parameters as before.
The ``-'' indicates that the corresponding \% relative error for the corresponding case was not achieved during the course of training.
The values in the table are obtained by averaging five independent runs for each case.
Except for the few cases where BCQL performs slightly better, LBQL once again drastically outperforms the other algorithms and exhibits \emph{robustness} against the learning rate and exploration parameters, an important practical property that the other algorithms seem to lack.

The effect of varying parameters $m$ and $K$ of LBQL is presented in Figure \ref{F:CS_sensitivity}. 
These plots are obtained by tuning parameters $m\in\{1, 10,50,150,200\}$ and $K \in \{1,5,10,100,1000 \}$ of LBQL algorithm in the car-sharing problem with two stations.
All other parameters are kept the same as before. Figures \ref{F:CS_Performance_M_sensitivity} and \ref{F:CS_Performance_K_sensitivity} show the mean total reward with a $95\%$ CI. Figures \ref{F:CS_RE_M_sensitivity} and \ref{F:CS_RE_K_sensitivity} show the mean and $95\%$ CI of the relative error given by: $\|V_n- V^*\|_2/\| V^* \|_2$. The results are obtained from 10 independent runs. Using larger values of $m$ reduces the strength of LBQL in both the performance and relative error metrics, as shown in Figures \ref{F:CS_Performance_M_sensitivity} and \ref{F:CS_RE_M_sensitivity}. This is expected since the effect of the bounds fades as we update the bounds less frequently. 
Interestingly, we can see from the performance plot that $m=10$ strikes a good balance between how often to do the bounds and Q-learning updates and achieves a performance that is slightly better and more stable than that of $m=1$ after about half of the training process (50,000 steps).
In terms of the sample size $K$, Figures \ref{F:CS_Performance_K_sensitivity} and \ref{F:CS_RE_K_sensitivity} clearly show that larger values of $K$ improve the performance of LBQL in terms of performance and relative error measures. This is not unexpected because a larger sample yields a better approximation of the penalty.

\begin{figure}[htbb] 
\centering
	\begin{subfigure}{0.4\textwidth}
		\includegraphics[width=\linewidth]{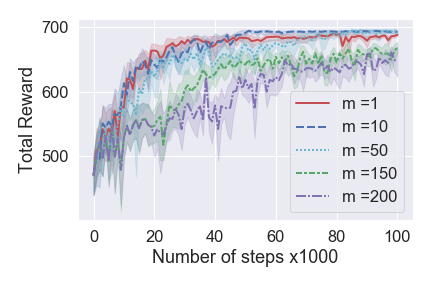}
		\vspace{-10pt}
		\caption{Performance (2-CS)} \label{F:CS_Performance_M_sensitivity}
	\end{subfigure}	
% 	\hspace{-0.1cm}
	\begin{subfigure}{0.4\textwidth}
		\includegraphics[width=\linewidth]{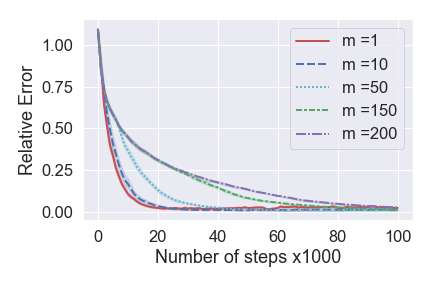}
		\vspace{-10pt}
		\caption{Relative Error (2-CS)} \label{F:CS_RE_M_sensitivity}
	\end{subfigure}
		\begin{subfigure}{0.4\textwidth}
		\includegraphics[width=\linewidth]{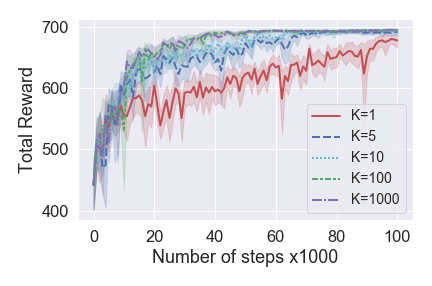}
		\caption{Performance (2-CS)} \label{F:CS_Performance_K_sensitivity}
	\end{subfigure}	
% 	\hspace{-0.1cm}
	\begin{subfigure}{0.4\textwidth}
		\includegraphics[width=\linewidth]{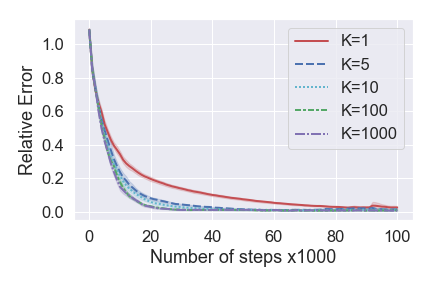}
		\vspace{-10pt}
		\caption{Relative Error (2-CS)} \label{F:CS_RE_K_sensitivity}
	\end{subfigure}
	\caption{Plots showing the effect of tuning the parameter $M$ and $K$ of LBQL algorithm.
}
	\label{F:CS_sensitivity}
\end{figure}

\footnotesize
\clearpage
\vspace{-1em}
\label{Appendix:Tables}
\begingroup
\renewcommand\arraystretch{1.2}
\begin{longtable}{crrrrrrrrrr}
\caption{Computational results for different exploration \& learning rate parameters. Bold numbers indicate the best performing algorithm.}\\
\toprule         
&       &       & \multicolumn{8}{c}{\% Relative error} \\
          & \multicolumn{1}{l}{$e$} & \multicolumn{1}{l}{$r$} & \multicolumn{2}{c}{50\%} & \multicolumn{2}{c}{20\%} & \multicolumn{2}{c}{5\%} & \multicolumn{2}{c}{1\%} \\
\cmidrule{4-11}          &       &       & \multicolumn{1}{c}{$n$} & \multicolumn{1}{c}{$t$ (s)} & \multicolumn{1}{c}{$n$} & \multicolumn{1}{c}{$t$ (s)} & \multicolumn{1}{c}{$n$} & \multicolumn{1}{c}{$t$ (s)} & \multicolumn{1}{c}{$n$} & \multicolumn{1}{c}{$t$ (s)} \\
\cmidrule{2-11}   
\endfirsthead
\caption[]{\textit{(continued)}}\\
\toprule         
&       &       & \multicolumn{8}{c}{\% Relative error} \\
          & \multicolumn{1}{l}{$e$} & \multicolumn{1}{l}{$r$} & \multicolumn{2}{c}{50\%} & \multicolumn{2}{c}{20\%} & \multicolumn{2}{c}{5\%} & \multicolumn{2}{c}{1\%} \\
\cmidrule{4-11}          &       &       & \multicolumn{1}{c}{$n$} & \multicolumn{1}{c}{$t$ (s)} & \multicolumn{1}{c}{$n$} & \multicolumn{1}{c}{$t$ (s)} & \multicolumn{1}{c}{$n$} & \multicolumn{1}{c}{$t$ (s)} & \multicolumn{1}{c}{$n$} & \multicolumn{1}{c}{$t$ (s)} \\
\cmidrule{2-11} 
\endhead
%\
\endfoot
%\bottomrule % <========================================================
\multicolumn{11}{r}{\emph{\small (continued on next page)}} \\
\endfoot

%\bottomrule % <========================================================
\endlastfoot
 \multirow{15}[2]{*}{{\rotatebox[origin=c]{90}{\texttt{LBQL}}}} & 0.4 & 0.5 &          3,672.6  & 1.9 & \textbf{9,323.6 } & \textbf{4.6} & \textbf{18,456.6} & \textbf{8.9} & \textbf{33,054.0} & \textbf{15.7} \\
  &   & 0.6 & \textbf{3,632.0} & 1.7 & \textbf{9,147.4} & \textbf{4.4} & \textbf{18,270.0} & \textbf{8.6} & \textbf{39,624.8} & \textbf{18.6} \\
  &   & 0.7 & \textbf{3,725.2} & \textbf{1.8} & \textbf{9,087.4} & \textbf{4.3} & \textbf{18,217.8} & \textbf{8.6} & \textbf{41,941.2} & \textbf{19.7} \\
  &   & 0.8 & \textbf{3,698.2} & \textbf{1.8} & \textbf{9,321.8} & \textbf{4.4} & \textbf{20,860.0} & \textbf{9.9} & \textbf{53,752.6} & \textbf{25.6} \\
  &   & 0.9 & \textbf{3,992.2} & \textbf{1.9} & \textbf{10,119.2} & \textbf{4.9} & \textbf{23,070.2} & \textbf{11.0} & \textbf{80,252.8} & \textbf{38.4} \\
  & 0.5 & 0.5 &          3,316.0  & 1.6 & \textbf{8,040.2} & \textbf{3.8} & \textbf{15,050.2} & \textbf{7.2} & \textbf{27,912.8} & \textbf{13.2} \\
  &   & 0.6 &          3,514.4  & 1.7 & \textbf{8,529.4} & \textbf{4.0} & \textbf{16,595.6} & \textbf{7.9} & \textbf{36,100.2} & \textbf{17.0} \\
  &   & 0.7 & \textbf{3,531.8} & \textbf{1.7} & \textbf{8,712.8} & \textbf{4.1} & \textbf{17,835.6} & \textbf{8.6} & \textbf{46,010.0} & \textbf{21.9} \\
  &   & 0.8 & \textbf{3,449.2} & \textbf{1.6} & \textbf{8,571.8} & \textbf{4.0} & \textbf{18,152.6} & \textbf{8.5} & \textbf{65,007.6} & \textbf{30.4} \\
  &   & 0.9 & \textbf{3,398.4} & \textbf{1.6} & \textbf{8,346.4} & \textbf{3.9} & \textbf{18,844.8} & \textbf{8.8} & \textbf{99,820.2} & \textbf{46.6} \\
  & 0.6 & 0.5 &          2,877.4  & 1.3 & \textbf{7,129.0} & \textbf{3.3} & \textbf{13,046.0} & \textbf{6.2} & \textbf{23,822.0} & \textbf{11.2} \\
  &   & 0.6 &          3,182.8  & 1.5 & \textbf{8,066.0} & \textbf{3.8} & \textbf{15,421.4} & \textbf{7.3} & \textbf{33,286.0} & \textbf{15.6} \\
  &   & 0.7 & \textbf{2,979.4} & 1.4 & \textbf{7,625.6} & \textbf{3.6} & \textbf{15,414.6} & \textbf{7.2} & \textbf{34,238.0} & \textbf{15.9} \\
  &   & 0.8 & \textbf{3,272.6} & \textbf{1.6} & \textbf{8,431.0} & \textbf{4.1} & \textbf{17,809.2} & \textbf{8.5} & \textbf{     114,032.8} & \textbf{54.2} \\
  &   & 0.9 & \textbf{3,185.6} & \textbf{1.5} & \textbf{8,480.0} & \textbf{4.1} & \textbf{19,242.4} & \textbf{9.2} & \textbf{     123,331.2} & \textbf{58.8} \\
 %%%%%%%%%%%%%%%%%%%%%%%%%%%%%%%%%%%%%%%%%%%%%%%%%%%%%%%%%%%%%%%%%%%%%%%%%%
\cmidrule{2-11}    \multirow{16}[2]{*}{{\rotatebox[origin=c]{90}{\texttt{BCQL}}}}  & 0.4 & 0.5 & \textbf{3,200.8} & \textbf{1.0} &        22,455.0  & 7.0 &        65,329.0  & 20.3 &      107,785.6  & 33.7 \\
  &   & 0.6 &          4,618.2  & \textbf{1.5} &        43,724.6  & 13.6 &      159,662.6  & 49.6 &      292,421.0  & 34.9 \\
  &   & 0.7 &          8,059.4  & 2.5 &      123,484.2  & 38.4 &  -  & - &  -  & - \\
  &   & 0.8 &        17,287.0  & 5.3 &  -  & - &  -  & - &  -  & - \\
  &   & 0.9 &        67,162.2  & 20.9 &  -  & - &  -  & - &  -  & - \\
  & 0.5 & 0.5 & \textbf{2,209.6} & \textbf{0.7} &        15,604.0  & 4.9 &        48,715.6  & 15.3 &        80,317.4  & 25.2 \\
  &   & 0.6 & \textbf{3,274.2} & \textbf{1.0} &        31,422.8  & 9.8 &      124,319.6  & 38.7 &      243,101.4  & 75.6 \\
  &   & 0.7 &          5,619.6  & 1.8 &        89,857.0  & 27.8 &  -  & - &  -  & - \\
  &   & 0.8 &        11,417.0  & 3.6 &  -  & - &  -  & - &  -  & - \\
  &   & 0.9 &        42,605.4  & 13.1 &  -  & - &  -  & - &  -  & - \\
  & 0.6 & 0.5 & \textbf{1,830.4} & \textbf{0.6} &        11,639.6  & 3.6 &        35,763.0  & 11.1 &        61,249.0  & 19.0 \\
  &   & 0.6 & \textbf{2,612.4} & \textbf{0.8} &        23,571.6  & 7.4 &        92,101.4  & 28.8 &      177,127.6  & 55.5 \\
  &   & 0.7 &          4,371.2  & \textbf{1.3} &        66,526.0  & 20.5 &  -  & - &  -  & - \\
  &   & 0.8 &          9,028.2  & 2.8 &      297,368.6  & 17.9 &  -  & - &  -  & - \\
  &   & 0.9 &        31,673.6  & 9.8 &  -  & - &  -  & - &  -  & - \\
 %%%%%%%%%%%%%%%%%%%%%%%%%%%%%%%%%%%%%%%%%%%%%%%%%%%%%%%%%%%%%%%%%%%%%%%%%
\cmidrule{2-11}    \multirow{15}[2]{*}{{\rotatebox[origin=c]{90}{\texttt{SQL}}}} & 0.4 & 0.5 &          7,750.6  & 1.9 &        37,889.8  & 9.1 &        93,820.0  & 22.5 &      141,171.0  & 33.8 \\
  &   & 0.6 &        11,329.4  & 2.8 &        75,364.0  & 18.3 &      233,422.0  & 56.4 &  -  & - \\
  &   & 0.7 &        20,131.4  & 4.8 &      212,767.0  & 51.0 &  -  & - &  -  & - \\
  &   & 0.8 &        46,986.8  & 11.3 &  -  & - &  -  & - &  -  & - \\
  &   & 0.9 &      182,890.0  & 43.8 &  -  & - &  -  & - &  -  & - \\
  & 0.5 & 0.5 &          6,122.2  & 1.5 &        30,944.8  & 7.4 &        79,167.4  & 19.0 &      120,527.8  & 29.0 \\
  &   & 0.6 &          9,166.6  & 2.2 &        62,540.6  & 14.9 &      201,822.4  & 48.2 &  -  & - \\
  &   & 0.7 &        15,835.6  & 3.8 &      174,233.6  & 42.0 &  -  & - &  -  & - \\
  &   & 0.8 &        36,548.8  & 8.7 &  -  & - &  -  & - &  -  & - \\
  &   & 0.9 &      157,029.0  & 37.7 &  -  & - &  -  & - &  -  & - \\
  & 0.6 & 0.5 &          4,984.0  & 1.2 &        24,989.0  & 6.0 &        64,605.4  & 15.4 &        98,554.6  & 23.6 \\
  &   & 0.6 &          7,396.2  & 1.8 &        50,282.4  & 12.0 &      165,574.2  & 39.8 &  -  & - \\
  &   & 0.7 &        13,018.8  & 3.1 &      143,142.6  & 34.1 &  -  & - &  -  & - \\
  &   & 0.8 &        29,201.0  & 6.9 &  -  & - &  -  & - &  -  & - \\
  &   & 0.9 &      122,335.6  & 29.2 &  -  & - &  -  & - &  -  & - \\[0.3cm]
%   \pagebreak
 %%%%%%%%%%%%%%%%%%%%%%%%%%%%%%%%%%%%%%%%%%%%%%%%%%%%%%%%%%%%%%%%%%%%%%%%%%
 \cmidrule{2-11}    \multirow{15}[2]{*}{{\rotatebox[origin=c]{90}{\texttt{QL}}}}  & 0.4 & 0.5 &          7,743.0  & 1.7 &        38,114.2  & 8.2 &        93,303.4  & 20.2 &      136,851.4  & 29.6 \\
  &   & 0.6 &        11,644.0  & 2.5 &        76,625.0  & 16.6 &      232,679.4  & 50.9 &  -  & - \\
  &   & 0.7 &        20,181.6  & 4.4 &      212,401.4  & 46.3 &  -  & - &  -  & - \\
  &   & 0.8 &        45,987.2  & 10.1 &  -  & - &  -  & - &  -  & - \\
  &   & 0.9 &      191,442.2  & 41.9 &  -  & - &  -  & - &  -  & - \\
  & 0.5 & 0.5 &          6,143.6  & 1.3 &        30,996.2  & 6.8 &        78,131.8  & 16.9 &      116,361.2  & 25.3 \\
  &   & 0.6 &          9,331.6  & 2.0 &        63,998.2  & 13.9 &      204,593.6  & 44.6 &  -  & - \\
  &   & 0.7 &        16,247.0  & 3.5 &      178,842.8  & 38.4 &  -  & - &  -  & - \\
  &   & 0.8 &        38,297.0  & 8.2 &  -  & - &  -  & - &  -  & - \\
  &   & 0.9 &      165,835.8  & 35.7 &  -  & - &  -  & - &  -  & - \\
  & 0.6 & 0.5 &          5,005.2  & 1.1 &        24,877.2  & 5.4 &        63,777.8  & 13.7 &        96,402.0  & 20.8 \\
  &   & 0.6 &          7,547.0  & 1.9 &        51,369.4  & 13.1 &      166,179.2  & 42.3 &      289,882.8  & 46.1 \\
  &   & 0.7 &        13,288.2  & 3.1 &      144,318.2  & 33.1 &  -  & - &  -  & - \\
  &   & 0.8 &        30,172.6  & 6.5 &  -  & - &  -  & - &  -  & - \\
  &   & 0.9 &      139,952.6  & 30.3 &  -  & - &  -  & - &  -  & - \\  
 %%%%%%%%%%%%%%%%%%%%%%%%%%%%%%%%%%%%%%%%%%%%%%%%%%%%%%%%%%%%%%%%%%%%%%%%%%
\cmidrule{2-11}  
\multirow{15}[2]{*}{{\rotatebox[origin=c]{90}{\texttt{Double-QL}}}} & 0.4   & 0.5   &      224,490.2  & 51.2  &  -  & -     &  -  & -     &  -  & - \\
          &       & 0.6   &  -  &         - &  -  & -     &  -  & -     &  -  & - \\
          &       & 0.7   &  -  &         - &  -  & -     &  -  & -     &  -  & - \\
          &       & 0.8   &  -  &         - &  -  & -     &  -  & -     &  -  & - \\
          &       & 0.9   &  -  &         - &  -  & -     &  -  & -     &  -  & - \\
          & 0.5   & 0.5   &  -  &         - &  -  & -     &  -  & -     &  -  & - \\
          &       & 0.6   &  -  &         - &  -  & -     &  -  & -     &  -  & - \\
          &       & 0.7   &  -  &         - &  -  & -     &  -  & -     &  -  & - \\
          &       & 0.8   &  -  &         - &  -  & -     &  -  & -     &  -  & - \\
          &       & 0.9   &  -  &         - &  -  & -     &  -  & -     &  -  & - \\
          & 0.6   & 0.5   &  -  &         - &  -  & -     &  -  & -     &  -  & - \\
          &       & 0.6   &  -  &         - &  -  & -     &  -  & -     &  -  & - \\
          &       & 0.7   &  -  &         - &  -  & -     &  -  & -     &  -  & - \\
          &       & 0.8   &  -  &         - &  -  & -     &  -  & -     &  -  & - \\
          &       & 0.9   &  -  &         - &  -  & -     &  -  & -     &  -  & - \\
            &   &   &   &   &   &   &   &   &   &  \\
%%%%%%%%%%%%%%%%%%%%%%%%%%%%%%%%%%%%%%%%%%%%%%%%%%%%%%%%%%%%%%%%%%%%%%%%%%%
\bottomrule   
\label{tab:longtablenew1}
\end{longtable}
\endgroup

\end{document}